\newcommand{\head}[2]{\multicolumn{1}{>{\centering\arraybackslash}p{#1}}{\textbf{#2}}}
\crefname{appsec}{Appendix}{Appendices}
\newcommand*\diff{\mathop{}\!\mathrm{d}}
\newcommand{\R}{{{\mathbb R}}}
\newcommand{\GEBM}{{{\text{GEBM}}}} 
\newcommand{\BB}{{{\mathbb G}}} 
\newcommand{\QQ}{{{\mathbb Q}}}  
\newcommand{\PP}{{{\mathbb P}}} 
\newcommand{\A}{{{A_{\BB,E}}}} 
\newcommand{\B}{{{G}}}
\newtheorem{theorem}{Theorem}
\newtheorem{lemma}[theorem]{Lemma}
\newtheorem{proposition}[theorem]{Proposition}
\newtheorem{definition}{Definition}
\newtheorem{assumption}{Assumption}
\crefname{appsec}{Appendix}{Appendices}
\newlist{assumplist}{enumerate}{1}
\setlist[assumplist]{label=(\textbf{\Alph*})}
\Crefname{assumplisti}{Assumption}{Assumptions}
\newlist{assumplist2}{enumerate}{6}
\setlist[assumplist2]{label=(\textbf{\Roman*})}
\Crefname{assumplist2i}{Assumption}{Assumptions}
\newlist{assumplist3}{enumerate}{6}
\setlist[assumplist3]{label=(\textbf{\roman*})}
\Crefname{assumplist3i}{Assumption}{Assumptions}
\newcommand{\qdist}{\BB}
\newcommand{\pdist}{\mathbb{P}}
\title{Generalized Energy Based Models}
\author{Michael Arbel\thanks{Correspondence: \texttt{michael.n.arbel@gmail.com}.} ,  Liang Zhou \& Arthur Gretton  \\
Gatsby Computational Neuroscience Unit, University College London }
\begin{document}

\maketitle

\begin{abstract}
We introduce the Generalized Energy Based Model (GEBM) for generative modelling. These models combine two  trained components: a base distribution (generally an implicit model), which can learn the support of data with low intrinsic dimension in a high dimensional space; and an energy function, to refine the probability mass on the learned support. 
Both the energy function and base jointly constitute the final model, unlike GANs, which retain only the base distribution (the "generator").  
GEBMs are trained by alternating between learning the energy and the base. 
We show that both training stages are well-defined: the energy is learned by maximising a generalized likelihood, and the resulting energy-based loss provides informative gradients for learning the base.
Samples from the posterior on the latent space of the trained model can be obtained via MCMC, thus finding regions in this space that produce better quality samples.
Empirically, the GEBM samples on image-generation tasks are of much better quality than those from the learned generator alone, indicating that all else being equal, the GEBM will outperform a GAN of the same complexity. When using normalizing flows as base measures, GEBMs succeed on density modelling tasks, returning comparable performance to direct maximum likelihood of the same networks.
 \end{abstract}

\section{Introduction}
Energy-based models (EBMs) have a long history in physics, statistics and machine learning \citep{LecChoHadMArFu06}. They belong to the class of \textit{explicit} models, and can be described by a family of energies $E$ which define probability distributions with density proportional to $\exp(-E)$. Those models are often known up to a normalizing constant  $Z(E)$, also called the \textit{partition function}. 
The learning task  consists of finding an optimal function that best describes a given system or target distribution $\pdist$. This can be achieved using maximum likelihood estimation (MLE), however the intractability of the normalizing partition function makes this learning task challenging. Thus, various methods have been proposed to address this \citep{Hinton:2002,Hyvarinen:2005a,Gutmann:2012,Dai:2018,Dai:2019}.  %
All these methods estimate EBMs that are supported over the whole space. In many applications, however, $\PP$ is believed to be supported on an unknown lower dimensional manifold. This happens in particular when there are strong dependencies between variables in the data \citep{Thiry:2020}, and suggests incorporating a low-dimensionality hypothesis in the model . 
 
Generative Adversarial Networks (GANs) \citep{Goodfellow:2014} are a particular way to enforce low dimensional structure in a model. They rely on an \textit{implicit} model, the generator, to produce samples supported on a low-dimensional manifold by mapping a pre-defined latent noise to the sample space using a trained function. GANs have been very successful in generating high-quality samples on various tasks, especially for unsupervised image generation \citep{Brock:2018}. The generator is trained \textit{adversarially} against a discriminator network whose goal is to distinguish  samples produced by the generator from the target data. This has inspired further research to extend the training procedure to more general losses  \citep{Nowozin:2016,Arjovsky:2017a,Li:2017a,Binkowski:2018,Arbel:2018} and to improve its stability \citep{Miyato:2018,Gulrajani:2017,Nagarajan:2017,Kodali:2017}. While the generator of a GAN has effectively a low-dimensional support, it remains challenging to refine the distribution of mass on that support using pre-defined latent noise. For instance, as shown by \cite{Cornish:2020} for normalizing flows, when the latent distribution is unimodal and the target distribution possesses multiple disconnected low-dimensional components, the generator, as a continuous map, compensates for this mismatch using steeper slopes. In practice, this implies the need for more complicated generators.

In the present work, we propose a new class of models, called \textit{Generalized Energy Based Models} (GEBMs), which can represent distributions supported on low-dimensional manifolds, while  offering more flexibility in refining the  mass on those manifolds. GEBMs combine the strength of both \textit{implicit} and \textit{explicit} models in two separate components: a base distribution (often chosen to be an implicit model) which learns the low-dimensional support of the data, and an energy function that can refine the probability mass on that learned support. 
We propose to train the GEBM by alternating between learning the energy and the base, analogous to $f$-GAN training \citep{Goodfellow:2014,Nowozin:2016}. 
The energy is learned by maximizing a generalized notion of likelihood which we relate to the \textit{Donsker-Varadhan} lower-bound \citep{Donsker:1975} and \textit{Fenchel duality}, as in \citep{nguyen2010estimating,Nowozin:2016}. 
Although the partition function is intractable in general, we propose a method to learn it in an amortized fashion without introducing additional surrogate models, as done in variational inference \citep{kingma+welling:2014,rezende+al:2014:icml} or by \citet{Dai:2018,Dai:2019}. 
The resulting maximum likelihood estimate,  the \textit{KL Approximate Lower-bound Estimate (\textsc{KALE})}, is then used as a loss for training the base. 
When the class of energies is rich and smooth enough,  we show that KALE leads to a meaningful criterion for measuring weak convergence of probabilities. Following recent work by \citet{Chu:2019,Sanjabi:2018}, we  show that KALE possesses well defined gradients w.r.t. the parameters of the base, ensuring well-behaved training. 
We also provide convergence rates for the empirical estimator of KALE when the variational family is sufficiently well behaved, %
 which may be of independent interest.

The main advantage of GEBMs
becomes clear
when sampling from these models: the posterior over the latents of the base distribution incorporates the learned energy,  putting greater mass on regions in this latent space that lead to better quality samples.  Sampling from the GEBM can thus be achieved by first sampling from the posterior distribution of the latents via MCMC in the low-dimensional latent space, then mapping those latents to the input space using the implicit map of the base. This is in contrast to standard GANs, where the latents of the base have a fixed distribution.  We focus on a class of samplers that exploit gradient information, and show that these samplers enjoy fast convergence properties by leveraging the recent work of \citet{Eberle:2018}. 
While there has been recent interest in using the discriminator to improve the quality of the generator during sampling \citep{Azadi:2019,Turner:2019a,Neklyudov:2019,Grover:2019,Tanaka:2019,Wu:2019c}, our approach emerges naturally from the model we consider.

We begin in \cref{sec:background} by introducing the GEBM  model. In \cref{sec:learning}, we describe the learning procedure using KALE, then derive a method for sampling from the learned model in \cref{sec:sampling}. In \cref{sec:related_work} we discuss related work. Finally, experimental results are presented in \cref{sec:experiments} with code available at \url{https://github.com/MichaelArbel/GeneralizedEBM}.
\section{Generalized Energy-Based Models}\label{sec:background}
\begin{figure}[H]
\centering
  	\includegraphics[width=\linewidth]{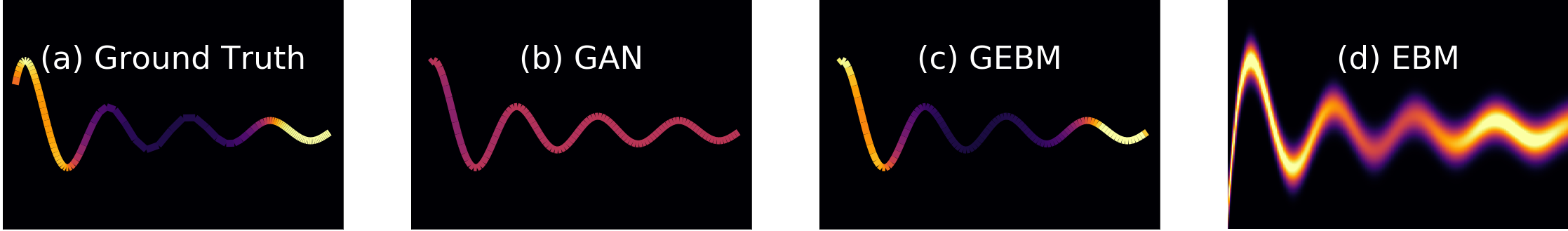}
\caption{Data generating distribution supported on a line and with higher density at the extremities. Models are learned using either a GAN, GEBM, or EBM. More details are provided in \cref{sec:illustrative_example}.} 
\label{fig:image_samples}
\end{figure}
In this section, we introduce generalized energy based models (GEBM), that combine the strengths of both energy-based models and implicit generative models, and admit the first of these as a special case.
An {\bf energy-based model} (EBM) is defined by a set $\mathcal{E}$ of real valued functions called \textit{energies}, where each $E\in\mathcal{E}$  specifies a probability density over the data space $\mathcal{X} \subset \mathbb{R}^d$ up to a normalizing constant,
\begin{align}\label{eq:energy_model}
	\mathbb{Q}(\diff x) =  \exp\left(-E(x)-A\right)\diff x,\qquad  A = \log\left(\int \exp(-E(x))\diff x  \right).
\end{align} 
While EBMs have been shown recently to be powerful models for representing complex high dimensional data distributions, they still unavoidably lead to a blurred model whenever data are concentrated on a lower-dimensional manifold.  This is the case in \cref{fig:image_samples}(a), where the ground truth distribution is supported on a 1-D line and embedded in a 2-D space. The EBM in \cref{fig:image_samples}(d) learns to give higher density to a halo surrounding the data, and thus provides a blurred representation. That is a consequence of EBM having a density defined over the whole space, and can result in blurred samples for image models.

An {\bf implicit generative model} (IGM) is a family of probability distributions $\BB_{\theta}$ parametrized by a learnable \textit{generator} function $\B:\mathcal{Z}\mapsto \mathcal{X}$ that maps latent samples $z$ from a fixed latent distribution $\eta$ to the data space $\mathcal{X}$. The latent distribution $\eta$ is required to have a density over the latent space $\mathcal{Z}$ and is often easy to sample from. Thus, Sampling from $\BB$ is simply achieved by first sampling $z$ from $\eta$ then applying $\B$,
\begin{align}\label{eq:sampling_IGM}
	x\sim \BB \quad \iff \quad x = \B(z), \quad z \sim \eta.
\end{align}
GANs are popular instances of these models, and are trained \textit{adversarially} \citep{Goodfellow:2014}. When the latent space $\mathcal{Z}$ has a smaller dimension than the input space $\mathcal{X}$, the IGM will be supported on a lower dimensional manifold
of $\mathcal{X}$, and thus will  not possess a Lebesgue density on $\mathcal{X}$ \citep{Bottou:2017}. IGMs are therefore good candidates for modelling low dimensional distributions. While GANs can accurately learn the low-dimensional support of the data, they can have limited power for representing the distribution of mass on the support. This is illustrated in \cref{fig:image_samples}(b).

A {\bf generalized energy-based model} (GEBM) $\mathbb{Q}$ is defined by a combination of a \textit{base}  $\BB$ and an \textit{energy} $E$ defined over a subset $\mathcal{X}$  of $\R^d$. The \textbf{base} component can typically be  chosen to be an IGM as in \cref{eq:sampling_IGM}. 
The \textbf{generalized energy} component can refine the mass on the support defined by the \textit{base}. It belongs to a class $\mathcal{E}$ of real valued functions defined on the input space $\mathcal{X}$, and represents the negative log-density of a sample from the GEBM  with respect to the base $\BB$,
\begin{align}\label{eq:GEBM}
	\QQ(\diff x) = \exp\left(-E(x)-\A\right) \BB(\diff x),\qquad \A = \log\left(\int \exp(-E(x))\BB(\diff x)\right),
\end{align}
where $\A$ is the logarithm of the normalizing constant of the model w.r.t. $\BB$. Thus, a GEBM $\QQ$ re-weights samples from the base according to the un-normalized importance weights $\exp(-E(x))$. 
Using the latent structure of the base $\BB$, this importance weight can be pulled-back to the latent space to define a \textit{posterior latent} distribution $\nu$,
	\begin{align}\label{eq:posterior_latent}
	 	\nu(z) := \eta(z)\exp\left(-E\left(\B(z)\right)-\A\right).
	\end{align}
Hence, the \textit{posterior latent} $\nu$ can be used instead of the latent noise $\eta$ for sampling from $\QQ$, as summarized by \cref{prop:sampling}:
\begin{proposition}\label{prop:sampling}
Sampling from $\QQ$ requires sampling a latent $z$ from $\nu$  \cref{eq:posterior_latent} then applying the map $\B$,
\begin{align}\label{eq:sampling_GEBM}
	x\sim \QQ \quad \iff \quad x = \B(z), \quad z \sim \nu.
\end{align} 
\end{proposition}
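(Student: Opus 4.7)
The plan is to prove the claim by a direct pushforward/change-of-variables calculation: show that the law of $\B(Z)$ with $Z\sim\nu$ coincides with $\QQ$ as defined in \cref{eq:GEBM}. To do this, I would test both measures against an arbitrary bounded measurable function $\varphi:\mathcal{X}\to\R$ and verify that the two resulting integrals agree.

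First I would verify that $\nu$ in \cref{eq:posterior_latent} is a bona fide probability density on $\mathcal{Z}$. This amounts to checking that $\int \nu(z)\diff z = 1$, which follows by applying the defining pushforward identity $\int f(\B(z))\eta(z)\diff z = \int f(x)\BB(\diff x)$ (valid because $\BB$ is by construction the pushforward of $\eta$ under $\B$, cf.\ \cref{eq:sampling_IGM}) to $f(x)=\exp(-E(x)-\A)$, and recalling the definition of $\A$ from \cref{eq:GEBM}. This step implicitly requires $\A<\infty$, i.e.\ $\exp(-E)$ is $\BB$-integrable, which is the standing assumption for the GEBM to be well-defined.

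Next, for a bounded measurable $\varphi$, I would compute
\begin{align*}
\E_{Z\sim\nu}[\varphi(\B(Z))]
&= \int \varphi(\B(z))\,\eta(z)\exp\!\bigl(-E(\B(z))-\A\bigr)\,\diff z\\
&= \int \varphi(x)\exp\!\bigl(-E(x)-\A\bigr)\,\BB(\diff x)
= \int \varphi(x)\,\QQ(\diff x),
\end{align*}
where the second equality again uses the pushforward identity with $f(x)=\varphi(x)\exp(-E(x)-\A)$, and the third is just the definition of $\QQ$ in \cref{eq:GEBM}. Since this holds for every bounded measurable $\varphi$, the law of $\B(Z)$ equals $\QQ$, which is exactly \cref{eq:sampling_GEBM}.

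There is no real obstacle here; the only thing to be careful about is the measure-theoretic justification of the pushforward identity on the class of integrands used (it holds by a standard monotone class / Fubini argument, applied first to indicators and then extended by linearity and monotone convergence). The proposition is essentially a restatement that importance-weighting samples from $\BB$ by $\exp(-E(x)-\A)$ is equivalent to importance-weighting the latent $\eta$ by the pulled-back weight $\exp(-E(\B(z))-\A)$ and then pushing through $\B$.
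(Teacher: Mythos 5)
Your proof is correct and follows essentially the same route as the paper's: both test $\QQ$ against an arbitrary test function, use the pushforward identity $\int f(\B(z))\,\eta(z)\,\diff z = \int f\,\diff\BB$ with $f(x)=\varphi(x)\exp(-E(x)-\A)$, and read off $\nu(z)=\eta(z)\exp(-E(\B(z))-\A)$ as the latent law. The only addition on your part is the (sound and welcome) sanity check that $\nu$ integrates to $1$, which the paper leaves implicit.
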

In order to hold, \cref{prop:sampling} does not need the generator $\B$ to be invertible. We provide a proof in \cref{proof:prop:sampling} which relies on a characterization of probability distribution using generalized moments. We will see later in \cref{sec:sampling} how equation \cref{eq:sampling_GEBM} can be used to provide practical sampling algorithms from the GEBM. Next we discuss the advantages of GEBMs.

{\bf Advantages of Generalized Energy Based Models.}
The GEBM defined by \cref{eq:GEBM} can be related to exponential tilting (re-weighting) \citep{Siegmund:1976,Xie:2016} of the base $\BB$. The  important difference over classical EBMs is that the base $\BB$ is allowed to change its support and shape in space. By learning the base $\BB$, GEBMs can accurately learn the low-dimensional support of data, just like IGMs do. They also benefit from the flexibility of EBMs for representing densities using an energy $E$ to refine distribution of mass on the support defined by $\BB$, as seen in \cref{fig:image_samples}(c). 

{\bf Compared to EBMs}, that put mass on the whole space by construction (positive density), 
GEBMs have the additional flexibility to concentrate the probability mass on a low-dimensional support learned by the base $\BB$, provided that the dimension of the latent space $\mathcal{Z}$ is smaller than the dimension of the ambient space $\mathcal{X}$: see \cref{fig:image_samples}(c) vs  \cref{fig:image_samples}(d). In the particular case when the dimension of $\mathcal{Z}$ is equal to the ambient dimension and $\B$ is invertible, the base $\BB$ becomes supported over the whole space $\mathcal{X}$, and GEBM recover usual EBMs. The next proposition further shows that any EBM can be viewed as a particular cases of GEBMs, as proved in \cref{proof:prop:ebm_as_gebm}.
\begin{proposition}\label{prop:ebm_as_gebm}
	Any EBM with energy $E$ (as in \cref{eq:energy_model}) can be expressed as a GEBM with  base $\BB$ given as a \textit{normalizing flow} with density $exp(-r(x))$ and a generalized energy $\tilde E(x)= E(x)-r(x)$. In this particular case, the dimension of the latent is necessarily equal to the data dimension, i.e.  $dim(\mathcal{Z}) = dim(\mathcal{X})$. 
\end{proposition}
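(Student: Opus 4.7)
The plan is to exhibit an explicit construction of the base and the generalized energy so that the resulting GEBM density matches the EBM density pointwise, and then observe that the normalizing flow constraint forces the latent and data dimensions to coincide.

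First, given an EBM with energy $E$ and normalizer $A = \log \int \exp(-E(x))\,dx$, I would take the base $\BB$ to be any normalizing flow admitting a strictly positive Lebesgue density, written in the form $\BB(dx) = \exp(-r(x))\,dx$, and define the candidate generalized energy $\tilde E(x) \eqdef E(x) - r(x)$. Plugging into the GEBM definition \cref{eq:GEBM}, the key algebraic step is simply
\begin{equation*}
\exp(-\tilde E(x))\,\BB(dx) = \exp\bigl(-(E(x)-r(x))\bigr)\exp(-r(x))\,dx = \exp(-E(x))\,dx,
\end{equation*}
so the un-normalized measures coincide with the un-normalized EBM measure on $\mathcal{X}$.

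Second, I would compute the GEBM normalizer $\tilde A = \log \int \exp(-\tilde E(x))\,\BB(dx)$ using the identity above. It immediately evaluates to $\log \int \exp(-E(x))\,dx = A$. Therefore $\exp(-\tilde E(x) - \tilde A)\,\BB(dx) = \exp(-E(x) - A)\,dx$, which is exactly the EBM $\QQ$ in \cref{eq:energy_model}. This establishes the representation as a GEBM.

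Finally, for the dimension claim, I would invoke the defining property of a normalizing flow: its sampling map $\B : \mathcal{Z} \to \mathcal{X}$ is a diffeomorphism (so that the change-of-variable formula yields the density $\exp(-r)$ with respect to Lebesgue measure on $\mathcal{X}$). A diffeomorphism between open subsets of Euclidean spaces forces $\dim(\mathcal{Z}) = \dim(\mathcal{X})$, which gives the last assertion. There is no real obstacle here; the only subtlety worth flagging in the writeup is that the choice of $r$ is not unique, so the proposition only asserts existence of such a representation, with any admissible normalizing-flow density $\exp(-r)$ working as the base.
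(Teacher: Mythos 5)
Your proof is correct and takes essentially the same route as the paper: choose a normalizing-flow base with Lebesgue density $\exp(-r)$, set $\tilde E = E - r$, and observe the factor $\exp(r)$ cancels so that the GEBM measure reduces to $\exp(-E(x))\,dx$ up to normalization. You are somewhat more explicit than the paper on two points the paper handles implicitly---that the GEBM normalizer $\tilde A$ equals the EBM normalizer $A$, and that the dimension claim follows because a normalizing flow's sampling map is a diffeomorphism---but the argument is the same.
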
 

{\bf Compared to IGMs}, that rely on a fixed pre-determined latent noise distribution $\eta$, GEBMs offer the additional flexibility of learning a richer latent noise distribution.
This is particularly useful when the data is multimodal. In IGMs, such a GANs, the latent noise $\eta$ is usually unimodal thus requiring a more sophisticated generator to distort a unimodal noise distribution into a distribution with multiple modes, as shown by \cite{Cornish:2020}.
Instead, GEBMs allow to sample from a \textit{posterior} $\nu$ over the latent noise defined in \cref{eq:posterior_latent}. This posterior noise can be multimodal in latent space (by incorporating information from the energy) and thus can put more or less mass in specific regions of the manifold defined by the base $\BB$. This allows GEBMs  to capture multimodality in data, provided the support of the base is broad enough to subsume the data support \cref{fig:image_samples}(c). The base can be simpler, compared to GANs, as it doesn’t need to distort the input noise too much to produce multimodal samples (see \cref{fig:gen_complexity} in \cref{sec:gen_complexity}).
This additional flexibility comes at no additional training cost compared to GANs. Indeed, GANs still require another model during training, the discriminator network,  but do not use it for sampling. Instead, GEBMs avoid this waist since the base and energy can be trained jointly, with no other additional model, and then both are used for sampling. 

\section{Learning \GEBM s }\label{sec:learning} 
In this section we describe a general procedure for learning GEBMs. We decompose the learning procedure into two steps: an \textit{energy learning} step and a \textit{base learning} step. The overall learning procedure alternates between these two steps, as done in GAN  training \citep{Goodfellow:2014}.

\subsection{Energy learning}\label{sec:energy_KALE}
When the base $\BB$ is fixed, varying the energy $E$ leads to a family of models that all admit a density $\exp(-E-\A)$ w.r.t. $\BB$. When the base $\BB$ admits a density $\exp(-r)$ defined over the whole space, it is possible to learn the energy $E$ by maximizing the likelihood of the model $-\int (E+r)\diff\PP -\A$. However, in general $\BB$ is supported on a lower-dimensional manifold so that $r$ is ill-defined and the usual notion of likelihood cannot be used. Instead, we introduce a generalized notion of likelihood which  does not require a well defined density $\exp(-r)$ for $\BB$:
\begin{definition}[Generalized Likelihood]\label{def:generalized_likelihood}
The expected $\BB$-log-likelihood under a target distribution  $\PP$ of a GEBM model $\QQ$ with base $\BB$ and energy $E$ is defined as
	\begin{align}
	\label{eq:q_likelihood}
		\mathcal{L}_{\PP,\BB}(E) := -\int E(x) d\PP(x) - \A.
	\end{align}
\end{definition}
To provide intuitions about the generalized likelihood in \cref{def:generalized_likelihood}, we start by discussing the particular case where $KL(\mathbb{P}|| \BB)<+\infty$. We then present the training method in the general case where $\PP$ and $\BB$ might not share the same support, i.e. $KL(\mathbb{P}|| \BB)=+\infty$.

{\bf Special case of finite $KL(\mathbb{P}|| \BB)$.}
When the Kullback-Leibler divergence between $\PP$ and $\BB$ is well defined, \cref{eq:q_likelihood} corresponds to the Donsker-Varadhan (DV) lower bound on the KL \citep{Donsker:1975}, meaning that $ \text{KL}(\PP||\BB) \geq  \mathcal{L}_{\PP,\BB}(E)$ for all $E$. Moreover, the following proposition holds:
\begin{proposition}\label{prop:kl_improvement}
	Assume that $KL(\mathbb{P}|| \BB) < +\infty $ and $0\in \mathcal{E}$. If, in addition, $E^{\star}$ maximizes \cref{eq:q_likelihood},  then: 
\begin{align}\label{eq:energy_improvement}
KL(\mathbb{P}|| \QQ )\leq KL(\mathbb{P}|| \BB ).
\end{align}
In addition, we have that $KL(\mathbb{P}|| \QQ )= 0$ when  $E^{\star}$ is the negative log-density ratio of $\PP$ w.r.t. $\BB$.
\end{proposition}
We refer to  \cref{proof:prop:kl_improvement} for a proof. According to \cref{eq:energy_improvement}, the GEBM systematically improves over the IGM defined by $\BB$, with no further improvement possible in the limit case when $\BB=\mathbb{P}$.  
Hence as long as there is an error in mass on the common support of $\PP$ and $\BB$, the GEBM improves over the base $\BB$. 

{\bf Estimating the likelihood in the General setting.}
\cref{def:generalized_likelihood} can be used to learn a maximum likelihood energy $E^{\star}$ by maximizing $\mathcal{L}_{\PP,\BB}(E)$ w.r.t. $E$ even when the $\text{KL}(\PP||\BB)$ is infinite and when $\PP$ and $\BB$ don't necessarily share the same support. Such an optimal solution is well defined whenever the set of energies is suitably constrained. This is the case if the energies are parametrized by a compact set $\Psi$ with $\psi\mapsto E_{\psi}$ continuous over $\Psi$. Estimating the likelihood is then achieved using i.i.d. samples $(X_n)_{1:N},(Y_m)_{1:M}$ from $\PP$ and $\BB$ \citep{Tsuboi:2009,Sugiyama:2012,Liu:2017b}:
\begin{align}\label{eq:DV}
	\hat{\mathcal{L}}_{\PP,\BB}(E) = -\frac{1}{N}\sum_{n=1}^N E(X_n) - \log\left(  \frac{1}{M} \sum_{m=1}^M\exp(-E(Y_m))  \right).
\end{align}
In the context of mini-batch stochastic gradient methods, however,
$M$ typically ranges from $10$ to $1000,$ which can lead to a poor estimate for the log-partition function $\A$ . Moreover, \cref{eq:DV} doesn't exploit estimates of $\A$ from previous gradient iterations. Instead, we propose an estimator which introduces a variational parameter $A\in \R$ meant to estimate $\A$ in an amortized fashion. The key idea is to exploit the convexity of the exponential which directly implies $ -\A \geq -A - \exp(-A+\A)+1$ for any $A\in \mathbb{R}$, with equality only when $A = \A$. Therefore, \cref{eq:q_likelihood} admits a lower-bound of the form 
\begin{align}\label{eq:lower_bound_likelihood}
	\mathcal{L}_{\PP,\BB}(E) \geq  -\int (E+A) \diff \PP - \int  \exp(-(E+A))\diff \BB+1 := \mathcal{F}_{\PP,\BB}(E+A), 
\end{align}
where we introduced the functional $\mathcal{F}_{\PP,\BB}$  for concision. Maximizing $\mathcal{F}_{\PP,\BB}(E+A)$ over $A$ recovers the likelihood $\mathcal{L}_{\PP,\BB}(E)$. Moreover, jointly maximizing over $E$ and $A$ yields the maximum likelihood energy $E^{\star}$ and its corresponding log-partition function $A^{\star} = A_{\BB,E^{\star}}$. This optimization is well-suited for stochastic gradient methods using the following estimator \cite{Kanamori:2011}:
\begin{align}\label{eq:estimator_variational_lower_bound}
	\hat{\mathcal{F}}_{\PP,\BB}(E+A)= -\frac{1}{N}\sum_{n=1}^N (E(X_n)+A)-\frac{1}{M} \sum_{m=1}^M \exp(-(E(Y_m)+A)) + 1.
\end{align}
\subsection{Base learning}\label{sec:horizontal_movement}
Unlike in \cref{sec:energy_KALE}, varying the base $\BB$ does not need to preserve the same support.  
Thus, it is generally not possible to use maximum likelihood methods for learning $\BB$. %
Instead, we propose to use the  generalized likelihood  \cref{eq:q_likelihood} evaluated at the optimal energy $E^{\star}$ as a meaningful loss for learning $\BB$, and refer to it as the \textit{KL Approximate Lower-bound Estimate} (KALE),
\begin{align}\label{eq:KALE}
	\text{KALE}(\PP||\BB) = \sup_{ (E ,A)\in \mathcal{E}\times \R }  \mathcal{F}_{\PP,\BB}(E + A).
\end{align}
From \cref{sec:energy_KALE}, $\text{KALE}(\PP||\BB)$ is always a lower bound on $\text{KL}(\PP,\BB)$. The bound becomes tight whenever the negative log density of $\PP$ w.r.t. $\BB$ is well-defined and belongs to $\mathcal{E}$ (\cref{sec:kale}).
Moreover, \cref{prop:kale_extension} shows that KALE is a reliable criterion for measuring convergence, and is a consequence of \cite[Theorem B.1]{Zhang:2017}, with a proof in \cref{proof:prop:kale_smooth}:
\begin{proposition}\label{prop:kale_extension} 
Assume all energies in $\mathcal{E}$ are $L$-Lipschitz and that any continuous function can be well approximated by linear combinations of energies in $\mathcal{E}$ (\cref{densness,smoothness} of \cref{sec:proof_smoothness_kale}), then $\text{KALE}(\PP||\BB)\geq 0$ with equality only if $ \PP = \BB $ and $\text{KALE}(\PP||\BB^{n}) \rightarrow 0$ iff $ \BB^{n}\rightarrow \PP$ in distribution.
\end{proposition}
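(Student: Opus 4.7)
The plan is to establish the three claims of the proposition separately: non-negativity, the equality characterization, and the metrization of weak convergence. The first two are direct consequences of the variational form of KALE combined with Jensen's inequality and the density hypothesis on $\mathcal{E}$; the third will essentially reduce to \cite[Theorem B.1]{Zhang:2017} once its hypotheses are verified from the Lipschitz and density assumptions.

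\textbf{Non-negativity and the equality case.} Plugging $E = 0$, $A = 0$ into \cref{eq:KALE} gives $\mathcal{F}_{\PP,\BB}(0) = -1 + 1 = 0$, so $\text{KALE}(\PP||\BB) \geq 0$. When $\PP = \BB$, setting $h = E + A$ and applying Jensen to the convex exponential yields, with $c = \int h\,d\PP$, the bound $\int e^{-h}\,d\PP \geq e^{-c}$, whence $\mathcal{F}_{\PP,\PP}(h) \leq -c - e^{-c} + 1 \leq 0$, so $\text{KALE}(\PP||\PP) = 0$. Conversely, if $\PP \neq \BB$, the density hypothesis produces an $E \in \mathcal{E}$ with $\int E\,d\BB > \int E\,d\PP$ — otherwise every finite linear combination of energies would have equal $\PP$- and $\BB$-integrals, contradicting the density of the span in the continuous functions together with Portmanteau separation of $\PP$ from $\BB$. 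A Taylor expansion of the generalized likelihood around $\varepsilon = 0$,
\begin{equation*}
\mathcal{L}_{\PP,\BB}(\varepsilon E) = \varepsilon\Bigl(\int E\,d\BB - \int E\,d\PP\Bigr) - \frac{\varepsilon^{2}}{2}\var_{\BB}(E) + O(\varepsilon^{3}),
\end{equation*}
is strictly positive for sufficiently small $\varepsilon > 0$; since $\sup_{A}\mathcal{F}_{\PP,\BB}(\varepsilon E + A) = \mathcal{L}_{\PP,\BB}(\varepsilon E)$, this contradicts $\text{KALE}(\PP||\BB) = 0$.

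\textbf{Metrization of weak convergence.} For the forward direction $\BB^n \to \PP \Rightarrow \text{KALE}(\PP||\BB^n) \to 0$, I would combine the uniform $L$-Lipschitz bound with tightness of $\{\BB^n\}$ (Prokhorov) to argue that the maps $\BB \mapsto \int E\,d\BB$ and $\BB \mapsto \int e^{-(E+A)}\,d\BB$ are continuous in $\BB^n$ uniformly over the variational class, yielding $\sup_{(E,A)}\mathcal{F}_{\PP,\BB^n}(E+A) \to 0$. For the converse, assume $\text{KALE}(\PP||\BB^n) \to 0$ and suppose for contradiction that $\BB^n \not\to \PP$: then there is a bounded Lipschitz $f$ and a subsequence $n_k$ with $|\int f\,d\BB^{n_k} - \int f\,d\PP| > \varepsilon_0 > 0$, and the same density-plus-Taylor argument as above produces an $E \in \mathcal{E}$ and a small $\varepsilon > 0$ with $\mathcal{L}_{\PP,\BB^{n_k}}(\varepsilon E) \geq c > 0$ eventually along the subsequence, contradicting $\text{KALE}(\PP||\BB^n) \to 0$.

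\textbf{Main obstacle.} The genuinely delicate step is the forward direction of the metrization: weak convergence alone does not immediately give uniform convergence of $\int e^{-(E+A)}\,d\BB^n$ over the variational class, because $e^{-E}$ may grow exponentially when $E$ is only Lipschitz. Controlling this requires combining the Lipschitz bound with tightness and a uniform-integrability argument, which is precisely the content of \cite[Theorem B.1]{Zhang:2017} — the reason the proposition is cited as a consequence of that result. A secondary subtlety, in the converse direction, is bounding $\var_{\BB^{n_k}}(E)$ uniformly to validate the Taylor expansion along the subsequence; this is handled by the same Lipschitz-plus-tightness ingredients, together with the a priori control on $\{\BB^n\}$ that one extracts from the hypothesis $\text{KALE}(\PP||\BB^n) \to 0$ applied to suitable test energies.
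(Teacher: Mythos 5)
Your treatment of the nonnegativity and the equality case is essentially correct, though it follows a slightly different mechanical route than the paper. The paper establishes $\mathcal{F}_{\PP,\PP}(h)\le 0$ from the pointwise inequality $\exp(-x)+x-1\ge 0$; your Jensen argument is a fine alternative. For the converse, the paper works directly from $\mathcal{F}(ch)\le 0$ for all small $c\ge 0$, divides by $c$, and lets $c\to 0$ to conclude $\int h\,d\BB\le\int h\,d\PP$ for every $h$, then uses the symmetry assumption ($-E\in\mathcal{E}$) and denseness of linear combinations to conclude $\PP=\BB$. Your cumulant expansion accomplishes the same first-order analysis and is sound.

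The genuine gap is in the forward direction of the metrization claim. You identify the uniform control of $\int e^{-(E+A)}\,d\BB^n$ over the variational class as the main obstacle and propose to handle it by Prokhorov tightness and uniform integrability, eventually deferring to a version of \cite[Theorem B.1]{Zhang:2017}. This misses the much shorter argument the paper actually uses, which \emph{avoids controlling the exponential term at all}. Writing $h=E+A$ and observing that $\exp(-h)+h-1\ge 0$ pointwise, one has
\begin{align}
\mathcal{F}_{\PP,\BB}(h) = \int h\,d\BB - \int h\,d\PP - \int\bigl(\exp(-h)+h-1\bigr)d\BB \le \int h\,d\BB - \int h\,d\PP \le L\,W_{1}(\PP,\BB),
\end{align}
the last step using the uniform $L$-Lipschitz hypothesis. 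Taking the supremum over $h$ gives $\text{KALE}(\PP||\BB)\le L\,W_{1}(\PP,\BB)$; since $\mathcal{X}$ is assumed compact, weak convergence of $\BB^{n}$ to $\PP$ is equivalent to $W_{1}(\PP,\BB^{n})\to 0$, and the forward implication follows immediately. No tightness or uniform-integrability argument is needed.

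For the converse direction, your proposed re-derivation via density and a second-order Taylor expansion of $\mathcal{L}_{\PP,\BB^{n_k}}(\varepsilon E)$ has a further wrinkle beyond the variance bound you flag: the density hypothesis only provides a finite linear combination $G=\sum_{i} a_{i} E_{i}$ approximating a separating continuous function, not a single energy $E\in\mathcal{E}$, and only single energies (scaled by small $c$) are guaranteed to be admissible test functions. One then has to extract a single $E_{j}$, along a further subsequence, whose integral gap against $(\BB^{n_k}-\PP)$ is bounded away from zero, and use compactness of $\mathcal{X}$ to bound the higher-order remainder uniformly in $k$. This can be made to work, but the paper simply cites \cite[Theorem 10]{Liu:2017b} (under compactness of $\mathcal{X}$) for this implication rather than reproducing the argument.
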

The universal approximation assumption holds in particular when $\mathcal{E}$ contains feedforward networks. In fact networks with a single neuron are enough, as shown in \cite[Theorem 2.3]{Zhang:2017}. The Lipschitz assumption holds when additional regularization of the energy is enforced during training by methods such as  \textbf{spectral normalization} \citep{Miyato:2018} or additional regularization $I(\psi)$ on the energy $E_{\psi}$ such as the \textbf{gradient penalty} \citep{Gulrajani:2017} as done in \cref{sec:experiments}.

{\bf Estimating KALE.}  According to \cite{Arora:2017}, accurate finite sample estimates of divergences that result from an optimization procedures (such as in \cref{eq:KALE}) depend on the richness of the class $\mathcal{E}$; and richer energy classes can result in slower convergence.  Unlike divergences such as Jensen-Shannon, KL and the Wasserstein distance, which result from optimizing over a non-parametric and rich class of functions, KALE is restricted to a class of parametric energies $E_{\psi}$. Thus, \cite[Theorem 3.1]{Arora:2017} applies, and guarantees good finite sample estimates, provided optimization is solved accurately. In \cref{sec:convergence_rates_kale}, we  provide an analysis for the more general case where energies are not necessarily parametric but satisfy some further smoothness properties; we emphasize that our rates do not require the strong assumption that the density ratio is bounded above and below as in \citep{nguyen2010estimating}.

{\bf Smoothness of KALE.} 
Learning the base is achieved by minimizing $\mathcal{K}(\theta) :=  \text{KALE}(\PP||\BB_{\theta})$ over the set of parameters $\Theta$ of the generator $\B_{\theta}$ using first order methods \citep{Duchi:2011,Kingma:2014,Arbel:2019a}.  
 This requires $\mathcal{K}(\theta)$ to be smooth enough so that gradient methods converge to local minima and avoid instabilities during training \citep{Chu:2019}. Ensuring smoothness of losses that result from an optimization procedure, as in \cref{eq:KALE}, can be challenging. Results for the regularized Wasserstein are provided by  \cite{Sanjabi:2018}, while more general losses are considered by \cite{Chu:2019}, albeit under stronger conditions than for our setting.\cref{thm:kale_gan_convergence} shows that when $E$, $\B_{\theta}$ and their gradients are all Lipschitz then $\mathcal{K}(\theta)$ is smooth enough.We provide a proof for \cref{thm:kale_gan_convergence} in \cref{proof:prop:kale_smooth}.
\begin{theorem}\label{thm:kale_gan_convergence}
	Under \cref{assump:compactness,assump:l_smooth_critic,assump:l_smooth_gen} of \cref{sec:proof_smoothness_kale}, 
	sub-gradient methods on  $\mathcal{K}$ converge to local optima. Moreover, $\mathcal{K}$ is Lipschitz and differentiable for almost all $\theta\in \Theta$  with: 
		\begin{align}\label{eq:grad_kale} 
			\nabla \mathcal{K}(\theta) = \exp(-A_{\B_{\theta},E^{\star}})\int \nabla_x E^{\star}(\B_{\theta}(z))\nabla_{\theta}\B_{\theta}(z)  \exp(-E^{\star}(\B_{\theta}(z))) \eta(z) \diff z.
		\end{align}  
\end{theorem}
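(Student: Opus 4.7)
The plan is to view $\mathcal{K}(\theta)$ as the value function of a parametric maximization problem and to apply an envelope theorem of Danskin type. Write $F(\theta,E,A):=\mathcal{F}_{\PP,\BB_\theta}(E+A)$, so that $\mathcal{K}(\theta)=\sup_{(E,A)\in\mathcal{E}\times\R}F(\theta,E,A)$. Under the compactness hypothesis on the parameter set $\Psi$ (and continuity of $\psi\mapsto E_\psi$), the energies are uniformly bounded, which in turn lets us confine $A$ to a compact interval without loss of generality; thus the supremum is actually a maximum and the argmax set is non-empty.

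Next I would compute $\nabla_\theta F(\theta,E,A)$ at a \emph{fixed} $(E,A)$. Only the second integral depends on $\theta$, and, after the change of variables through $\B_\theta$, it reads $\int \exp(-(E(\B_\theta(z))+A))\,\eta(z)\diff z$. The Lipschitz-gradient assumptions on $E$ and on $\B_\theta$ give a $\theta$-uniform integrable dominating function (via the exponential upper bound from the uniform lower bound on $E$), so dominated convergence justifies differentiation under the integral and produces
\begin{align*}
\nabla_\theta F(\theta,E,A) = \int \nabla_x E(\B_\theta(z))\,\nabla_\theta \B_\theta(z)\,\exp(-(E(\B_\theta(z))+A))\,\eta(z)\,\diff z.
\end{align*}
The same Lipschitz-gradient bounds yield a constant $L$ such that $\|\nabla_\theta F(\theta,E,A)\|\le L$ uniformly over the (effectively compact) set of admissible $(E,A)$, hence each $\theta\mapsto F(\theta,E,A)$ is $L$-Lipschitz, and the supremum $\mathcal{K}$ is itself $L$-Lipschitz on $\Theta$.

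Lipschitz continuity then gives differentiability almost everywhere by Rademacher's theorem. At a point $\theta$ of differentiability, Danskin's envelope theorem (whose hypotheses reduce here to compactness of the argmax set and continuity of $\nabla_\theta F$ in $(E,A)$, both secured above) identifies $\nabla \mathcal{K}(\theta)$ with $\nabla_\theta F(\theta,E^\star,A^\star)$ for any maximiser $(E^\star,A^\star)$; plugging in $A^\star = A_{\BB_\theta,E^\star}$ and factoring out $\exp(-A^\star)$ yields exactly \cref{eq:grad_kale}. Convergence of sub-gradient methods to Clarke-stationary points is then inherited from the standard theory for locally Lipschitz objectives, applied to the Clarke subdifferential of $\mathcal{K}$, which by the envelope characterisation coincides (up to closed convex hull) with the set $\{\nabla_\theta F(\theta,E^\star,A^\star)\}$ ranging over maximisers.

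The main obstacle I expect is the envelope step: while $\mathcal{K}$ is clearly a sup of smooth functions, justifying the interchange of differentiation and maximisation requires continuity of $(\theta,E,A)\mapsto \nabla_\theta F$ together with compactness of the maximising set, and one must handle non-uniqueness of $(E^\star,A^\star)$ cleanly — this is precisely why differentiability is claimed only almost everywhere, with subgradient calculus filling the gap on the null set where multiple maximisers coexist. The dominated-convergence bookkeeping for the gradient is routine once the uniform Lipschitz bounds in \cref{assump:l_smooth_critic,assump:l_smooth_gen} are in hand, and so is the Clarke-subdifferential convergence analysis once $\mathcal{K}$ is shown to be locally Lipschitz.
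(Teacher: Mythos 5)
Your envelope argument for the gradient formula is sound and essentially parallels the paper's: the paper likewise proves Lipschitzness of $\mathcal{K}$, invokes Rademacher for a.e.\ differentiability, and then applies an envelope theorem (Milgrom--Segal rather than Danskin, but both require the same ingredients here: compactness of the argmax set and continuity of $\nabla_\theta F$) together with the differentiation lemma \cref{lem:dom}, which is the dominated-convergence step you sketch. One small simplification the paper makes is to optimize $A$ out analytically, working with $\mathcal{L}_\theta(E)=-\int E\,d\PP-\log\int\exp(-E)\,d\BB_\theta$, so there is no need to compactify $A$ by hand.

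The genuine gap is in your final paragraph: convergence of sub-gradient methods to (Clarke-)stationary points is \emph{not} a consequence of local Lipschitzness alone. For a generic locally Lipschitz function, sub-gradient iterates can fail to accumulate at Clarke-stationary points; the standard convergence results (\citealp{Davis:2018,Thekumparampil:2019}, cited in the paper) require $\mathcal{K}$ to be \emph{weakly convex} (equivalently, lower-$C^1$), meaning $\theta\mapsto\mathcal{K}(\theta)+C\Vert\theta\Vert^2$ is convex for some $C$. The paper establishes this explicitly: \cref{lem:smooth_functional} shows that $\theta\mapsto\mathcal{L}_\theta(E)$ is $M$-smooth \emph{uniformly in $E$}, which gives
\begin{align}
\mathcal{L}_{\theta_t}(E)\le t\,\mathcal{L}_\theta(E)+(1-t)\,\mathcal{L}_{\theta'}(E)+\tfrac{M}{2}t(1-t)\Vert\theta-\theta'\Vert^2,
\end{align}
and taking the supremum over $E$ transfers this weak-convexity inequality to $\mathcal{K}$. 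You remark in passing that $\mathcal{K}$ is ``clearly a sup of smooth functions,'' but that observation does the real work only once you establish a $\theta$-uniform smoothness constant $M$ over the compact class $\mathcal{E}$; that uniform bound is exactly what \cref{lem:smooth_functional} (via the integrability conditions on $a$ and $b$ in \cref{assump:l_smooth_gen}) delivers and what your proposal omits. Without it, the final convergence claim does not follow.
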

\vspace{-.7cm}
\begin{minipage}{.38\linewidth}
{\bf Estimating the gradient} in \cref{eq:grad_kale} is achieved by first optimizing over $E_{\psi}$ and $A$ using  \cref{eq:estimator_variational_lower_bound}, with additional regularization $I(\psi)$. The resulting estimators $\hat{E}^{\star}$ and $ \hat{A}^{\star}$ are plugged in \cref{eq:empirical_gradient} to estimate  $\nabla \mathcal{K}(\theta)$ using  samples $(Z_m)_{1:M}$ from $\eta$. Unlike for learning the energy $E^{\star}$, which benefits from using the amortized estimator of the log-partition function, we found that using the empirical log-partition for learning the base was more stable. We summarize the training procedure in \cref{alg:kale_gan}, which alternates between learning the energy and the base in a similar fashion to \textit{adversarial training}.
\end{minipage}\hspace{.02\linewidth}
\begin{minipage}{.60\linewidth}
\begin{algorithm}[H]
	\caption{Training GEBM}\label{alg:kale_gan}
	\begin{algorithmic}[1] 
		\STATE \textbf{Input} $\PP$,  $N$,$M$, $n_{b}$, $n_{e}$
		\STATE \textbf{Output} Trained generator $\B_{\theta}$ and energy $E_{\psi}$. 
		\STATE \textit{Initialize $\theta$ , $\psi$ and $A$.}
		\FOR{$k=1,\dots, n_{b}$}
			\FOR{$j=1,\dots, n_{e}$}
				\STATE Sample $\{X_n\}_{1:N} \sim \PP$ and $\{Y_n\}_{1:N} \sim  \BB_{\theta}$
				\STATE $g_{\psi} \leftarrow  -\nabla_{\psi} \hat{\mathcal{F}}_{\PP,\BB_{\theta}}(E_{\psi}+A) + I(\psi)  $
				\STATE $\tilde{A}  \leftarrow \log\left(\frac{1}{M} \sum_{m=1}^M \exp(-E_{\psi}(Y_m))\right)$
				\STATE $g_A \leftarrow \exp(A-\tilde{A})-1$  
				\STATE \textit{Update $\psi$ and $A$ using $g_{\psi}$ and $g_A$.}
			\ENDFOR
			\STATE Set $\hat{E}^{\star} \leftarrow E_{\psi}$ and $\hat{A}^{\star}\leftarrow A$.
			\STATE \textit{Update $\theta$ using $\widehat{\nabla \mathcal{K}(\theta) }$ from \cref{eq:empirical_gradient}}
		\ENDFOR
	\end{algorithmic}
\end{algorithm}
\end{minipage}
\begin{align}\label{eq:empirical_gradient}
	\widehat{\nabla \mathcal{K}(\theta) } = \frac{\exp(-\hat{A}^{\star})}{M}\sum_{m=1}^M  \nabla_x \hat{E}^{\star}(\B_{\theta}(Z_m))\nabla_{\theta}\B_{\theta}(Z_m)  \exp(-\hat{E}^{\star}(\B_{\theta}(Z_m))). 
\end{align}
\section{Sampling from \GEBM s}\label{sec:sampling}
A simple estimate of the empirical  distribution of observations under the GEBM is via importance sampling (IS). This consists in first sampling multiple points from the base $\BB$, and then re-weighting the samples according to the energy $E$. Although straightforward, this approach can lead to highly unreliable estimates, a well known problem in the Sequential Monte Carlo (SMC) literature which employs IS extensively \citep{Doucet:2001,Del-Moral:2006}. Other methods such as rejection sampling are known to be inefficient in high dimensions \cite{Haugh:2017}. Instead, we propose to sample from the posterior $\nu$ using MCMC. Recall from \cref{eq:sampling_GEBM} that a sample $x$ from $\QQ$ is of the form $x=\B(z)$ with $z$ sampled from the \textit{posterior latent} $\nu$ of \cref{eq:posterior_latent} instead of the prior $\eta$. While sampling from $\eta$ is often straightforward (for instance if $\eta$ is a Gaussian), sampling from $\nu$ is generally harder, due to dependence of its density on complex functions $E$ and $G$. It is still possible to use MCMC methods to sample from $\nu$, however, since we have access to its density up to a normalizing constant \cref{eq:posterior_latent}. In particular, we are interested in methods that exploit the gradient of $\nu$, and consider two classes of samplers:   \textit{Overdamped samplers} and \textit{Kinetic samplers}.

{\bf Overdamped samplers} are obtained as a time-discretization of the \textit{Overdamped Langevin dynamics}:
\begin{align}\label{eq:overdamped_equation}
	dz_t = \left( \nabla_z \log\eta(z_t) - \nabla_z E(G(z_t)) \right) + \sqrt{2}\diff w_t,
\end{align}
where $w_t$ is a standard Brownian motion. The simplest sampler arising from \cref{eq:overdamped_equation} is the Unadjusted Langevin Algorithm (ULA):
\begin{align}\label{eq:ULA}
	Z_{k+1} = Z_{k} + \lambda\left( \nabla_z \log \eta(Z_k) -\nabla_z E(G(Z_k)) \right) + \sqrt{2\lambda}W_{k+1},\qquad Z_0\sim \eta,
\end{align}
where $(W_{k})_{k\geq 0}$ are i.i.d. standard Gaussians and $\lambda$ is the step-size. For large $k$, $Z_k$ is an approximate sample from $\nu$ \cite[Proposition 3.3]{Raginsky:2017}. Hence, setting $X=G(Z_k)$ for a large enough $k$ provides an approximate sample from the GEBM $\QQ$, as summarized in \cref{alg:overdamped_langevin} of \cref{sec:Algorithms}. 

{\bf Kinetic samplers}  arise from  the \textit{Kinetic Langevin dynamics} which introduce a momentum variable: 
\begin{align}\label{eq:latent_langevin_diffusion}
	\diff z_t = v_t\diff t, \qquad \diff v_t= -\gamma v_t \diff t + u\left(\nabla \log \eta(z_t) - \nabla E(\B(z_t))\right)\diff t + \sqrt{2\gamma u} \diff w_t.
\end{align}
with friction coefficient $\gamma\geq 0$, inverse mass $u\geq0$, \textbf{momentum} vector $v_t$ and standard Brownian motion $w_t$. When the mass $u^{-1}$ becomes negligible compared to the friction coefficient $\gamma$, i.e. $u\gamma^{-2}\approx 0$, standard results show that  \cref{eq:latent_langevin_diffusion} recovers the Overdamped dynamics \cref{eq:overdamped_equation}.
Discretization in time of \cref{eq:latent_langevin_diffusion} leads to Kinetic samplers similar to Hamiltonian Monte Carlo \citep{Cheng:2017,Sachs:2017}. We consider a particular algorithm from \cite{Sachs:2017} which we call Kinetic Langevin Algorithm (KLA) (see  \cref{alg:langevin} in \cref{sec:Algorithms}).  
Kinetic samplers were shown to better explore the modes of the invariant distribution $\nu$ compared to Overdamped ones (see \citep{Neal:2012,Betancourt:2017} for empirical results and \citep{Cheng:2017} for theory), as also confirmed empirically in \cref{sec:image_generation_appendix} for image generation tasks using GEBMs. Next, we provide the following convergence result:%
\begin{proposition}
\label{prop:convergence_langevin}
	Assume that $\log \eta(z)$ is strongly concave and has a Lipschitz gradient, that $E$, $\B$ and their gradients are all $L$-Lipschitz. Set $x_t=\B(z_t)$, where $z_t$ is given by \cref{eq:latent_langevin_diffusion} and call $\PP_t$ the probability distribution of $x_t$. Then  $\PP_t$ converges to $\QQ$ in the Wasserstein sense, 
	\begin{align}
		W_2(\PP_t,\QQ)\leq L C e^{-c\gamma t} , 
	\end{align}
	where $c$ and $C$ are positive constants independent of $t$, with $c = O(\exp(-dim(\mathcal{Z}))) $.
\end{proposition}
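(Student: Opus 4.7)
The plan is to view \cref{eq:latent_langevin_diffusion} as a kinetic Langevin diffusion on the latent space with potential
\[
	U(z) \;:=\; -\log\eta(z) + E(G(z)),
\]
apply the contraction estimates of \cite{Eberle:2018} in $W_2$ on the joint $(z,v)$ space, project to the $z$-marginal, and then push forward through the $L$-Lipschitz map $G$.

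First, I would verify that the diffusion \cref{eq:latent_langevin_diffusion} is the standard kinetic Langevin SDE associated with $U$, with inverse temperature absorbed into $u$. A direct Fokker--Planck calculation shows that its unique stationary distribution on $(z,v)$ has density proportional to $\exp(-U(z))\exp(-|v|^2/(2u))$, and hence its $z$-marginal is precisely the posterior latent $\nu$ defined in \cref{eq:posterior_latent}. Next, I would check the hypotheses needed for Eberle-type contraction: (i) $\nabla U$ is globally Lipschitz, which follows since $\nabla\log\eta$ is Lipschitz by assumption, and since $\nabla(E\circ G)(z) = \nabla G(z)^\top \nabla E(G(z))$ is Lipschitz because $E$, $G$ and their gradients are $L$-Lipschitz (hence $\nabla E\circ G$ is bounded and Lipschitz, and so is $\nabla G$); (ii) $U$ is strongly convex outside a compact set: since $-\log\eta$ is $m$-strongly concave and $E\circ G$ has a globally $L^2$-Lipschitz gradient, $\nabla^2 U \succeq m I - L^2 I$ is perturbed only in a bounded region where the non-convexity has bounded magnitude. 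This is exactly the ``convex at infinity'' setting of \cite[Theorem 2.3]{Eberle:2018}.

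Applying that theorem yields a semimetric $\rho$ on $(z,v)$-space equivalent (up to multiplicative constants $C_1,C_2>0$) to the Euclidean distance, such that the law $\mu_t$ of $(z_t,v_t)$ contracts exponentially toward the stationary law $\mu_\infty$ in $\rho$-Wasserstein distance at rate $c\gamma$, where $c>0$ depends on the Lipschitz constants of $\nabla U$ and on the diameter of the non-convexity region. Tracking the explicit constants in \cite{Eberle:2018} shows that the worst case behavior is $c = O(\exp(-\dim(\mathcal{Z})))$, because the diameter of the non-convex region scales linearly in $\dim(\mathcal{Z})$ and the contraction constant depends exponentially on this diameter. Equivalence of $\rho$ with the Euclidean metric then upgrades the contraction to standard $W_2$ on $(z,v)$: $W_2(\mu_t,\mu_\infty) \leq C' e^{-c\gamma t}$ for some $C'$ independent of $t$.

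Finally, projecting onto the $z$-marginal is a $1$-Lipschitz operation in $W_2$, so $W_2(\mathrm{law}(z_t),\nu) \leq C' e^{-c\gamma t}$. Pushing forward through the $L$-Lipschitz map $G$ and using the standard bound $W_2(G_\#\mu, G_\#\nu) \leq L\, W_2(\mu,\nu)$, together with the identification $G_\#\nu = \mathbb{Q}$ from \cref{prop:sampling}, gives
\[
	W_2(\mathbb{P}_t,\mathbb{Q}) \;=\; W_2(G_\#\mathrm{law}(z_t), G_\#\nu) \;\leq\; L\, C' e^{-c\gamma t},
\]
which is the claimed bound with $C=C'$.

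The main obstacle is the verification of Eberle's ``convexity at infinity'' condition for the composite potential $U$ and, more importantly, the careful bookkeeping of how the contraction constant $c$ depends on $\dim(\mathcal{Z})$, the Lipschitz constants of $E$, $G$, and the strong-convexity parameter of $-\log\eta$, in order to justify the announced $c = O(\exp(-\dim(\mathcal{Z})))$ scaling. The remaining steps (Fokker--Planck identification of the invariant distribution, Lipschitz push-forward in $W_2$, and equivalence of the Eberle semimetric with the Euclidean one) are standard.
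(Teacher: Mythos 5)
Your high-level route matches the paper's almost exactly: apply Eberle's contraction for the kinetic Langevin diffusion with potential $U(z)=-\log\eta(z)+E(G(z))$ on the $(z,v)$-space, identify the invariant measure so that its $z$-marginal is $\nu$, and push forward through the $L$-Lipschitz map $G$ to transfer the $W_2$ bound from $\mathrm{law}(z_t)$ versus $\nu$ to $\PP_t$ versus $\QQ$ at the cost of a factor $L$. The last step is done in the paper by a direct optimal-coupling argument rather than ``project then push forward,'' but that is cosmetic.

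The genuine gap is in your item~(ii), the verification that $U$ satisfies Eberle's confinement hypothesis. You bound $\nabla^2 U \succeq mI - \mathrm{const}\cdot L^2 I$ and then assert strong convexity of $U$ ``outside a compact set.'' This does not follow: the hypotheses give no relation between the strong-concavity modulus $m$ of $\log\eta$ and the Lipschitz constant $L$, so the Hessian estimate could be globally negative, and there is no reason the non-convex perturbation coming from $\nabla^2(E\circ G)$ should vanish or decay outside a bounded region -- it is merely bounded, and bounded everywhere. The paper avoids this by never invoking curvature at all. It proves a dissipativity inequality (its \cref{lemm:dissipative_energy}): write $U = u + w$ with $u=-\log\eta$ and $w = E\circ G$. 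Because $E$ and $G$ are $L$-Lipschitz, $w$ is Lipschitz, so $\lvert w(z)\rvert$ and $\lvert z^\top\nabla w(z)\rvert$ grow at most linearly in $\Vert z\Vert$; meanwhile strong convexity of $u$ forces $z^\top\nabla u(z)\gtrsim m\Vert z\Vert^2$ up to linear terms. The quadratic therefore dominates the linear perturbation for all $z$, yielding the $\lambda, A$ dissipativity inequality required by \cite[Corollary~2.6]{Eberle:2018}, together with $U$ bounded below and $\nabla U$ Lipschitz. In short, the correct input from the assumptions is the \emph{boundedness} of $\nabla(E\circ G)$ (from Lipschitzness of $E$ and $G$), not a Hessian comparison; with only the latter your argument fails whenever $m \leq O(L^2)$, which is not excluded. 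Fixing this step would bring your proof in line with the paper's.
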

\cref{prop:convergence_langevin} is proved in \cref{poof:prop:convergence_langevin} using \cite[Corollary 2.6]{Eberle:2018}, and implies that $(x_t)_{t\geq 0}$ converges at the same speed as $(z_t)_{t\geq 0 }$. 
When the dimension $q$ of $\mathcal{Z}$ is orders of magnitude smaller than the input space dimension $d$, the process $(x_t)_{t\geq 0}$ converges faster than typical sampling methods on $\mathcal{X}$, for which the exponent controlling the convergence rate is of order $O(\exp(-d))$.
\section{Related work}\label{sec:related_work}
{\bf Energy based models.}
Usually, energy based models are required to have a density w.r.t. to a Lebesgue measure, and do not use a learnable base measure; in other words, models are supported on the whole space. Various methods have been proposed in the literature to learn EBMs. 
\textit{Contrastive Divergence} \citep{Hinton:2002} approximates the gradient of the log-likelihood by sampling from the energy model with MCMC. More recently, \citep{Belanger:2016,Xie:2016,Xie:2017,Xie:2018a,Xie:2019,Tu:2018,Du:2019,Deng:2020} extend the idea using more sophisticated models and MCMC sampling strategies that lead to higher quality estimators.
\textit{Score Matching} \citep{Hyvarinen:2005a} calculates an alternative objective (the \textit{score}) to the log-likelihood which is independent of the partition function, and was recently used in the context non-parametric energy functions to provide estimators of the energy that are provably consistent \citep{Sriperumbudur:2013,Sutherland:2018,Arbel:2018a,Wenliang:2018}. 
In \textit{Noise-Contrastive Estimation} \citep{Gutmann:2012}, a classifier is trained to distinguish between samples from a fixed proposal distribution and the target $\PP$. This provides an estimate for the density ratio between the optimal energy model and the proposal distribution. 
In a similar spirit, \citet{Cranmer:2016} uses a classifier to learn likelihood ratios. Conversely, \citet{Grathwohl:2020} interprets the logits of a classifier as an energy model obtained after marginalization over the classes.  The resulting model is then trained using Contrastive Divergence.
In more recent work, \cite{Dai:2018,Dai:2019} exploit a dual formulation of the logarithm of the partition function as a supremum over the set of all probability distributions of some functional objective. %
\cite{Yu:2020} explore methods for using general f-divergences, such as Jensen-Shannon, to train EBMs.

{\bf Generative Adversarial Networks.}
Recent work proposes using the discriminator of a trained GAN to improve the generator quality. Rejection sampling \citep{Azadi:2019}  and  Metropolis-Hastings correction \citep{Turner:2019a,Neklyudov:2019} perform sampling directly on the high-dimensional input space without using gradient information provided by the discriminator. 
Moreover, the data distribution is assumed to admit a density w.r.t. the generator.
\cite{Ding:2019} perform sampling on the feature space of some auxiliary pre-trained network; while  \cite{Lawson:2020} treat the sampling procedure  as a model on its own, learned by maximizing the ELBO. In our case, no auxiliary model is needed.
In the present work, sampling doesn't interfere with training, in contrast to recently considered methods to optimize over the latent space during training \cite{Wu:2019c,Wu:2019b}. In \cite{Tanaka:2019}, the discriminator is viewed as an optimal transport map between the generator and the data distribution and is used to compute optimized samples from latent space.
This is in contrast to the diffusion-based sampling that we  consider. In \citep{Xie:2018b,Xie:2018c}, two independent models, a full support EBM and a generator network, are trained cooperatively using MCMC. By contrast, in the present work,  the energy and base are part of the same model, and the model support is lower-dimensional than the target space $\mathcal{X}$. 
While we do not address the mode collapse problem, \cite{Xu:2018a,Nguyen:2017} showed that KL-based losses are resilient to it thanks to the zero-avoiding property of the KL, a good sign for KALE which is derived from KL by Fenchel duality. 

The closest related approach appears in a study concurrent to the present work \citep{Che:2020}, where the authors propose to use Langevin dynamics on the latent space of a GAN generator, but with a different discriminator to ours (derived from the Jensen-Shannon divergence or a Wasserstein-based divergence). Our theory results showing the existence of the loss gradient (\cref{thm:kale_gan_convergence}), establishing weak convergence of distributions under KALE (\cref{prop:kale_extension}), and demonstrating consistency of the KALE estimator (\cref{sec:convergence_rates_kale}) should transfer to the JS and Wasserstein criteria used in that work. 
Subsequent to the present work, an alternative approach has been recently proposed, based on normalising flows, to learn both the low-dimensional support of the data and the density on this support \citep{Brehmer:2020}. This approach maximises the explicit likelihood of a  data projection  onto a learned manifold, and may be considered complementary to our approach.
\vspace{-0.3cm}
\section{Experiments}\label{sec:experiments} 
\vspace{-.6cm}
 \begin{figure}[H]
\begin{minipage}[c]{.3\textwidth}
    \includegraphics[width=1.\linewidth]{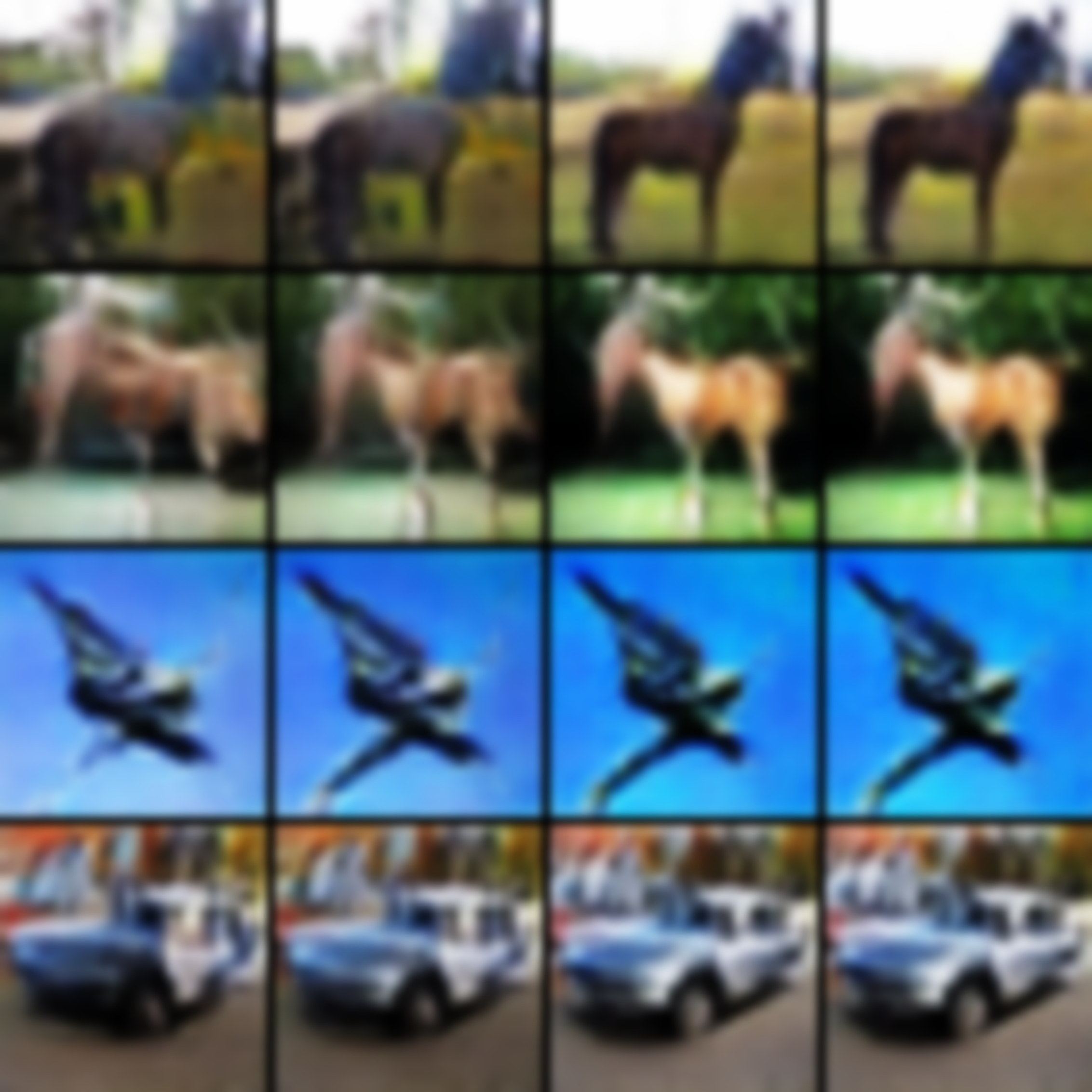}
     \captionof{figure}{\small 
    Samples at different iterations of the MCMC chain of \cref{alg:langevin} (left to right).}
     \label{fig:image_samples_MCMC}
\end{minipage}\hspace{.01\linewidth}
\begin{minipage}{.68\textwidth}
\subsection{Image generation.} 
	{\bf Experimental setting.} We train a GEBM on \textbf{unsupervised} image generation tasks, and compare the quality of generated samples with other methods using the FID score \citep{Heusel:2017} computed on $5\times 10^{4}$ generated samples. We consider \verb+CIFAR-10+ \citep{Krizhevsky:2009},  LSUN \citep{Yu:2015}, CelebA \citep{Liu:2015} and ImageNet \citep{Russakovsky:2014} all downsampled to 32x32 resolution to reduce computational cost. 
		We consider two network architectures for each of the base and energy, a smaller one (SNGAN ConvNet) and a larger one (SNGAN ResNet), both of which are from \cite{Miyato:2018}.
		For the base we used the SNGAN generator networks from \cite{Miyato:2018} with a $100$-dimensional Gaussian for the latent noise $\eta$. For the energy we used the SNGAN discriminator networks from \cite{Miyato:2018}. (Details of the networks in \cref{appendix:image_gen}).
	\end{minipage}
\end{figure}
\vspace{-.65cm}
We train the models for 150000 generator iterations using  \cref{alg:kale_gan}. 
	After training is completed, we rescale the energy by $\beta=100$ to get a \textbf{colder} version of the GEBM and  sample from it  using either  \cref{alg:overdamped_langevin} (ULA) or  \cref{alg:langevin} (KLA) with parameters $(\gamma=100, u=1)$.
         This colder temperature  leads to an improved FID score, and needs relatively few MCMC iterations, as shown in \cref{fig:sampling} of \cref{sec:image_generation_appendix}.  Sampler convergence to visually plausible  modes at low tempteratures is  demonstrated in \cref{fig:image_samples_MCMC}.
	We perform  $1000$ MCMC iterations with initial step-size of $\lambda=10^{-4}$ decreased by $10$ every $200$ iterations. As a baseline  we consider samples generated  from the base of the GEBM only (without using information from the energy) and call this  KALE-GAN. More details are given in \cref{sec:exp_details}.
	
       {\bf Results: }\cref{tab:fid_cifar10_2} shows that GEBM outperforms both KALE and standard GANs when using the same networks for the base/generator and energy/critic. Moreover, KALE-GAN matches the performance of a standard GAN (with Jensen-Shannon critic), showing that the improvement of GEBM cannot be explained by the switch from Jensen-Shannon to a KALE-based critic. Rather, the improvement is largely due to incorporating the energy function into the model, and sampling using \cref{alg:langevin}.
      
       This finding experimentally validates our claim that incorporating the energy improves the model, and that all else being equal, a GEBM outperforms a GAN with the same generator and critic architecture. Indeed, if the critic is not zero at convergence, then by definition it contains information on the remaining mismatch between the generator (base) and data mass, which the GEBM  incorporates, but the GAN does not.
       The GEBM also outperforms an EBM even when the latter was trained using a larger network (ResNet) with supervision (S) on ImageNet, which is an easier task ( \cite{Chen:2019c}). More comparisons on Cifar10 and ImageNet are provided in \cref{tab:further_exps} of \cref{sec:image_generation_appendix}.

\begin{table}[h]
\center
\resizebox{.9\width}{!}{
\begin{tabular}{cccc|cccc}
\midrule
\multirow{2}{*}{} & \multicolumn{3}{c|}{SNGAN (ConvNet)}   & \multicolumn{4}{c}{SNGAN (ResNet)}  \\ \cmidrule{2-8}
                  & GEBM       & KALE-GAN   & GAN  &   GEBM & KALE-GAN & GAN    &  EBM  \\\midrule
Cifar10     & 23.02      & 32.03      & 29.9 & \textbf{19.31} & 20.19 & $21.7$  & 38.2                                        \\ 
ImageNet          & \textbf{13.94} & 19.37 & 20.66    & 20.33 & 21.00 & 20.50      & 14.31 (S)                \\ \midrule 
\end{tabular} 
}
\caption{\small FID scores for two versions of SNGAN from \citep{Miyato:2018} on Cifar10  and ImageNet. GEBM:  training using \cref{alg:kale_gan} and sampling using \cref{alg:langevin}. KALE-GAN: Only the base of a GEBM is retained for sampling. GAN: training as in \citep{Miyato:2018} with $q=128$ for the latent dimension as it worked best. EBM: results from \cite{Du:2019} with {\em supervised} training on ImageNet (S).}
\label{tab:fid_cifar10_2}
\end{table} 

\vspace{-.5cm}
\begin{minipage}{.39\textwidth}
	\cref{tab:fid_cifar10} shows different sampling methods using the same trained networks (generator and critic), with KALE-GAN as a baseline. All energy-exploiting methods outperform the unmodified KALE-GAN with the same architecture.  That said, our method (both ULA and KLA) outperforms both (IHM) \citep{Turner:2019a} and (DOT) \citep{Tanaka:2019}, which both use the energy information.
\end{minipage}\hspace{.01\linewidth}
\begin{minipage}{.6\textwidth}
\begin{center}
	\label{fig:one}
\resizebox{.9\width}{!}{
\begin{tabular}{lcccc} 
\toprule
             &Cifar10 & LSUN & CelebA & ImageNet  \\
\midrule
KALE-GAN         & 32.03	& 21.67	& 6.91	& 19.37 \\
IHM 			& 30.47	& 20.63	& 6.39	& 18.15 \\
DOT 		& 26.35 & 20.41 & 5.93  & 16.21 \\
GEBM (ULA) 	& $\mathbf{23.02}$ & $16.23$	& $\textbf{5.21}$	& $14.00$\\
GEBM (KLA) & $24.29$ & $\mathbf{15.25}$ 	& $5.38$	& $\mathbf{13.94}$\\
\bottomrule 
\end{tabular}
}
\captionof{table}{\small FID scores for different sampling methods using the same trained SNGAN (ConvNet):  KALE-GAN as a baseline w/o critic information.}%
\label{tab:fid_cifar10}
\end{center}
\end{minipage}

In \cref{tab:fid_cifar10}, KLA was used in the high friction regime $\gamma=100$ and thus behaves like ULA. This allows to obtain sharper samples concentrated around the modes of the GEBM thus improving the FID score. %
If, instead, the goal is to encourage more exploration of the modes of the GEBM, then KLA with a smaller $\gamma$ is a better alternative than  ULA, as the former can explore multiple modes/images within the same MCMC chain, unlike (ULA): see \cref{fig:image_samples_lmc_1,fig:image_samples_lmc_2,fig:image_samples_all} of \cref{sec:image_generation_appendix}. Moving from one mode to another results in an increased FID score while between modes, however, which can be avoided by decreasing $\lambda$.

\subsection{Density Estimation}
 {\bf Motivation.} We next consider the particular setting where the likelihood of the model is well-defined, and admits a closed form expression. This is intended principally as a sanity check that our proposed training method in \cref{alg:kale_gan} succeeds in learning maximum likelihood solutions. 
Outside of this setting, closed form expressions of the normalizing constant are not available for generic GEBMs. While this is not an issue (since the proposed method doesn’t require a closed form expression for the normalizing constant), in this experiment only, we want to have access to closed form expressions, as they enable a direct comparison with other density estimation methods. 

{\bf Experimental setting.} To have a closed-form likelihood, we consider the case where the dimension of the latent space is equal to data-dimension, and choose the base $\BB$ of the GEBM to be a Real NVP (\cite{Ding:2019} ) with density $\exp(-r(x))$ and energy $E(x) = h(x)-r(x)$.
Thus, in this particular case, the GEBM has a well defined likelihood over the whole space, and we are precisely in the setting of \cref{prop:ebm_as_gebm}, which shows that this GEBM is equal to an EBM with density proportional to $exp(-h)$. We further require the EBM to be a second Real NVP so that its density has a closed form expression. 
We consider 5 UCI datasets \citep{Dheeru:2017} for which we use the same pre-processing as in \citep{Wenliang:2018}. For comparison, we train the EBM by direct maximum likelihood (ML) and contrastive divergence (CD). To train the GEBM,  we use \cref{alg:kale_gan}, which doesn't  directly exploit the closed-form expression of the likelihood (unlike direct ML). We thus use either \cref{eq:DV} (KALE-DV) or \cref{eq:estimator_variational_lower_bound} (KALE-F) to estimate the normalizing constant. More details are given in \cref{sec:density_estimation}.

{\bf Results.}
\cref{tab:UCI_dataset} reports the Negative Log-Likelihood (NLL) evaluated on the test set and corresponding to the best performance on the validation set. Training the GEBM using \cref{alg:kale_gan} leads to comparable performance to (CD) and (ML). As shown in \cref{fig:UCI} of \cref{sec:density_estimation_appendix},  (KALE-DV) and (KALE-F) maintain a small error gap between the training and test NLL and, as discussed in \cref{sec:energy_KALE,sec:Algorithms}, (KALE-F) leads to more accurate estimates of the log-partition function, with a relative error of order $0.1\%$ compared to $10\%$ for (KALE-DV).
\begin{table*}[h] 
\let\center\empty
\let\endcenter\relax
\centering 
\resizebox{.9\width}{!}{ %
\begin{tabular}{llllll}
\toprule
          &  \head{2.0cm}{RedWine   \small $ d=11, N\sim 10^3$} &  \head{2.0cm}{Whitewine \small $ d=11, N\sim 10^3$} & \head{2.0cm}{Parkinsons \small$d=15, N\sim 10^3$} & \head{2.0cm}{Hepmass \small $d=22, N\sim 10^5$} & \head{2.0cm}{Miniboone  \small $d=43, N\sim 10^4$} \\
\midrule
 \textbf{NVP w ML} &                                      11.98 &                                       13.05 &                                         14.5 &                                      24.89 &                                        42.28 \\
     \textbf{NVP w CD} &                                      11.88 &                                       13.01 &                                        14.06 &                                      $\mathbf{22.89}$ &                                        39.36 \\
    \textbf{NVP w KALE (DV)} &                                       11.6 &                                       12.77 &                                        $\mathbf{13.26}$ &                                      26.56 &                                        46.48 \\
     \textbf{NVP w KALE (F)} &                                      $\mathbf{11.19}$ &                                       $\mathbf{12.66}$ &                                        $\mathbf{13.26}$ &                                      24.66 &                                        $\mathbf{38.35}$ \\
\bottomrule
\end{tabular}

}
\caption{\small UCI datasets: Negative log-likelihood  computed on the test set and corresponding to the best performance on the validation set. Best method in boldface.}   
\label{tab:UCI_dataset} 
\end{table*}
\vspace{-.5cm}
\section{Acknowledgments}
\vspace{-.3cm}
We thank Mihaela Rosca for insightful discussions and Song Liu, Bo Dai and Hanjun Dai for pointing us to important related work.  

\clearpage

 \subsubsection*{References} 
\renewcommand\refname{\vskip -1cm}
\bibliographystyle{apalike}
\bibliography{references}

\begin{thebibliography}{}

\bibitem[Arbel and Gretton, 2018]{Arbel:2018a}
Arbel, M. and Gretton, A. (2018).
\newblock Kernel {Conditional} {Exponential} {Family}.
\newblock In {\em International {Conference} on {Artificial} {Intelligence} and
  {Statistics}}, pages 1337--1346.

\bibitem[Arbel et~al., 2019]{Arbel:2019a}
Arbel, M., Gretton, A., Li, W., and Montufar, G. (2019).
\newblock Kernelized {Wasserstein} {Natural} {Gradient}.

\bibitem[Arbel et~al., 2018]{Arbel:2018}
Arbel, M., Sutherland, D., Binkowski, M., and Gretton, A. (2018).
\newblock On gradient regularizers for mmd gans.
\newblock In {\em Advances in Neural Information Processing Systems 31}. Curran
  Associates, Inc.

\bibitem[Arjovsky et~al., 2017]{Arjovsky:2017a}
Arjovsky, M., Chintala, S., and Bottou, L. (2017).
\newblock {W}asserstein generative adversarial networks.
\newblock In {\em Proceedings of the 34th International Conference on Machine
  Learning}, volume~70 of {\em Proceedings of Machine Learning Research},
  International Convention Centre, Sydney, Australia. PMLR.

\bibitem[Arora et~al., 2017]{Arora:2017}
Arora, S., Ge, R., Liang, Y., Ma, T., and Zhang, Y. (2017).
\newblock Generalization and equilibrium in generative adversarial nets
  ({GAN}s).
\newblock In {\em Proceedings of the 34th International Conference on Machine
  Learning}, volume~70 of {\em Proceedings of Machine Learning Research}, pages
  224--232. PMLR.

\bibitem[Azadi et~al., 2019]{Azadi:2019}
Azadi, S., Olsson, C., Darrell, T., Goodfellow, I., and Odena, A. (2019).
\newblock Discriminator rejection sampling.
\newblock In {\em International Conference on Learning Representations}.

\bibitem[Belanger and McCallum, 2016]{Belanger:2016}
Belanger, D. and McCallum, A. (2016).
\newblock Structured prediction energy networks.
\newblock In {\em International Conference on Machine Learning}, pages
  983--992.

\bibitem[Betancourt et~al., 2017]{Betancourt:2017}
Betancourt, M., Byrne, S., Livingstone, S., and Girolami, M. (2017).
\newblock The geometric foundations of {Hamiltonian} {Monte} {Carlo}.
\newblock {\em Bernoulli}, 23(4A):2257--2298.

\bibitem[Bi{\'n}kowski et~al., 2018]{Binkowski:2018}
Bi{\'n}kowski, M., Sutherland, D.~J., Arbel, M., and Gretton, A. (2018).
\newblock Demystifying {MMD} {GAN}s.
\newblock In {\em International Conference on Learning Representations}.

\bibitem[Bottou et~al., 2017]{Bottou:2017}
Bottou, L., Arjovsky, M., Lopez-Paz, D., and Oquab, M. (2017).
\newblock Geometrical insights for implicit generative modeling.
\newblock In {\em Braverman Readings in Machine Learning}.

\bibitem[Brehmer and Cranmer, 2020]{Brehmer:2020}
Brehmer, J. and Cranmer, K. (2020).
\newblock Flows for simultaneous manifold learning and density estimation.
\newblock {\em arXiv preprint arXiv:2003.13913}.

\bibitem[Brock et~al., 2018]{Brock:2018}
Brock, A., Donahue, J., and Simonyan, K. (2018).
\newblock Large scale gan training for high fidelity natural image synthesis.
\newblock {\em arXiv preprint arXiv:1809.11096}.

\bibitem[Che et~al., 2020]{Che:2020}
Che, T., Zhang, R., Sohl-Dickstein, J., Larochelle, H., Paull, L., Cao, Y., and
  Bengio, Y. (2020).
\newblock Your {GAN} is secretly an energy-based model and you should use
  discriminator driven latent sampling.

\bibitem[Chen et~al., 2019]{Chen:2019c}
Chen, T., Zhai, X., Ritter, M., Lucic, M., and Houlsby, N. (2019).
\newblock Self-supervised gans via auxiliary rotation loss.
\newblock In {\em Proceedings of the IEEE Conference on Computer Vision and
  Pattern Recognition}, pages 12154--12163.

\bibitem[Cheng et~al., 2017]{Cheng:2017}
Cheng, X., Chatterji, N.~S., Bartlett, P.~L., and Jordan, M.~I. (2017).
\newblock Underdamped langevin mcmc: A non-asymptotic analysis.
\newblock {\em arXiv preprint arXiv:1707.03663}.

\bibitem[Chu et~al., 2020]{Chu:2019}
Chu, C., Minami, K., and Fukumizu, K. (2020).
\newblock Smoothness and stability in gans.
\newblock In {\em International Conference on Learning Representations}.

\bibitem[Cornish et~al., 2020]{Cornish:2020}
Cornish, R., Caterini, A.~L., Deligiannidis, G., and Doucet, A. (2020).
\newblock Relaxing bijectivity constraints with continuously indexed
  normalising flows.

\bibitem[Cranmer et~al., 2016]{Cranmer:2016}
Cranmer, K., Pavez, J., and Louppe, G. (2016).
\newblock Approximating likelihood ratios with calibrated discriminative
  classifiers.

\bibitem[Dai et~al., 2019a]{Dai:2018}
Dai, B., Dai, H., Gretton, A., Song, L., Schuurmans, D., and He, N. (2019a).
\newblock Kernel exponential family estimation via doubly dual embedding.
\newblock In {\em Proceedings of Machine Learning Research}, volume~89 of {\em
  Proceedings of Machine Learning Research}, pages 2321--2330. PMLR.

\bibitem[Dai et~al., 2019b]{Dai:2019}
Dai, B., Liu, Z., Dai, H., He, N., Gretton, A., Song, L., and Schuurmans, D.
  (2019b).
\newblock Exponential {Family} {Estimation} via {Adversarial} {Dynamics}
  {Embedding}.
\newblock {\em arXiv:1904.12083 [cs, stat]}.
\newblock arXiv: 1904.12083.

\bibitem[Davis and Drusvyatskiy, 2018]{Davis:2018}
Davis, D. and Drusvyatskiy, D. (2018).
\newblock Stochastic subgradient method converges at the rate
  \${O}(k{\textasciicircum}\{-1/4\})\$ on weakly convex functions.
\newblock {\em arXiv:1802.02988 [cs, math]}.

\bibitem[Del~Moral et~al., 2006]{Del-Moral:2006}
Del~Moral, P., Doucet, A., and Jasra, A. (2006).
\newblock Sequential monte carlo samplers.
\newblock {\em Journal of the Royal Statistical Society: Series B (Statistical
  Methodology)}, 68(3):411--436.

\bibitem[Deng et~al., 2020]{Deng:2020}
Deng, Y., Bakhtin, A., Ott, M., Szlam, A., and Ranzato, M. (2020).
\newblock Residual energy-based models for text generation.
\newblock {\em arXiv preprint arXiv:2004.11714}.

\bibitem[Dheeru and Taniskidou, 2017]{Dheeru:2017}
Dheeru, D. and Taniskidou, E.~K. (2017).
\newblock Uci machine learning repository.

\bibitem[Ding et~al., 2019]{Ding:2019}
Ding, X., Wang, Z.~J., and Welch, W.~J. (2019).
\newblock Subsampling {Generative} {Adversarial} {Networks}: {Density} {Ratio}
  {Estimation} in {Feature} {Space} with {Softplus} {Loss}.

\bibitem[Dinh et~al., 2016]{Dinh:2016}
Dinh, L., Sohl-Dickstein, J., and Bengio, S. (2016).
\newblock Density estimation using real nvp.

\bibitem[Donahue and Simonyan, 2019]{Donahue:2019}
Donahue, J. and Simonyan, K. (2019).
\newblock Large {Scale} {Adversarial} {Representation} {Learning}.
\newblock {\em arXiv:1907.02544 [cs, stat]}.
\newblock arXiv: 1907.02544.

\bibitem[Donsker and Varadhan, 1975]{Donsker:1975}
Donsker, M.~D. and Varadhan, S. R.~S. (1975).
\newblock Asymptotic evaluation of certain markov process expectations for
  large time, i.
\newblock 28(1):1--47.
\newblock \_eprint:
  https://onlinelibrary.wiley.com/doi/pdf/10.1002/cpa.3160280102.

\bibitem[Doucet et~al., 2001]{Doucet:2001}
Doucet, A., Freitas, N.~d., and Gordon, N. (2001).
\newblock {\em Sequential {Monte} {Carlo} {Methods} in {Practice}}.
\newblock Information {Science} and {Statistics}. Springer-Verlag, New York.

\bibitem[Du and Mordatch, 2019]{Du:2019}
Du, Y. and Mordatch, I. (2019).
\newblock Implicit generation and modeling with energy based models.
\newblock In {\em Advances in Neural Information Processing Systems 32}, pages
  3608--3618. Curran Associates, Inc.

\bibitem[Duchi et~al., 2011]{Duchi:2011}
Duchi, J., Hazan, E., and Singer, Y. (2011).
\newblock Adaptive {Subgradient} {Methods} for {Online} {Learning} and
  {Stochastic} {Optimization}.
\newblock {\em Journal of Machine Learning Research}, 12(Jul):2121--2159.

\bibitem[Eberle et~al., 2017]{Eberle:2018}
Eberle, A., Guillin, A., and Zimmer, R. (2017).
\newblock Couplings and quantitative contraction rates for {Langevin} dynamics.
\newblock {\em The Annals of Probability}.

\bibitem[Ekeland and T{\'e}mam, 1999]{Ekeland:1999}
Ekeland, I. and T{\'e}mam, R. (1999).
\newblock {\em Convex {Analysis} and {Variational} {Problems}}.
\newblock Classics in {Applied} {Mathematics}. Society for Industrial and
  Applied Mathematics.

\bibitem[Feydy et~al., 2019]{Feydy:2019}
Feydy, J., S{\'e}journ{\'e}, T., Vialard, F.-X., Amari, S.-i., Trouv{\'e}, A.,
  and Peyr{\'e}, G. (2019).
\newblock Interpolating between optimal transport and mmd using sinkhorn
  divergences.
\newblock In {\em The 22nd International Conference on Artificial Intelligence
  and Statistics}, pages 2681--2690.

\bibitem[Goodfellow et~al., 2014]{Goodfellow:2014}
Goodfellow, I., Pouget-Abadie, J., Mirza, M., Xu, B., Warde-Farley, D., Ozair,
  S., Courville, A., and Bengio, Y. (2014).
\newblock Generative adversarial nets.
\newblock In {\em Advances in Neural Information Processing Systems 27}, pages
  2672--2680. Curran Associates, Inc.

\bibitem[Grathwohl et~al., 2020]{Grathwohl:2020}
Grathwohl, W., Wang, K.-C., Jacobsen, J.-H., Duvenaud, D., Norouzi, M., and
  Swersky, K. (2020).
\newblock Your classifier is secretly an energy based model and you should
  treat it like one.

\bibitem[Grover et~al., 2019]{Grover:2019}
Grover, A., Song, J., Kapoor, A., Tran, K., Agarwal, A., Horvitz, E.~J., and
  Ermon, S. (2019).
\newblock Bias correction of learned generative models using likelihood-free
  importance weighting.
\newblock In {\em Advances in Neural Information Processing Systems 32}. Curran
  Associates, Inc.

\bibitem[Gulrajani et~al., 2017]{Gulrajani:2017}
Gulrajani, I., Ahmed, F., Arjovsky, M., Dumoulin, V., and Courville, A. (2017).
\newblock Improved training of wasserstein gans.
\newblock In {\em Proceedings of the 31st International Conference on Neural
  Information Processing Systems}, Red Hook, NY, USA. Curran Associates Inc.

\bibitem[Gutmann and Hyv{\"a}rinen, 2012]{Gutmann:2012}
Gutmann, M.~U. and Hyv{\"a}rinen, A. (2012).
\newblock Noise-contrastive estimation of unnormalized statistical models, with
  applications to natural image statistics.
\newblock {\em The Journal of Machine Learning Research}, 13(null):307--361.

\bibitem[Haugh, 2017]{Haugh:2017}
Haugh, M. (2017).
\newblock Mcmc and bayesian modeling.
\newblock {\em IEOR E4703 Monte-Carlo Simulation, Columbia University}.

\bibitem[Heusel et~al., 2017]{Heusel:2017}
Heusel, M., Ramsauer, H., Unterthiner, T., Nessler, B., and Hochreiter, S.
  (2017).
\newblock Gans trained by a two time-scale update rule converge to a local nash
  equilibrium.
\newblock In {\em Advances in Neural Information Processing Systems 30}, pages
  6626--6637. Curran Associates, Inc.

\bibitem[Hinton, 2002]{Hinton:2002}
Hinton, G.~E. (2002).
\newblock Training products of experts by minimizing contrastive divergence.
\newblock {\em Neural Computation}, 14(8):1771--1800.

\bibitem[Ho and Ermon, 2016]{Ho:2016}
Ho, J. and Ermon, S. (2016).
\newblock Generative adversarial imitation learning.
\newblock In {\em Advances in neural information processing systems}, pages
  4565--4573.

\bibitem[Hyv{\"a}rinen, 2005]{Hyvarinen:2005a}
Hyv{\"a}rinen, A. (2005).
\newblock Estimation of {Non}-{Normalized} {Statistical} {Models} by {Score}
  {Matching}.
\newblock {\em The Journal of Machine Learning Research}, 6:695--709.

\bibitem[Kanamori et~al., 2011]{Kanamori:2011}
Kanamori, T., Suzuki, T., and Sugiyama, M. (2011).
\newblock $ f $-divergence estimation and two-sample homogeneity test under
  semiparametric density-ratio models.
\newblock {\em IEEE Transactions on Information Theory}, 58(2):708--720.

\bibitem[Kingma and Ba, 2014]{Kingma:2014}
Kingma, D.~P. and Ba, J. (2014).
\newblock Adam: {A} {Method} for {Stochastic} {Optimization}.
\newblock {\em arXiv:1412.6980 [cs]}.
\newblock arXiv: 1412.6980.

\bibitem[Kingma and Welling, 2014]{kingma+welling:2014}
Kingma, D.~P. and Welling, M. (2014).
\newblock Auto-encoding variational {B}ayes.
\newblock {\em ICLR}.

\bibitem[Klenke, 2008]{Klenke:2008}
Klenke, A. (2008).
\newblock {\em Probability Theory: A Comprehensive Course}.
\newblock World Publishing Corporation.

\bibitem[Kodali et~al., 2017]{Kodali:2017}
Kodali, N., Abernethy, J., Hays, J., and Kira, Z. (2017).
\newblock On {Convergence} and {Stability} of {GANs}.
\newblock {\em arXiv:1705.07215 [cs]}.
\newblock arXiv: 1705.07215.

\bibitem[Krizhevsky, 2009]{Krizhevsky:2009}
Krizhevsky, A. (2009).
\newblock Learning multiple layers of features from tiny images.
\newblock Technical report, University of Toronto.

\bibitem[Lawson et~al., 2019]{Lawson:2020}
Lawson, J., Tucker, G., Dai, B., and Ranganath, R. (2019).
\newblock Energy-inspired models: Learning with sampler-induced distributions.
\newblock In {\em Advances in Neural Information Processing Systems 32}, pages
  8501--8513. Curran Associates, Inc.

\bibitem[LeCun et~al., 2006]{LecChoHadMArFu06}
LeCun, Y., Chopra, S., Hadsell, R., Ranzato, M., and Huang, F.-J. (2006).
\newblock {\em Predicting Structured Data}, chapter A Tutorial on Energy-Based
  Learning.
\newblock MIT Press.

\bibitem[Li et~al., 2017]{Li:2017a}
Li, C.-L., Chang, W.-C., Cheng, Y., Yang, Y., and Poczos, B. (2017).
\newblock Mmd gan: Towards deeper understanding of moment matching network.
\newblock In {\em Advances in Neural Information Processing Systems 30}, pages
  2203--2213. Curran Associates, Inc.

\bibitem[Liu et~al., 2017]{Liu:2017b}
Liu, S., Bousquet, O., and Chaudhuri, K. (2017).
\newblock Approximation and {Convergence} {Properties} of {Generative}
  {Adversarial} {Learning}.

\bibitem[Liu et~al., 2015]{Liu:2015}
Liu, Z., Luo, P., Wang, X., and Tang, X. (2015).
\newblock Deep learning face attributes in the wild.

\bibitem[Milgrom and Segal, 2002]{Milgrom:2002}
Milgrom, P. and Segal, I. (2002).
\newblock Envelope {Theorems} for {Arbitrary} {Choice} {Sets}.
\newblock {\em Econometrica}, 70.

\bibitem[Miyato et~al., 2018]{Miyato:2018}
Miyato, T., Kataoka, T., Koyama, M., and Yoshida, Y. (2018).
\newblock Spectral normalization for generative adversarial networks.
\newblock In {\em International Conference on Learning Representations}.

\bibitem[Nagarajan and Kolter, 2017]{Nagarajan:2017}
Nagarajan, V. and Kolter, J.~Z. (2017).
\newblock Gradient descent gan optimization is locally stable.

\bibitem[Neal, 2010]{Neal:2012}
Neal, R.~M. (2010).
\newblock Mcmc using hamiltonian dynamics.
\newblock {\em Handbook of Markov Chain Monte Carlo}.

\bibitem[Neklyudov et~al., 2019]{Neklyudov:2019}
Neklyudov, K., Egorov, E., and Vetrov, D. (2019).
\newblock The implicit metropolis-hastings algorithm.

\bibitem[Nguyen et~al., 2017]{Nguyen:2017}
Nguyen, T., Le, T., Vu, H., and Phung, D. (2017).
\newblock Dual discriminator generative adversarial nets.
\newblock In {\em Advances in Neural Information Processing Systems}, pages
  2670--2680.

\bibitem[Nguyen et~al., 2010]{nguyen2010estimating}
Nguyen, X., Wainwright, M.~J., and Jordan, M.~I. (2010).
\newblock Estimating divergence functionals and the likelihood ratio by convex
  risk minimization.
\newblock {\em IEEE Transactions on Information Theory}, 56(11):5847--5861.

\bibitem[Nowozin et~al., 2016]{Nowozin:2016}
Nowozin, S., Cseke, B., and Tomioka, R. (2016).
\newblock f-gan: Training generative neural samplers using variational
  divergence minimization.
\newblock In {\em Advances in Neural Information Processing Systems 29}, pages
  271--279. Curran Associates, Inc.

\bibitem[Oord et~al., 2016]{Oord:2016}
Oord, A. v.~d., Kalchbrenner, N., and Kavukcuoglu, K. (2016).
\newblock Pixel recurrent neural networks.
\newblock {\em arXiv preprint arXiv:1601.06759}.

\bibitem[Ostrovski et~al., 2018]{Ostrovski:2018}
Ostrovski, G., Dabney, W., and Munos, R. (2018).
\newblock Autoregressive quantile networks for generative modeling.
\newblock {\em arXiv preprint arXiv:1806.05575}.

\bibitem[Papamakarios et~al., 2017]{Papamakarios:2017}
Papamakarios, G., Pavlakou, T., and Murray, I. (2017).
\newblock Masked autoregressive flow for density estimation.
\newblock {\em NIPS}.

\bibitem[Radford et~al., 2015]{Radford:2015a}
Radford, A., Metz, L., and Chintala, S. (2015).
\newblock Unsupervised representation learning with deep convolutional
  generative adversarial networks.
\newblock {\em arXiv preprint arXiv:1511.06434}.

\bibitem[Raginsky et~al., 2017]{Raginsky:2017}
Raginsky, M., Rakhlin, A., and Telgarsky, M. (2017).
\newblock Non-convex learning via stochastic gradient langevin dynamics: a
  nonasymptotic analysis.

\bibitem[Retherford, 1978]{Retherford:1978}
Retherford, J.~R. (1978).
\newblock Review: J. diestel and j. j. uhl, jr., vector measures.
\newblock {\em Bull. Amer. Math. Soc.}, 84(4):681--685.

\bibitem[Rezende and Mohamed, 2015]{Rezende:2015}
Rezende, D.~J. and Mohamed, S. (2015).
\newblock Variational inference with normalizing flows.
\newblock In {\em Proceedings of the 32nd {International} {Conference} on
  {International} {Conference} on {Machine} {Learning} - {Volume} 37},
  {ICML}'15, pages 1530--1538. JMLR.org.

\bibitem[Rezende et~al., 2014]{rezende+al:2014:icml}
Rezende, D.~J., Mohamed, S., and Wierstra, D. (2014).
\newblock Stochastic backpropagation and approximate inference in deep
  generative models.
\newblock In {\em ICML}, pages 1278--1286.

\bibitem[Rockafellar, 1970]{Rockafellar:1997}
Rockafellar, R.~T. (1970).
\newblock {\em Convex analysis}.
\newblock Princeton Mathematical Series. Princeton University Press, Princeton,
  N. J.

\bibitem[Russakovsky et~al., 2014]{Russakovsky:2014}
Russakovsky, O., Deng, J., Su, H., Krause, J., Satheesh, S., Ma, S., Huang, Z.,
  Karpathy, A., Khosla, A., Bernstein, M., Berg, A.~C., and Fei-Fei, L. (2014).
\newblock {ImageNet} {Large} {Scale} {Visual} {Recognition} {Challenge}.
\newblock {\em arXiv:1409.0575 [cs]}.
\newblock arXiv: 1409.0575.

\bibitem[Sachs et~al., 2017]{Sachs:2017}
Sachs, M., Leimkuhler, B., and Danos, V. (2017).
\newblock Langevin {Dynamics} with {Variable} {Coefficients} and
  {Nonconservative} {Forces}: {From} {Stationary} {States} to {Numerical}
  {Methods}.
\newblock {\em Entropy}, 19.

\bibitem[Sanjabi et~al., 2018]{Sanjabi:2018}
Sanjabi, M., Ba, J., Razaviyayn, M., and Lee, J.~D. (2018).
\newblock On the convergence and robustness of training gans with regularized
  optimal transport.
\newblock In {\em Advances in Neural Information Processing Systems 31}, pages
  7091--7101. Curran Associates, Inc.

\bibitem[Siegmund, 1976]{Siegmund:1976}
Siegmund, D. (1976).
\newblock Importance sampling in the monte carlo study of sequential tests.
\newblock {\em The Annals of Statistics}, pages 673--684.

\bibitem[Simon-Gabriel and Scholkopf, 2018]{Simon-Gabriel:2016}
Simon-Gabriel, C.-J. and Scholkopf, B. (2018).
\newblock Kernel distribution embeddings: Universal kernels, characteristic
  kernels and kernel metrics on distributions.
\newblock {\em Journal of Machine Learning Research}, 19(44):1--29.

\bibitem[Simsekli et~al., 2020]{Simsekli:2020}
Simsekli, U., Zhu, L., Teh, Y.~W., and Gurbuzbalaban, M. (2020).
\newblock Fractional {Underdamped} {Langevin} {Dynamics}: {Retargeting} {SGD}
  with {Momentum} under {Heavy}-{Tailed} {Gradient} {Noise}.
\newblock {\em arXiv:2002.05685 [cs, stat]}.
\newblock arXiv: 2002.05685.

\bibitem[Sriperumbudur et~al., 2017]{Sriperumbudur:2013}
Sriperumbudur, B., Fukumizu, K., Kumar, R., Gretton, A., and Hyv{\"a}rinen, A.
  (2017).
\newblock Density estimation in infinite dimensional exponential families.
\newblock {\em Journal of Machine Learning Research}.

\bibitem[Sugiyama et~al., 2012]{Sugiyama:2012}
Sugiyama, M., Suzuki, T., and Kanamori, T. (2012).
\newblock {\em Density ratio estimation in machine learning}.
\newblock Cambridge University Press.

\bibitem[Sutherland et~al., 2018]{Sutherland:2018}
Sutherland, D., Strathmann, H., Arbel, M., and Gretton, A. (2018).
\newblock Efficient and principled score estimation with {Nystrom} kernel
  exponential families.
\newblock In {\em International {Conference} on {Artificial} {Intelligence} and
  {Statistics}}, pages 652--660.

\bibitem[Tanaka, 2019]{Tanaka:2019}
Tanaka, A. (2019).
\newblock Discriminator optimal transport.
\newblock In {\em Advances in {Neural} {Information} {Processing} {Systems}
  32}. Curran Associates, Inc.

\bibitem[Thekumparampil et~al., 2019]{Thekumparampil:2019}
Thekumparampil, K.~K., Jain, P., Netrapalli, P., and Oh, S. (2019).
\newblock Efficient algorithms for smooth minimax optimization.
\newblock In {\em Advances in Neural Information Processing Systems 32}, pages
  12680--12691. Curran Associates, Inc.

\bibitem[Thiry et~al., 2021]{Thiry:2020}
Thiry, L., Arbel, M., Belilovsky, E., and Oyallon, E. (2021).
\newblock The unreasonable effectiveness of patches in deep convolutional
  kernels methods.
\newblock In {\em International {Conference} on {Learning} {Representations}}.

\bibitem[Tsuboi et~al., 2009]{Tsuboi:2009}
Tsuboi, Y., Kashima, H., Hido, S., Bickel, S., and Sugiyama, M. (2009).
\newblock Direct density ratio estimation for large-scale covariate shift
  adaptation.
\newblock {\em Journal of Information Processing}, 17:138--155.

\bibitem[Tu and Gimpel, 2018]{Tu:2018}
Tu, L. and Gimpel, K. (2018).
\newblock Learning approximate inference networks for structured prediction.
\newblock {\em arXiv preprint arXiv:1803.03376}.

\bibitem[Turner et~al., 2019]{Turner:2019a}
Turner, R., Hung, J., Frank, E., Saatchi, Y., and Yosinski, J. (2019).
\newblock {M}etropolis-{H}astings generative adversarial networks.
\newblock In {\em Proceedings of the 36th International Conference on Machine
  Learning}, volume~97 of {\em Proceedings of Machine Learning Research}, pages
  6345--6353, Long Beach, California, USA. PMLR.

\bibitem[Villani, 2009]{Villani:2009}
Villani, C. (2009).
\newblock Optimal transport: Old and new.
\newblock Technical report.

\bibitem[Wenliang et~al., 2019]{Wenliang:2018}
Wenliang, L., Sutherland, D., Strathmann, H., and Gretton, A. (2019).
\newblock Learning deep kernels for exponential family densities.
\newblock In {\em International {Conference} on {Machine} {Learning}}, pages
  6737--6746.

\bibitem[Wu et~al., 2019a]{Wu:2019b}
Wu, Y., Donahue, J., Balduzzi, D., Simonyan, K., and Lillicrap, T. (2019a).
\newblock {LOGAN}: {Latent} {Optimisation} for {Generative} {Adversarial}
  {Networks}.
\newblock {\em arXiv:1912.00953 [cs, stat]}.
\newblock arXiv: 1912.00953.

\bibitem[Wu et~al., 2019b]{Wu:2019c}
Wu, Y., Rosca, M., and Lillicrap, T. (2019b).
\newblock Deep compressed sensing.
\newblock In {\em Proceedings of the 36th International Conference on Machine
  Learning}, volume~97 of {\em Proceedings of Machine Learning Research}, pages
  6850--6860, Long Beach, California, USA. PMLR.

\bibitem[Xie et~al., 2018a]{Xie:2018c}
Xie, J., Lu, Y., Gao, R., and Wu, Y.~N. (2018a).
\newblock Cooperative learning of energy-based model and latent variable model
  via mcmc teaching.
\newblock In {\em AAAI}, volume~1, page~7.

\bibitem[Xie et~al., 2018b]{Xie:2018b}
Xie, J., Lu, Y., Gao, R., Zhu, S.-C., and Wu, Y.~N. (2018b).
\newblock Cooperative training of descriptor and generator networks.
\newblock {\em IEEE transactions on pattern analysis and machine intelligence},
  42(1):27--45.

\bibitem[Xie et~al., 2016]{Xie:2016}
Xie, J., Lu, Y., Zhu, S.-C., and Wu, Y. (2016).
\newblock A theory of generative convnet.
\newblock In {\em International Conference on Machine Learning}, pages
  2635--2644.

\bibitem[Xie et~al., 2018c]{Xie:2018a}
Xie, J., Zheng, Z., Gao, R., Wang, W., Zhu, S.-C., and Nian~Wu, Y. (2018c).
\newblock Learning descriptor networks for 3d shape synthesis and analysis.
\newblock In {\em Proceedings of the IEEE conference on computer vision and
  pattern recognition}, pages 8629--8638.

\bibitem[Xie et~al., 2017]{Xie:2017}
Xie, J., Zhu, S.-C., and Nian~Wu, Y. (2017).
\newblock Synthesizing dynamic patterns by spatial-temporal generative convnet.
\newblock In {\em Proceedings of the ieee conference on computer vision and
  pattern recognition}, pages 7093--7101.

\bibitem[Xie et~al., 2019]{Xie:2019}
Xie, J., Zhu, S.-C., and Wu, Y.~N. (2019).
\newblock Learning energy-based spatial-temporal generative convnets for
  dynamic patterns.
\newblock {\em IEEE transactions on pattern analysis and machine intelligence}.

\bibitem[Xu et~al., 2018]{Xu:2018a}
Xu, K., Du, C., Li, C., Zhu, J., and Zhang, B. (2018).
\newblock Learning implicit generative models by teaching density estimators.
\newblock {\em arXiv preprint arXiv:1807.03870}.

\bibitem[Yu et~al., 2015]{Yu:2015}
Yu, F., Seff, A., Zhang, Y., Song, S., Funkhouser, T., and Xiao, J. (2015).
\newblock {LSUN}: Construction of a large-scale image dataset using deep
  learning with humans in the loop.

\bibitem[Yu et~al., 2020]{Yu:2020}
Yu, L., Song, Y., Song, J., and Ermon, S. (2020).
\newblock Training deep energy-based models with f-divergence minimization.

\bibitem[Zenke et~al., 2017]{Zenke:2017}
Zenke, F., Poole, B., and Ganguli, S. (2017).
\newblock Continual learning through synaptic intelligence.
\newblock {\em Proceedings of machine learning research}, 70:3987.

\bibitem[Zhang et~al., 2017]{Zhang:2017}
Zhang, P., Liu, Q., Zhou, D., Xu, T., and He, X. (2017).
\newblock On the {Discrimination}-{Generalization} {Tradeoff} in {GANs}.
\newblock {\em arXiv:1711.02771 [cs, stat]}.
\newblock arXiv: 1711.02771.

\end{thebibliography}

\clearpage 
 
\appendix

\section{ KL Approximate Lower-bound Estimate}\label{sec:kale}
We discuss the relation between KALE \cref{eq:KALE} and  the Kullback-Leibler divergence via Fenchel duality. Recall that a distribution $\pdist$ is said to admit a density w.r.t. $\qdist$ if there exists a real-valued measurable function $ r_0$ that is integrable w.r.t. $\qdist$ and satisfies  $d\pdist =r_0 d\qdist$.  Such a density is also called the \textit{Radon-Nikodym derivative} of $\pdist$ w.r.t. $\qdist$. In this case, we have:
\begin{align}\label{eq:KL}
	 \textsc{KL}(\pdist||\qdist) = \int r_0\log( r_0 )d\qdist.
\end{align}
\cite{nguyen2010estimating,Nowozin:2016} derived a variational formulation for the KL using Fenchel duality. By the duality theorem \citep{Rockafellar:1997}, the convex and lower semi-continuous function $\zeta:u\mapsto u\log(u)$ that appears in \cref{eq:KL} can be expressed as the supremum of a concave function:
\begin{align}\label{eq:fenchel_duality}
	\zeta(u) = \sup_{v} uv - \zeta^{\star}(v).
\end{align}
The function $\zeta^{\star}$ is called the \textit{Fenchel dual} and is defined as $\zeta^{\star}(v) = \sup_{u} uv - \zeta(u)$. By convention, the value of the objective is set to $-\infty$ whenever $u$ is outside of the domain of definition of $\zeta^{\star}$.  When $\zeta(u)=u\log(u)$, the Fenchel dual $\zeta^{\star}(v)$ admits a closed form expression of the form $\zeta^{\star}(v) = \exp(v-1)$. Using the expression of $\zeta$ in terms of its Fenchel dual $\zeta^{\star}$, it is possible to express $\textsc{KL}(\pdist||\qdist)$ as the supremum of the variational objective \cref{eq:variational_objective} over all  measurable functions $h$. 
\begin{align}\label{eq:variational_objective}
	\mathcal{F}(h) :=  -\int h d\pdist - \int \exp(-h) d\qdist  +1.
\end{align}
 \cite{nguyen2010estimating} provided the variational formulation  for the reverse KL using a different choice for $\zeta$: ($\zeta(u)=-\log(u)$). We refer to \citep{Nowozin:2016} for general $f$-divergences. 
Choosing a smaller set of functions $\mathcal{H}$ in the variational objective \cref{eq:variational_objective} will lead to a lower bound on the KL.
This is the \textit{KL Approximate Lower-bound Estimate} (KALE):  
\begin{align}\label{eq:KALE_appendix}
	\textsc{KALE}(\pdist||\qdist)=  \sup_{h\in \mathcal{H}}\mathcal{F}(h)
\end{align}
In general, $\textsc{KL}(\pdist||\qdist) \geq \textsc{KALE}(\pdist||\qdist)$. The bound is tight whenever the negative log-density $h_0= -\log r_0 $ belongs to $\mathcal{H}$; however, we do not require $r_0$ to be well-defined in general.  
Equation \cref{eq:KALE_appendix} has the advantage that it can be estimated using samples from $\pdist$ and $\qdist$. Given i.i.d. samples  $(X_1,...,X_N)$ and $(Y_1,...,Y_M)$ from $\pdist$ and $\qdist$, we denote by $\hat{\pdist}$ and $\hat{\qdist}$ the corresponding empirical distributions. A simple approach to estimate $\textsc{KALE}(\pdist||\qdist)$ is to use an $M$-estimator. This is achieved by optimizing the penalized objective 
\begin{align}\label{eq:KALE_log_density}
	\hat{h} :=\arg\max_{h\in \mathcal{H}} \widehat{\mathcal{F}}(h) -\frac{\lambda}{2}I^2(h),
\end{align}
where $\widehat{\mathcal{F}}$ is an empirical version of $\mathcal{F}$ and $I^{2}(h)$ is a penalty term that prevents overfitting due to finite samples. The penalty $I^{2}(h)$ acts as a regularizer favoring smoother solutions while the parameter $\lambda$ determines the strength of the smoothing and is chosen to decrease as the sample size $N$ and $M$ increase.
The $M$-estimator of $\textsc{KALE}(\pdist||\qdist)$ is obtained simply by plugging in $\hat{h}$ into the empirical objective $ \widehat{\mathcal{F}}(h)$: 
\begin{align}\label{eq:KALE_estimator}
	\widehat{\textsc{KALE}}(\pdist||\qdist) := \widehat{\mathcal{F}}(\hat{h}).
\end{align}
We defer the consistency analysis of \cref{eq:KALE_estimator} to \cref{sec:convergence_rates_kale} where we provide convergence rates in a setting where the set of functions  $\mathcal{H}$  is a Reproducing Kernel Hilbert Space and under weaker assumptions that were not covered by the framework of \cite{nguyen2010estimating}. 

\section{Convergence rates of KALE}\label{sec:convergence_rates_kale}
In this section, we provide a convergence rate for the estimator in \cref{eq:KALE_estimator} when $\mathcal{H}$ is an RKHS. The theory remains the same whether $\mathcal{H}$ contains constants or not. With this choice, the Representer Theorem allows us to reduce the potentially infinite-dimensional optimization problem in \cref{eq:KALE_log_density} to a convex finite-dimensional one. We further restrict ourselves to the \textit{well-specified} case where the density $r_0$ of $\pdist$ w.r.t. $\qdist$ is well-defined and belongs to $\mathcal{H},$ so that $\textsc{KALE}$ matches the KL.
While \cite{nguyen2010estimating} (Theorem 3) provides a convergence rate of $1/\sqrt{N}$ for a related $M$-estimator, this requires the density $r_0$ to be lower-bounded by 0 as well as (generally) upper-bounded. This can be quite restrictive if, for instance, $r_0$ is the density ratio of two gaussians. 
In \cref{thm:consistency}, we provide a similar convergence rate for the estimator defined in \cref{eq:KALE_estimator} without requiring $r_0$ to be bounded.  
We start by briefly introducing some notations, the working assumptions and the statement of the convergence result in \cref{sec:notation} and provide the proofs in \cref{sec:kale_proofs}.
\subsection{Statement of the result}\label{sec:notation}
We recall that an RKHS $\mathcal{H}$ of functions defined on a domain $\mathcal{X}\subset \mathbb{R}^d $ and with kernel $k$ is a Hilbert space with dot product $\langle ., .\rangle$, such that $y\mapsto k(x,y)$ belongs to $\mathcal{H}$ for any $x\in \mathcal{X},$ and
\begin{align}
	k(x,y) = \langle k(x,.),k(y,.)\rangle ,\qquad \forall x,y \in \mathcal{X}.
\end{align}
Any function $h$ in $\mathcal{H}$ satisfies the reproducing property $f(x) = \langle f,k(x,.)\rangle $ for any $x\in \mathcal{X}$.

Recall that $\text{KALE}(\pdist||\qdist)$ is obtained as an optimization problem 
\begin{align}\label{eq:general_problem}
	\text{KALE}(\pdist||\qdist) = \sup_{h\in \mathcal{H}  }  \mathcal{F}(h)    
\end{align}
  where $\mathcal{F}$ is given by:
\begin{align}
	\mathcal{F}(h) := -\int h d \pdist - \int \exp(-h) d\qdist +1.
\end{align}
Since the negative log density ratio $h_0$ is assumed to belong to $\mathcal{H}$, this directly implies that the supremum of $\mathcal{F}$ is achieved at $h_0$ and $\mathcal{F}(h_0) = \textsc{KALE}(\pdist||\qdist)$.
We are interested in estimating $\text{KALE}(\pdist||\qdist)$ using the empirical distributions $\hat{\pdist}$ and $\hat{\qdist}$, 
 \begin{align}
	\hat{\pdist} := \frac{1}{N}\sum_{n=1}^N \delta_{X_n},\qquad
	\hat{\qdist} := \frac{1}{N}\sum_{n=1}^N \delta_{Y_n},
\end{align}
where $(X_n)_{1\leq n\leq N}$ and $(Y_n)_{1\leq n\leq N}$ are i.i.d. samples from $\pdist$ and $\qdist$.  For this purpose we introduce the empirical objective functional,
\begin{align}
	\widehat{\mathcal{F}}(h) := -\int h d \hat{\pdist} - \int \exp(-h) d\hat{\qdist} +1.
\end{align}
The proposed estimator is obtained by solving a regularized empirical problem,
\begin{align}\label{eq:reg_population}
	 \sup_{h\in\mathcal{H}}  \widehat{\mathcal{F}}(h) - \frac{\lambda}{2} \Vert h \Vert^2,
\end{align}
with a corresponding population version,
\begin{align}\label{eq:reg_population_2}
	\sup_{h\in \mathcal{H}} ~ \mathcal{F}(h) - \frac{\lambda}{2}\Vert h \Vert^2.  
\end{align}
Finally, we introduce $D(h,\delta)$ and $\Gamma(h,\delta)$:
\begin{align}
	D(h,\delta) &=  \int \delta \exp(-h)d\qdist -\int \delta d\pdist,\\
	\Gamma(h,\delta) &= -\int \int_{0}^{1}(1-t) \delta^2\exp(-(h+t\delta)) d\qdist.
	\end{align}
The empirical versions of $D(h,\delta)$ and $\Gamma(h,\delta)$ are denoted $\hat{D}(h,\delta)$ and $\hat{\Gamma}(h,\delta)$. Later, we will show that $D(h,\delta)$ $\hat{D}(h,\delta)$ are  in fact the gradients of $\mathcal{F}(h)$ and $\widehat{\mathcal{F}}(h)$ along the direction $\delta$.

We state now the working assumptions:
\begin{assumplist3}
	\item \label{assump:hilbert} The supremum of $\mathcal{F}$ over $\mathcal{H}$ is attained at $h_0$.
\item\label{assump:continuity_differentiable} The following quantities are finite for some positive $\epsilon$:
\begin{align}
	&\int \sqrt{k(x,x)}~d\pdist(x),\\
	& \int \sqrt{k(x,x)} \exp((\Vert h_0\Vert+\epsilon)\sqrt{k(x,x)}) ~d\qdist(x),\\
	&\int k(x,x) \exp((\Vert h_0\Vert+\epsilon)\sqrt{k(x,x)}) ~d\qdist(x).
\end{align}
	\item \label{assum:uniqueness}  For any $h\in \mathcal{H}$, if $D(h,\delta)=0$ for all $\delta$ then $h=h_0$. 
\end{assumplist3} 

\begin{theorem}\label{thm:consistency}
Fix any $1>\eta>0$. Under \cref{assump:hilbert,assump:continuity_differentiable,assum:uniqueness}, and provided that $\lambda= \frac{1}{\sqrt{N}}$, it holds with probability at least $1-2\eta$ that
\begin{align}
	\vert \widehat{\mathcal{F}}(\hat{h}) - \mathcal{F}(h_0) \vert  \leq  \frac{M'(\eta, h_0)}{\sqrt{N}}
\end{align}
	for a constant $M'(\eta,h_0)$ that depends only on $\eta$ and $h_0$.
\end{theorem}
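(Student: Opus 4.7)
I would split
\[
\widehat{\mathcal{F}}(\hat{h}) - \mathcal{F}(h_0) \;=\; \underbrace{\bigl[\widehat{\mathcal{F}}(\hat h)-\mathcal F(\hat h)\bigr]}_{T_1} \;+\; \underbrace{\bigl[\mathcal F(\hat h)-\mathcal F(h_0)\bigr]}_{T_2}
\]
and control each piece at the $N^{-1/2}$ scale. By assumption (i) together with concavity of $\mathcal F$ (which comes from convexity of $v\mapsto e^{v-1}$), $h_0$ is a maximiser of $\mathcal F$ on $\mathcal H$, and assumption (iii) promotes it to the \emph{unique} critical point, so the Gateaux derivative vanishes: $D(h_0,\delta)=0$ for every $\delta\in\mathcal H$. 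A second-order Taylor expansion of $\mathcal F$ along the segment $h_0\to\hat h$ therefore collapses to $T_2=\Gamma(h_0,\hat h-h_0)\leq 0$, whose magnitude is an exponentially-weighted squared $L^2(\qdist)$ norm of $\hat h-h_0$; these weights are finite on a small RKHS ball around $h_0$ by the third integral of assumption (ii). Hence controlling $|T_2|$ reduces to an \emph{a priori} estimate on $\hat h-h_0$.

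To obtain that estimate I would first show that $\|\hat h\|$ is deterministically bounded with high probability. Inserting $h_0$ into the regularised problem yields
\[
\tfrac{\lambda}{2}\bigl(\|\hat h\|^2-\|h_0\|^2\bigr) \;\leq\; \widehat{\mathcal F}(\hat h)-\widehat{\mathcal F}(h_0) \;=\; \hat D(h_0,\hat h-h_0)+\hat\Gamma(h_0,\hat h-h_0),
\]
and $\hat\Gamma\leq 0$ reduces the task to bounding the linear form $\delta\mapsto\hat D(h_0,\delta)$, which is centred (since $D(h_0,\delta)=0$) and is a sample average of the feature functionals $\delta\mapsto\delta(Y_m)e^{-h_0(Y_m))}-\delta(X_n)$. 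The reproducing property $|\delta(x)|\leq\|\delta\|\sqrt{k(x,x)}$ combined with the three integrability conditions of (ii) (the first two give integrability of the feature functionals in $\mathcal H$, the third controls their variance) furnishes a Bernstein-type concentration in $\mathcal H$ of the shape $|\hat D(h_0,\delta)|\leq C(h_0,\eta)\|\delta\|/\sqrt N$ with probability at least $1-\eta$. Setting $\delta=\hat h-h_0$ and $\lambda=1/\sqrt N$ produces a quadratic inequality in $\|\hat h\|$ whose solution is $\|\hat h\|\leq R(h_0,\eta)$; re-inserting this bound then shows $|\hat\Gamma(h_0,\hat h-h_0)|=O(N^{-1/2})$, and transferring from the empirical weight $e^{-h_0}\diff\hat{\qdist}$ to the population weight $e^{-h_0}\diff\qdist=\diff\pdist$ by one further application of the same concentration yields $|T_2|=O(N^{-1/2})$. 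For $T_1$, once $\hat h$ is confined to a deterministic ball, a uniform concentration on that ball (covering numbers or Rademacher complexity of an RKHS ball, with envelope governed by the second integral of (ii)) bounds $|T_1|\leq C'(h_0,\eta)/\sqrt N$; a union bound combines the two $1-\eta$ events and absorption of constants into a single $M'(\eta,h_0)$ concludes.

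The principal obstacle is the exponential non-linearity: neither $e^{-h(Y)}$ nor the curvature integrand $\delta^2 e^{-(h+t\delta)(Y)}$ is bounded or sub-Gaussian, so classical Hoeffding or contraction bounds do not apply directly, and the framework of \citet{nguyen2010estimating} requires an \emph{a priori} bound on the density ratio that we precisely want to avoid. Assumption (ii) is designed to bypass this: the reproducing-kernel inequality $|h(x)|\leq\|h\|\sqrt{k(x,x)}$ turns RKHS-norm growth into uniform exponential moments on a neighbourhood of $h_0$, which is exactly what a Bernstein-type inequality in $\mathcal H$ needs in order to deliver the $N^{-1/2}$ rate. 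The bookkeeping heart of the proof is to track how every constant depends on $\|h_0\|$ and on the three moment integrals in (ii), and to choose $\lambda=1/\sqrt N$ so that regularisation bias and sampling variance balance.
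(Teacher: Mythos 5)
Your opening inequality chain
\begin{align}
\tfrac{\lambda}{2}\bigl(\|\hat h\|^2-\|h_0\|^2\bigr)\;\le\;\widehat{\mathcal F}(\hat h)-\widehat{\mathcal F}(h_0)\;\le\;\langle \nabla\widehat{\mathcal F}(h_0),\hat h-h_0\rangle
\end{align}
is exactly the paper's starting point, and your way of extracting a \emph{constant} a priori bound on $\|\hat h\|$ from the resulting quadratic inequality is essentially equivalent in power to the paper's detour through the deterministic regularised optimum $h_\lambda$, where $\mathcal B=\|\hat h-h_0\|\le \mathcal A(\lambda)+\lambda^{-1}\mathcal D(\lambda)=O(1)$. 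Both routes produce the same combination $\mathcal C\mathcal B + \lambda \mathcal B^2 = O(N^{-1/2})$ once $\lambda=N^{-1/2}$, so the $O(1)$ bound on the norm (not an $o(1)$ bound) is all either argument needs.

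Where you genuinely diverge from the paper is the decomposition $\widehat{\mathcal F}(\hat h)-\mathcal F(h_0)=T_1+T_2$ with $T_1=\widehat{\mathcal F}(\hat h)-\mathcal F(\hat h)$ and $T_2=\mathcal F(\hat h)-\mathcal F(h_0)$, and this is where a gap appears. Assumption \cref{assump:continuity_differentiable} only guarantees finiteness of the population exponential moments $\int \sqrt{k(x,x)}\,e^{(\|h_0\|+\epsilon)\sqrt{k(x,x)}}\,d\qdist$ and relatives in a ball of radius $\|h_0\|+\epsilon$. Your quadratic inequality confines $\hat h$ to a ball of radius $R(h_0,\eta)$ that has no reason to be $\le\|h_0\|+\epsilon$; consequently $\int e^{-\hat h}\,d\qdist$, and hence $\mathcal F(\hat h)$, may fail to be finite, so $T_1$ and $T_2$ are not separately well-defined objects under the stated hypotheses. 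You implicitly notice this when you write that the curvature weights are only finite ``on a small RKHS ball around $h_0$,'' but the a priori estimate you prove does not put $\hat h$ in that small ball. The paper sidesteps this precisely by only comparing $\widehat{\mathcal F}(\hat h)$ with the \emph{empirical} value $\widehat{\mathcal F}(h_0)$ (which is always finite) and by invoking population quantities solely at the deterministic points $h_0$ and $h_\lambda$, both of which have norm $\le\|h_0\|$ and so lie safely inside the assumed moment ball; the gap to $\mathcal F(h_0)$ is then a single fixed-function Chebyshev step.

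A second, independent difference: both $T_1$ and your transfer from $\hat\Gamma(h_0,\hat h-h_0)$ to $\Gamma(h_0,\hat h-h_0)$ are differences of empirical and population functionals evaluated at the \emph{random} $\hat h$, so they require uniform concentration over an RKHS ball (covering numbers or Rademacher complexity with an unbounded envelope). The paper only needs pointwise (Chebyshev) concentration of RKHS-valued sample means at the fixed $h_0$ and $h_\lambda$, which is substantially lighter machinery and also sidesteps the issue that Bernstein in an RKHS ordinarily requires almost-surely bounded feature maps, something assumption \cref{assump:continuity_differentiable} does not provide, whereas a second-moment Chebyshev bound does apply. If you want to retain your $T_1+T_2$ split you would need to (i) strengthen or re-derive a sharper bound keeping $\|\hat h\|\le\|h_0\|+\epsilon$, and (ii) carry out the uniform-concentration step; the paper's proof avoids both.
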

The assumptions in \cref{thm:consistency} essentially state that
the kernel associated to the RKHS $\mathcal{H}$ needs to satisfy some integrability requirements. That is to guarantee that the gradient $ \delta \mapsto \nabla\mathcal{F}(h)(\delta)$ and its empirical version are well-defined and continuous. In addition, the optimality condition  $\nabla\mathcal{F}(h)=0$ is assumed to characterize the global solution  $h_0$. This will be the case if the kernel is characteristic \cite{Simon-Gabriel:2016}.
The proof of \cref{thm:consistency}, in \cref{sec:kale_proofs}, takes advantage of the Hilbert structure of the set $\mathcal{H}$, the convexity of the functional $\mathcal{F}$ and the optimality condition $\nabla \widehat{\mathcal{F}}(\hat{h}) =\lambda \hat{h}$ of the regularized problem, all of which turn out to be sufficient for controlling the error of \cref{eq:KALE_estimator}.

\subsection{Proofs}\label{sec:kale_proofs}
We state now the proof of \cref{thm:consistency} with subsequent lemmas and propositions.
\begin{proof}[Proof of \cref{thm:consistency}]
	We begin with the following inequalities:
	\begin{align}
		  \frac{\lambda}{2}(\Vert \hat{h} \Vert^2- \Vert h_0 \Vert^2) \leq \widehat{\mathcal{F}}(\hat{h})- \widehat{\mathcal{F}}(h_0)\leq \langle \nabla \widehat{\mathcal{F}}(h_0), \hat{h}-h_0\rangle.
	\end{align}
	The first inequality is by definition of $\hat{h}$ while the second is obtained by concavity of $\widehat{\mathcal{F}}$.	 For simplicity we write $\mathcal{B} = \Vert \hat{h}-h_0 \Vert $ and  $\mathcal{C} = \Vert \nabla \widehat{\mathcal{F}}(h_0) -  \mathcal{L}(h_0) \Vert$. Using Cauchy-Schwarz and triangular inequalities, it is easy to see that
	\begin{align}
		-\frac{\lambda}{2}\left(\mathcal{B}^2 +2\mathcal{B}\Vert h_0\Vert \right)\leq \widehat{\mathcal{F}}(\hat{h})- \widehat{\mathcal{F}}(h_0) \leq \mathcal{C} \mathcal{B}.
	\end{align}
	Moreover, by triangular inequality, it holds that
	\[\mathcal{B} \leq \Vert h_{\lambda} - h_0 \Vert + \Vert \hat{h}-h_{\lambda} \Vert. \]
        \cref{prop:consistency_reg_population} ensures that $\mathcal{A}(\lambda) = \Vert h_{\lambda} - h_0 \Vert$ converges to $0$ as $\lambda\rightarrow 0$. Furthermore, by \cref{prop:estimator_consistency}, we have $ \Vert  \hat{h} -h_{\lambda} \Vert \leq \frac{1}{\lambda} \mathcal{D} $ where $\mathcal{D}(\lambda) =  \Vert \nabla\widehat{\mathcal{F}}(h_{\lambda})- \nabla\mathcal{L}(h_{\lambda}) \Vert$.
	Now choosing $\lambda = \frac{1}{\sqrt{N}}$ and applying Chebychev inequality in \cref{lem:chebychev}, it follows that for any $1>\eta>0,$ we have with probability greater than  $1-2\eta$ that both
	\begin{align}
		\mathcal{D}(\lambda) \leq \frac{C(\Vert h_0\Vert \eta)}{\sqrt{N}},\qquad \mathcal{C}\leq \frac{C(\Vert h_0\Vert,\eta)}{\sqrt{N}},
	\end{align}
	where $C(\Vert h_0\Vert,\eta)$ is defined in  \cref{lem:chebychev}. This allows to conclude that for any $\eta>0$, it holds with probability at least $1-2\eta$ that $\vert \widehat{\mathcal{F}}(\hat{h}) - \widehat{\mathcal{F}}(h_0) \vert  \leq  \frac{M'(\eta, h_0)}{\sqrt{N}}$  where $M'(\eta, h_0)$  depends only on $\eta$ and $h_0$.

\end{proof}

We proceed using the following lemma, which provides an expression for $D(h,\delta)$ and $\hat{D}(h,\delta)$ along with a probabilistic bound:
\begin{lemma}\label{lem:chebychev}
	Under \cref{assump:hilbert,assump:continuity_differentiable}, for any $h\in \mathcal{H}$ such that $\Vert h\Vert\leq \Vert h_0\Vert+\epsilon$, there exists $\mathcal{D}(h) $ in $\mathcal{H}$ satisfying
	\begin{align}
		D(h,\delta) = \langle \delta,\mathcal{D}(h)\rangle,	
	\end{align}
	and for any $h\in \mathcal{H}$, there exists $\widehat{\mathcal{D}}(h)$ satisfying
	\begin{align}
	\widehat{D}(h,\delta) = \langle \delta, \widehat{\mathcal{D}}(h)\rangle.
	\end{align}
		Moreover, for any $0<\eta<1$
		 and any $h\in \mathcal{H}$ such that $\Vert h\Vert \leq \Vert h_0\Vert +\epsilon :=M$, it holds with probability greater than $1-\eta$ that
		 \begin{align}
		 	\Vert \mathcal{D}(h)-  \widehat{\mathcal{D}}(h)\Vert \leq \frac{C(M,\eta)}{\sqrt{N}},
		 \end{align}
		 where $C(M,\eta)$ depends only on $M$ and $\eta$.
\end{lemma}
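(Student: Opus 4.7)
\textbf{Proof plan for \cref{lem:chebychev}.}
The plan is to identify $\mathcal{D}(h)$ and $\widehat{\mathcal{D}}(h)$ as Bochner integrals against the canonical feature map, and then apply Chebyshev's inequality in the Hilbert space $\mathcal{H}$. First I would use the reproducing property $\delta(x)=\langle \delta,k(x,\cdot)\rangle$ to rewrite
\begin{align*}
D(h,\delta) = \left\langle \delta,\;\int k(x,\cdot)\exp(-h(x))\,d\qdist(x) - \int k(x,\cdot)\,d\pdist(x)\right\rangle,
\end{align*}
which identifies
\begin{align*}
\mathcal{D}(h) := \int k(x,\cdot)\exp(-h(x))\,d\qdist(x) - \int k(x,\cdot)\,d\pdist(x).
\end{align*}
The analogous expression with $\pdist,\qdist$ replaced by the empirical measures defines $\widehat{\mathcal{D}}(h)$ as a finite (hence trivially well-defined) element of $\mathcal{H}$. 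The exchange of inner product and integral is justified by a Bochner-integrability check: using $\|k(x,\cdot)\|=\sqrt{k(x,x)}$ and the pointwise bound $|h(x)|\leq \|h\|\sqrt{k(x,x)}\leq (\|h_0\|+\epsilon)\sqrt{k(x,x)}$ (valid whenever $\|h\|\leq \|h_0\|+\epsilon$), the integrands are dominated by $\sqrt{k(x,x)}$ and $\sqrt{k(x,x)}\exp((\|h_0\|+\epsilon)\sqrt{k(x,x)})$ respectively, which are integrable by the first two conditions in \cref{assump:continuity_differentiable}.

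Next, I would decompose
\begin{align*}
\mathcal{D}(h)-\widehat{\mathcal{D}}(h) = \underbrace{\Bigl(\tfrac1N\sum_{n=1}^N \xi_n^{\qdist} - \mathbb{E}\xi^{\qdist}\Bigr)}_{=:S_N^{\qdist}} - \underbrace{\Bigl(\tfrac1N\sum_{n=1}^N \xi_n^{\pdist} - \mathbb{E}\xi^{\pdist}\Bigr)}_{=:S_N^{\pdist}},
\end{align*}
where $\xi_n^{\qdist}:= -k(Y_n,\cdot)\exp(-h(Y_n))$ and $\xi_n^{\pdist}:= k(X_n,\cdot)$ are i.i.d. centered (after subtracting their means) $\mathcal{H}$-valued random variables. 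For any $\mathcal{H}$-valued i.i.d. sum, independence gives $\mathbb{E}\|S_N\|^2\leq \mathbb{E}\|\xi\|^2/N$. For $\xi^{\qdist}$ this second moment equals $\int k(x,x)\exp(-2h(x))d\qdist(x)$, bounded by the third integral in \cref{assump:continuity_differentiable} (after absorbing the factor $2$ into a mild redefinition, or by the same domination since $-2h(x)\leq 2(\|h_0\|+\epsilon)\sqrt{k(x,x)}$). For $\xi^{\pdist}$ the second moment is $\int k(x,x) d\pdist(x)$, which is finite by Cauchy–Schwarz applied to the first integral. Denote by $V(M)$ an upper bound on these two second moments depending only on $M=\|h_0\|+\epsilon$.

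Finally, applying Chebyshev's inequality in $\mathcal{H}$ to $\|S_N^{\qdist}\|$ and $\|S_N^{\pdist}\|$ separately and combining with the triangle inequality yields, with probability at least $1-\eta$,
\begin{align*}
\|\mathcal{D}(h)-\widehat{\mathcal{D}}(h)\| \leq \frac{C(M,\eta)}{\sqrt{N}},
\end{align*}
with $C(M,\eta)$ of the form $c\sqrt{V(M)/\eta}$. I expect the main (and essentially only) obstacle is the bookkeeping around Bochner integrability: one must verify measurability of $x\mapsto k(x,\cdot)\exp(-h(x))$ as an $\mathcal{H}$-valued map and apply Fubini for the Bochner integral so that the exchange with $\langle \delta,\cdot\rangle$ is legitimate; all remaining steps are routine moment computations that hinge transparently on \cref{assump:continuity_differentiable}.
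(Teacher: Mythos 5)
Your proposal takes essentially the same route as the paper: realize $\mathcal{D}(h)$ as the kernel mean embedding $\mu_{\exp(-h)\qdist}-\mu_{\pdist}$ via Bochner integrability, so that $D(h,\delta)=\langle\delta,\mathcal{D}(h)\rangle$ follows from the reproducing property, and then apply Chebyshev's inequality in the RKHS. The paper's own proof is a two-sentence sketch of exactly this; you simply fill in the domination argument for integrability, the i.i.d.\ decomposition, and the variance computation. The subtlety you flag --- that bounding $\mathbb{E}\bigl[k(Y,Y)\exp(-2h(Y))\bigr]$ strictly requires $\exp\bigl(2M\sqrt{k(x,x)}\bigr)$ rather than the $\exp\bigl(M\sqrt{k(x,x)}\bigr)$ appearing in the third integral of \cref{assump:continuity_differentiable} --- is a real (if minor) imprecision, but it is equally present in the paper's argument and is harmless modulo the constant adjustment you suggest.
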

\begin{proof}
		First, we show that $\delta \mapsto D(h,\delta) $ is a bounded linear operator. Indeed, \cref{assump:continuity_differentiable} ensures that $ k(x,.)$  and $k(x,.)\exp(-h(x))  $ are Bochner integrable w.r.t. $\pdist$  and $\qdist$ (\cite{Retherford:1978}), hence $D(h,\delta)$ is obtained as
	\begin{align}
		D(h,\delta) := \langle \delta, \mu_{\exp(-h)\qdist} - \mu_{\pdist}\rangle,
	\end{align}
	where $\mu_{\exp(-h)\qdist} = \int k(x,.)\exp(-h(x))d\qdist$ and $\mu_{\pdist} = \int k(x,.)d\pdist$. Defining  $\mathcal{D}(h)$ to be $= \mu_{\exp(-h)\qdist} - \mu_{\pdist}$ leads to the desired result. $\widehat{\mathcal{D}}(h)$ is simply obtained by taking the empirical version of $\mathcal{D}(h)$.
	
	Finally, the probabilistic inequality is a simple consequence of Chebychev's inequality.
\end{proof}
The next lemma states that $\mathcal{F}(h)$ and $\widehat{\mathcal{F}}(h)$ are Frechet differentiable.
\begin{lemma}\label{lem:frechet_diff}
	Under \cref{assump:hilbert,assump:continuity_differentiable} 
	,  $h\mapsto \mathcal{F}(h)$ is Frechet differentiable on the open ball of radius $\Vert h_0\Vert +\epsilon $ while $h\mapsto \widehat{\mathcal{F}}(h)$ is Frechet differentiable on $\mathcal{H}$. Their gradients are given by $\mathcal{D}(h) $ and $ \widehat{\mathcal{D}}(h)$ as defined in  \cref{lem:chebychev},
	\begin{align}
		 \nabla \mathcal{F}(h) = \mathcal{D}(h),\qquad  \nabla \widehat{\mathcal{F}}(h)=  \widehat{\mathcal{D}}(h)	
		 \end{align}
\end{lemma}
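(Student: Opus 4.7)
The plan is to prove Frechet differentiability by a direct Taylor expansion of the exponential appearing in $\mathcal{F}$, identifying the first order term with $D(h,\delta)=\langle \delta,\mathcal{D}(h)\rangle$ (already established in \cref{lem:chebychev}) and controlling the second order remainder using the integrability conditions of \cref{assump:continuity_differentiable}. The argument for $\widehat{\mathcal{F}}$ is the same computation carried out under the empirical measures $\hat{\pdist},\hat{\qdist}$, where integrability is automatic because they are finite sums of Dirac masses.

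The starting point is the identity
\begin{align*}
\mathcal{F}(h+\delta)-\mathcal{F}(h) = -\int \delta\, d\pdist - \int \bigl(e^{-(h+\delta)}-e^{-h}\bigr)\, d\qdist.
\end{align*}
Applying the one-dimensional Taylor expansion with integral remainder to the map $t\mapsto e^{-(h(x)+t\delta(x))}$ on $[0,1]$ yields pointwise
\begin{align*}
e^{-(h+\delta)}-e^{-h} = -\delta\, e^{-h} + \int_0^1 (1-t)\,\delta^2\,e^{-(h+t\delta)}\,dt.
\end{align*}
Integrating against $\qdist$ and combining with the linear term in $\pdist$, the first order contribution is exactly $D(h,\delta)=\int \delta e^{-h}\, d\qdist - \int \delta\, d\pdist$, which by \cref{lem:chebychev} equals $\langle \delta,\mathcal{D}(h)\rangle$. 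The remaining second order piece coincides with $\Gamma(h,\delta)$ introduced in the main text, so
\begin{align*}
\mathcal{F}(h+\delta)-\mathcal{F}(h)-\langle \delta,\mathcal{D}(h)\rangle = \Gamma(h,\delta),
\end{align*}
and Frechet differentiability reduces to showing $\Gamma(h,\delta)=O(\Vert \delta\Vert^2)$.

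The bound on $\Gamma$ follows from the reproducing property $|\delta(x)|\leq \Vert \delta\Vert \sqrt{k(x,x)}$ together with the corresponding bound on $|h(x)+t\delta(x)|$. Given $h$ in the open ball of radius $\Vert h_0\Vert +\epsilon$, I pick $\alpha>0$ with $\Vert h\Vert +\alpha<\Vert h_0\Vert +\epsilon$; then for every $\Vert \delta\Vert<\alpha$ and every $t\in[0,1]$, $-(h(x)+t\delta(x))\leq (\Vert h_0\Vert +\epsilon)\sqrt{k(x,x)}$, hence
\begin{align*}
|\Gamma(h,\delta)| \leq \frac{1}{2}\Vert \delta\Vert^2 \int k(x,x)\,e^{(\Vert h_0\Vert +\epsilon)\sqrt{k(x,x)}}\,d\qdist(x).
\end{align*}
The integral is finite by the third bound in \cref{assump:continuity_differentiable}, proving Frechet differentiability of $\mathcal{F}$ on the open ball with gradient $\mathcal{D}(h)$.

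For $\widehat{\mathcal{F}}$, the same expansion gives $\widehat{\mathcal{F}}(h+\delta)-\widehat{\mathcal{F}}(h)-\langle \delta,\widehat{\mathcal{D}}(h)\rangle = \widehat{\Gamma}(h,\delta)$, and the remainder is now a finite sum: for $\Vert \delta\Vert \leq 1$, it is bounded by $\tfrac{1}{2}\Vert \delta\Vert^2 M^{-1}\sum_m k(Y_m,Y_m)\,e^{(\Vert h\Vert+1)\sqrt{k(Y_m,Y_m)}}$, which is finite for every $h\in \mathcal{H}$. Hence $\widehat{\mathcal{F}}$ is Frechet differentiable on all of $\mathcal{H}$ with gradient $\widehat{\mathcal{D}}(h)$. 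The main delicacy lies in the population case: the exponential $e^{-(h+t\delta)}$ must be uniformly integrable along the entire segment from $h$ to $h+\delta$, and restricting to the open ball $\Vert h\Vert<\Vert h_0\Vert+\epsilon$ is precisely what makes the third integrability bound of \cref{assump:continuity_differentiable} applicable uniformly in $t$.
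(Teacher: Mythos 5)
Your argument is correct and follows the same route as the paper: a second-order Taylor expansion of $\exp$ with integral remainder, identification of the first-order term with $D(h,\delta)=\langle\delta,\mathcal{D}(h)\rangle$ from the preceding lemma, and control of the remainder $\Gamma(h,\delta)$ via the integrability conditions. The paper treats the empirical case by the same one-line observation (finite sum of differentiable functions) and, for the population case, simply asserts that the third condition of \cref{assump:continuity_differentiable} gives $\Gamma(h,\delta)/\|\delta\|\to 0$; you go further by using the reproducing-property bounds $|\delta(x)|\le\|\delta\|\sqrt{k(x,x)}$ and $|h(x)+t\delta(x)|\le(\|h_0\|+\epsilon)\sqrt{k(x,x)}$ (valid uniformly in $t\in[0,1]$ once $\|\delta\|$ is small enough to stay strictly inside the ball), which yields the explicit quadratic bound
\begin{align}
|\Gamma(h,\delta)|\le\tfrac12\|\delta\|^2\int k(x,x)\,e^{(\|h_0\|+\epsilon)\sqrt{k(x,x)}}\,d\qdist(x).
\end{align}
This is a genuine improvement in rigor: it makes clear exactly which term of \cref{assump:continuity_differentiable} is being used and why the restriction to the open ball is needed (to keep the exponential integrable uniformly along the segment from $h$ to $h+\delta$), neither of which the paper spells out. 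Incidentally, your sign convention $\mathcal{F}(h+\delta)-\mathcal{F}(h)-D(h,\delta)=\Gamma(h,\delta)$ is the correct one; the paper's displayed identity has a sign typo ($-D$ instead of $+D$) that does not affect the conclusion.
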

\begin{proof}
	The empirical functional $\widehat{\mathcal{F}}(h)$ is differentiable since it is a finite sum of differentiable functions, and its gradient is simply given by $\widehat{\mathcal{D}}(h)$. For the population functional, we use second order Taylor expansion of $\exp$ with integral remainder, which gives
	\begin{align}
		\mathcal{F}(h+\delta)=\mathcal{F}(h)-D(h,\delta) + \Gamma(h,\delta).
	\end{align}
	By \cref{assump:continuity_differentiable} we know that $\frac{\Gamma(h,\delta)}{\Vert \delta\Vert}$  converges to $0$ as soon as $\Vert \delta \Vert \rightarrow 0$. This allows to directly conclude that $\mathcal{F}$ is Frechet differentiable, with differential given by $\delta\mapsto D(h,\delta)$. By \cref{lem:chebychev}, we conclude the existence of a gradient $\nabla \mathcal{F}(h)$ which is in fact given by  $\nabla \mathcal{F}(h) = \mathcal{D}(h)$.

\end{proof}
From now on, we will only use the notation $\nabla \mathcal{F}(h)$ and $\nabla \widehat{\mathcal{F}}(h)$ to refer to the gradients of $\mathcal{F}(h)$ and $\widehat{\mathcal{F}}(h)$.
The following lemma states that \cref{eq:reg_population,eq:reg_population_2} have a unique global optimum, and gives a first order optimality condition. 
\begin{lemma}
The problems \cref{eq:reg_population,eq:reg_population_2} admit  unique  global solutions $\hat{h}$ and $h_{\lambda}$ in $\mathcal{H}$. Moreover, the following first order optimality conditions hold:
\begin{align}
	\lambda \hat{h} = \nabla \widehat{\mathcal{F}}(\hat{h}),\qquad \lambda h_{\lambda} = \nabla \mathcal{F}(h_{\lambda}).
\end{align}
\end{lemma}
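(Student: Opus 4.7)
The plan is to combine concavity of $\mathcal{F}$ and $\widehat{\mathcal{F}}$ with the strong convexity of the quadratic penalty; uniqueness of the maximizers will follow immediately from strong concavity of the regularized objectives, while the first-order conditions will then follow from the Frechet differentiability already established in \cref{lem:frechet_diff}, provided one can show the maximizers lie in the domain of differentiability.

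First I would note that $\mathcal{F}$ and $\widehat{\mathcal{F}}$ are concave in $h$: the terms $-\int h\,d\pdist$ and $-\frac{1}{N}\sum_n h(X_n)$ are linear, while $h\mapsto \int \exp(-h)\,d\qdist$ and its empirical counterpart are convex, as integrals of the convex function $u\mapsto \exp(-u)$ composed with a linear evaluation map. Subtracting $\frac{\lambda}{2}\|h\|^2$ therefore makes both regularized objectives $\lambda$-strongly concave on their respective domains, and any maximizer is automatically unique.

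For the empirical problem the argument is short: by \cref{lem:frechet_diff}, $\widehat{\mathcal{F}}$ is finite-valued and Frechet differentiable on all of $\mathcal{H}$, and the reproducing property $|h(x)|\leq \|h\|\sqrt{k(x,x)}$ shows that $\widehat{\mathcal{F}}(h)$ grows at most linearly in $\|h\|$. Hence the regularized empirical objective is strongly concave, continuous, and coercive on $\mathcal{H}$, so a unique global maximizer $\hat{h}$ exists, and the stationarity condition $\nabla\widehat{\mathcal{F}}(\hat{h})=\lambda\hat{h}$ follows by differentiating at the interior maximum.

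The population problem is more delicate because \cref{assump:continuity_differentiable} only guarantees that $\mathcal{F}$ is well-defined and Frechet differentiable on the open ball $B$ of radius $\|h_0\|+\epsilon$, and I expect this to be the main obstacle. The key step is an a priori norm bound on any near-optimal point: if $h\in \mathcal{H}$ satisfies $\mathcal{F}(h)-\frac{\lambda}{2}\|h\|^2 \geq \mathcal{F}(h_0)-\frac{\lambda}{2}\|h_0\|^2$, then combining this with the global optimality of $h_0$ from \cref{assump:hilbert} (which forces $\mathcal{F}(h)\leq \mathcal{F}(h_0)$) yields
$$\tfrac{\lambda}{2}\bigl(\|h_0\|^2-\|h\|^2\bigr)\geq \mathcal{F}(h_0)-\mathcal{F}(h)\geq 0,$$
so $\|h\|\leq \|h_0\|$. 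Hence the supremum over $\mathcal{H}$ equals the supremum over the closed ball of radius $\|h_0\|$, which lies strictly inside $B$. On this weakly compact convex set, $\mathcal{F}$ is concave and strongly continuous, hence weakly upper semicontinuous (its upper level sets are convex and strongly closed, hence weakly closed by Mazur's theorem), so the strongly concave regularized objective attains its unique maximum at some interior point $h_\lambda$ with $\|h_\lambda\|\leq \|h_0\|$; Frechet differentiability at $h_\lambda$ then gives $\nabla \mathcal{F}(h_\lambda)=\lambda h_\lambda$, completing the proof.
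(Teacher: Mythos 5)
Your proposal is correct; the uniqueness, the empirical case, and the derivation of the first-order conditions all match the paper's argument (which likewise observes that the empirical problem is strongly concave and continuous on all of $\mathcal{H}$, and that the a priori bound $\|h\|\leq\|h_0\|$ pins any near-optimal point inside the domain where $\mathcal{F}$ is finite and Fr\'echet differentiable). The genuine difference lies in how you establish that the population objective attains its supremum on the closed ball of radius $\|h_0\|$. The paper argues concretely: it extracts a weakly convergent minimising sequence $f_k\rightharpoonup h_\lambda$, uses the reproducing property to pass from weak convergence to pointwise convergence $f_k(x)\to h_\lambda(x)$, and then invokes the dominated convergence theorem (the domination coming from \cref{assump:continuity_differentiable}) to get $\mathcal{F}(f_k)\to\mathcal{F}(h_\lambda)$. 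You instead use the abstract functional-analysis route: concavity plus strong continuity of $\mathcal{F}$ (inherited from Fr\'echet differentiability on the open ball) makes its superlevel sets convex and norm-closed, Mazur's theorem makes them weakly closed, hence $\mathcal{F}$ is weakly upper semicontinuous and attains its maximum on the weakly compact ball. Both approaches work: the paper's is tied to the RKHS structure and has the minor advantage of exhibiting the convergence explicitly, while yours is shorter and would carry over verbatim to any reflexive Banach space where $\mathcal{F}$ happens to be concave and norm-continuous on a bounded domain. One small point worth making explicit in your version is that the regularised objective itself is weakly upper semicontinuous on the ball: this needs you to add that $-\tfrac{\lambda}{2}\|h\|^2$ is weakly upper semicontinuous, which follows from the standard weak lower semicontinuity of the norm; after that the weak-compactness extremum argument closes cleanly.
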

\begin{proof}
	For \cref{eq:reg_population}, existence and uniqueness of a minimizer $\hat{h}$ is a simple consequence of continuity and strong concavity of the regularized objective. We now show the existence result for \cref{eq:reg_population_2}.
	Let's introduce $\mathcal{G}_{\lambda}(h) = -\mathcal{F}(h)+\frac{\lambda}{2}\Vert h\Vert^2$ for simplicity. Uniqueness is a  consequence of the strong convexity of $\mathcal{G}_{\lambda}$. For the existence,  consider a sequence of elements $f_k\in \mathcal{H}$ such that $\mathcal{G}_{\lambda}(f_k)\rightarrow \inf_{h\in \mathcal{H}} \mathcal{G}_{\lambda}(h)$.   If $h_0$ is not the global solution, then it must hold for $k$ large enough that $\mathcal{G}_{\lambda}(f_k)\leq \mathcal{G}_{\lambda}(h_0)$.
	We also know that $\mathcal{F}(f_k)\leq \mathcal{F}(h_0)$, hence, it is easy to see that $ \Vert f_k\Vert \leq \Vert  h_0\Vert $ for $k$ large enough. This implies that $f_k$ is a bounded sequence, therefore it admits a weakly convergent sub-sequence by weak compactness. Without loss of generality we assume that $f_k$ weakly converges to some element $h_{\lambda}\in \mathcal{H}$ and that $\Vert f_k\Vert\leq  \Vert h_0 \Vert $. Hence, $\Vert h_{\lambda} \Vert \leq \lim\inf_{k} \Vert f_k\Vert\leq \Vert h_0\Vert $. Recall now that by definition of weak convergence, we have $f_k(x)\rightarrow_k h_{\lambda}(x)$ for all $x\in \mathcal{X}$.  By \cref{assump:continuity_differentiable}, we can apply the dominated convergence theorem to ensure that $\mathcal{F}(f_k)\rightarrow \mathcal{F}(h_{\lambda})$. Taking the limit of $\mathcal{G}_{\lambda}{f_k}$, the following inequality holds:
	\begin{align}
		\sup_{h\in \mathcal{H}} \mathcal{G}_{\lambda}(h) = \lim\sup_{k}\mathcal{G}_{\lambda}(f_k)
\leq \mathcal{G}_{\lambda}(h_{\lambda}).
	\end{align}
	Finally, by \cref{lem:frechet_diff} we know that $\mathcal{F}$ is Frechet differentiable, hence we can use \cite{Ekeland:1999} (Proposition 2.1) to conclude that $\nabla \mathcal{F}(h_{\lambda}) = \lambda h_{\lambda}$.  We use exactly the same arguments for \cref{eq:reg_population}. 
\end{proof}
Next, we show that $h_{\lambda}$ converges towards $h_0$ in $\mathcal{H}$.
\begin{lemma}\label{prop:consistency_reg_population}
	Under \cref{assump:hilbert,assum:uniqueness,assump:continuity_differentiable} it holds that:
		\begin{align}
			\mathcal{A}(\lambda):= \Vert h_{\lambda}-h_0 \Vert \rightarrow 0.
		\end{align}
\end{lemma}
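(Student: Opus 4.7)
The plan is a standard Tikhonov-type argument in Hilbert space, exploiting strict concavity of $\mathcal F$, the already-established Fréchet differentiability, and the uniqueness condition \cref{assum:uniqueness}. The main obstacle is upgrading weak convergence of a subsequence to strong (norm) convergence, which is what the statement actually requires.

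First, compare $h_\lambda$ with $h_0$ through the optimality defining $h_\lambda$:
\begin{align*}
\mathcal F(h_\lambda)-\tfrac{\lambda}{2}\|h_\lambda\|^2
\;\ge\; \mathcal F(h_0)-\tfrac{\lambda}{2}\|h_0\|^2.
\end{align*}
Since $h_0$ maximizes $\mathcal F$ by \cref{assump:hilbert}, we have $\mathcal F(h_\lambda)\le \mathcal F(h_0)$, which immediately yields the two key bounds
\begin{align*}
\|h_\lambda\|\;\le\;\|h_0\|,\qquad
0\;\le\;\mathcal F(h_0)-\mathcal F(h_\lambda)\;\le\;\tfrac{\lambda}{2}\bigl(\|h_0\|^2-\|h_\lambda\|^2\bigr)\;\le\;\tfrac{\lambda}{2}\|h_0\|^2.
\end{align*}
In particular, $\mathcal F(h_\lambda)\to \mathcal F(h_0)$ as $\lambda\to 0$, and $(h_\lambda)$ stays in the closed ball of radius $\|h_0\|$.

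Next, extract a subsequence $h_{\lambda_k}$ converging weakly to some $h^\star\in\mathcal H$ by weak compactness of closed balls in a Hilbert space. By the reproducing property, weak convergence in $\mathcal H$ implies pointwise convergence $h_{\lambda_k}(x)\to h^\star(x)$ for every $x$. Under \cref{assump:continuity_differentiable}, the integrand $\exp(-h_{\lambda_k}(x))$ is dominated by $\exp((\|h_0\|+\epsilon)\sqrt{k(x,x)})$, which is $\qdist$-integrable, and $|h_{\lambda_k}(x)|$ is dominated by $\|h_0\|\sqrt{k(x,x)}$, which is $\pdist$-integrable. Dominated convergence then gives $\mathcal F(h_{\lambda_k})\to\mathcal F(h^\star)$, so $\mathcal F(h^\star)=\mathcal F(h_0)$. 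Strict concavity of $\mathcal F$ (from the exponential term), together with \cref{assum:uniqueness} which forces $h_0$ to be the only critical point, implies $h_0$ is the unique maximizer; hence $h^\star=h_0$.

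Finally, upgrade weak to strong convergence. By weak lower semicontinuity of the norm, $\|h_0\|=\|h^\star\|\le \liminf_k\|h_{\lambda_k}\|$, and combined with the upper bound $\|h_{\lambda_k}\|\le\|h_0\|$ we conclude $\|h_{\lambda_k}\|\to\|h_0\|$. In a Hilbert space, weak convergence plus convergence of norms implies strong convergence, so $h_{\lambda_k}\to h_0$ in $\mathcal H$. Since every weak cluster point of $(h_\lambda)$ must equal $h_0$ by the same argument, the whole net converges strongly: $\mathcal A(\lambda)=\|h_\lambda-h_0\|\to 0$.
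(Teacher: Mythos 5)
Your argument is correct, but it takes a genuinely different route from the paper's. Where you use convergence of the \emph{objective values} $\mathcal F(h_\lambda)\to\mathcal F(h_0)$, pass this to the weak cluster point $h^\star$ by dominated convergence, and then conclude $h^\star=h_0$ because a maximizer of a differentiable concave functional is a critical point and \cref{assum:uniqueness} makes that critical point unique, the paper instead works directly at the level of \emph{gradients}: it invokes the first-order optimality condition $\lambda\,h_\lambda=\nabla\mathcal F(h_\lambda)$, notes the left side tends to $0$, and shows under \cref{assump:continuity_differentiable} that $\nabla\mathcal F(h_{\lambda_m})\rightharpoonup\nabla\mathcal F(h^\star)$ along a weakly convergent subsequence, forcing $\nabla\mathcal F(h^\star)=0$ and hence $h^\star=h_0$ by \cref{assum:uniqueness}. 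Both arguments lean on the same domination bounds (both the first and second terms of $\mathcal F$ or of its gradient must be controlled via $|h(x)|\le\|h\|\sqrt{k(x,x)}$ and the integrability in \cref{assump:continuity_differentiable}), and both finish with the identical ``weak convergence plus norm convergence implies strong convergence'' upgrade via weak lower-semicontinuity of the norm. Your version is arguably a touch more elementary because it avoids regarding $\nabla\mathcal F$ as a map into $\mathcal H$ and reasoning about the weak continuity of that map; the paper's version is a touch more direct since it uses the Euler--Lagrange equation $\lambda h_\lambda=\nabla\mathcal F(h_\lambda)$ that was already established and does not need to track value convergence.

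One small imprecision worth flagging: you invoke ``strict concavity of $\mathcal F$'' as part of the justification that $h^\star=h_0$. Strict concavity of $h\mapsto-\int\exp(-h)\,d\qdist$ \emph{as a functional on the RKHS} actually requires that no two distinct elements of $\mathcal H$ agree $\qdist$-almost everywhere, which is a nontrivial condition on the kernel (essentially that the embedding into $L^2(\qdist)$ is injective); the paper does not assume it. Fortunately you do not need it: mere concavity plus Fréchet differentiability implies that any maximizer satisfies $\nabla\mathcal F(h^\star)=0$ (and $\|h^\star\|\le\|h_0\|<\|h_0\|+\epsilon$ puts $h^\star$ inside the ball where differentiability was established), and \cref{assum:uniqueness} alone then delivers $h^\star=h_0$. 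Dropping the strict-concavity remark makes the argument fully watertight and reveals that it matches the assumption set of the lemma exactly.
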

\begin{proof}
We will first prove that $h_{\lambda}$ converges weakly towards $h_0,$ and then conclude that it must also converge strongly. We  start with the following inequalities:
	\begin{align}
		0\geq \mathcal{F}(h_{\lambda})-\mathcal{F}(h_0) \geq \frac{\lambda}{2}(\Vert h_{\lambda}\Vert^2 - \Vert h_0\Vert^2).
	\end{align}
	These are simple consequences of the definitions of $h_{\lambda}$ and $h_0$ as optimal solutions to \cref{eq:reg_population,eq:general_problem}. This implies that $\Vert h_{\lambda} \Vert$ is always bounded by $\Vert h_0 \Vert$. Consider now an arbitrary sequence $(\lambda_m)_{m\geq 0}$ converging to $0$. Since $\Vert h_{\lambda_m} \Vert$ is bounded by $\Vert h _0 \Vert$, it follows by weak-compactness of balls in $\mathcal{H}$ that $h_{\lambda_m}$ admits a weakly convergent sub-sequence. Without loss of generality we can assume that  $h_{\lambda_m}$ is itself weakly converging towards an element $h^{*}$. We will show now that $h^{*}$ must be equal to $h_0$. Indeed, by optimality of $h_{\lambda_{m}}$, it must hold that
	\begin{align}
		\lambda_{m} h_{\lambda_{m}} = \nabla \mathcal{F}(h_{m}).
	\end{align}
	This implies that $\nabla \mathcal{F}(h_{m})$ converges weakly to $0$. On the other hand, by \cref{assump:continuity_differentiable}, we can conclude that  $\nabla \mathcal{F}(h_{m})$ must also converge weakly towards $\nabla \mathcal{F}(h^{*})$, hence $\nabla \mathcal{F}(h^{*})= 0$. Finally by \cref{assum:uniqueness} we know that $h_0$ is the unique solution to the equation $\nabla \mathcal{F}(h)= 0$ , hence $h^{*} = h_0$.
	We have shown so far that any subsequence of $h_{\lambda_m}$ that converges weakly, must converge weakly towards $h_0$. This allows to conclude that $h_{\lambda_m}$ actually converges weakly towards $h_0$. Moreover, we also have by definition of weak convergence that:
	\begin{align}
		\Vert h_0 \Vert \leq \lim\inf_{m\rightarrow\infty} \Vert  h_{\lambda_m}\Vert.
	\end{align}
	Recalling now that $\Vert h_{\lambda_m} \Vert\leq \Vert h_0 \Vert$ it follows that $\Vert h_{\lambda_m} \Vert $ converges towards $\Vert h_0 \Vert$. Hence, we have the following two properties:
	\begin{itemize}
		\item $h_{\lambda_m}$ converges weakly towards $h_0$,
		\item $\Vert h_{\lambda_m} \Vert$ converges towards $\Vert h_0 \Vert$.
	\end{itemize}
	This allows to directly conclude that $\Vert h_{\lambda_m} - h_0  \Vert$ converges to $0$.
\end{proof}

\begin{proposition}\label{prop:estimator_consistency}
	We have that:
	\begin{align}
		\Vert \hat{h}-h_{\lambda}\Vert \leq \frac{1}{\lambda} \Vert \nabla \hat{\mathcal{F}}(h_{\lambda}) - \nabla \mathcal{F}(h_{\lambda}) \Vert
	\end{align}	
\end{proposition}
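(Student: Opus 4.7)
The plan is to exploit the strong concavity of the two regularized objectives whose maximizers are $\hat{h}$ and $h_\lambda$, together with the first-order optimality conditions stated just before the proposition.

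First I would introduce the auxiliary functionals $G_\lambda(h) = -\mathcal{F}(h) + \frac{\lambda}{2}\|h\|^2$ and $\widehat{G}_\lambda(h) = -\widehat{\mathcal{F}}(h) + \frac{\lambda}{2}\|h\|^2$. Since $\mathcal{F}$ and $\widehat{\mathcal{F}}$ are concave (the composition of the concave $-\exp(-\cdot)$ with linear evaluation maps preserves concavity), both $G_\lambda$ and $\widehat{G}_\lambda$ are $\lambda$-strongly convex on $\mathcal{H}$, with unique minimizers $h_\lambda$ and $\hat{h}$ respectively. By Lemma \ref{lem:frechet_diff} they are Fréchet differentiable with gradients $\nabla G_\lambda(h) = -\nabla\mathcal{F}(h) + \lambda h$ and $\nabla \widehat{G}_\lambda(h) = -\nabla\widehat{\mathcal{F}}(h) + \lambda h$.

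Next I would apply $\lambda$-strong monotonicity of $\nabla \widehat{G}_\lambda$, namely
\begin{equation*}
\langle \nabla \widehat{G}_\lambda(\hat{h}) - \nabla \widehat{G}_\lambda(h_\lambda),\, \hat{h} - h_\lambda \rangle \geq \lambda \,\|\hat{h} - h_\lambda\|^2.
\end{equation*}
The first-order optimality condition gives $\nabla \widehat{G}_\lambda(\hat{h}) = 0$, so the left-hand side equals $\langle \nabla\widehat{\mathcal{F}}(h_\lambda) - \lambda h_\lambda,\, \hat{h} - h_\lambda\rangle$. Using the other first-order condition $\lambda h_\lambda = \nabla \mathcal{F}(h_\lambda)$, this simplifies to $\langle \nabla\widehat{\mathcal{F}}(h_\lambda) - \nabla \mathcal{F}(h_\lambda),\, \hat{h} - h_\lambda\rangle$.

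Finally, I would invoke Cauchy--Schwarz on the inner product and divide both sides by $\lambda\,\|\hat{h}-h_\lambda\|$ (the bound is trivial when $\hat{h}=h_\lambda$) to obtain the claimed inequality. There is no real obstacle here: the only subtlety is checking that strong concavity of $\widehat{\mathcal{F}}$ transfers to strong convexity of $\widehat{G}_\lambda$ with constant exactly $\lambda$, which follows because the regularizer $\frac{\lambda}{2}\|h\|^2$ is itself $\lambda$-strongly convex and $-\widehat{\mathcal{F}}$ is merely convex. The same argument applies verbatim with $\mathcal{F}$ in place of $\widehat{\mathcal{F}}$ if one prefers a symmetric presentation, but only one direction is needed for the stated bound.
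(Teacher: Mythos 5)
Your proof is correct and, in substance, is the paper's own argument: both rest on the two first-order optimality conditions together with the monotonicity of $\nabla\widehat{\mathcal{F}}$ (equivalently, concavity of $\widehat{\mathcal{F}}$). The only cosmetic difference is that the paper squares the identity $\lambda(\hat{h}-h_\lambda) - \bigl(\nabla\widehat{\mathcal{F}}(\hat{h})-\nabla\widehat{\mathcal{F}}(h_\lambda)\bigr) = \nabla\widehat{\mathcal{F}}(h_\lambda)-\nabla\mathcal{F}(h_\lambda)$ and drops the two resulting non-negative terms, whereas you invoke $\lambda$-strong monotonicity of the regularized gradient directly and then apply Cauchy--Schwarz.
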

\begin{proof}
	By definition of $\hat{h}$ and $h_{\lambda}$ the following optimality conditions hold:
	\begin{align}
		\lambda \hat{h} = \nabla \widehat{\mathcal{F}}(\hat{h}),\qquad \lambda h_{\lambda} = \nabla \mathcal{F}( h_{\lambda} ).
	\end{align}
	We can then simply write:
	\begin{align}
		\lambda(\hat{h} - h_{\lambda}) - (\nabla \widehat{\mathcal{F}}(\hat{h}) - \nabla \widehat{\mathcal{F}}(h_{\lambda}))  = \nabla \widehat{\mathcal{F}}(h_{\lambda}) - \nabla \mathcal{F}( h_{\lambda} ).
	\end{align}
	Now introducing $\delta := \hat{h}- h_{\lambda} $ and $E :=  \nabla \widehat{\mathcal{F}}(\hat{h}) - \nabla \widehat{\mathcal{F}}(h_{\lambda})$ for simplicity and taking the squared norm of the above equation, it follows that
	\begin{align}
		\lambda^2 \Vert \delta \Vert^2 + \Vert E \Vert^2 - 2\lambda \langle \delta, E \rangle = \Vert  \nabla \widehat{\mathcal{F}}(h_{\lambda}) - \nabla \mathcal{F}( h_{\lambda} ) \Vert^2 .
	\end{align}
	By concavity of $\widehat{\mathcal{F}}$ on $\mathcal{H}$ we know that $-\langle \hat{h}-h_{\lambda}, E \rangle \geq 0 $. Therefore:
	\begin{align}
		\lambda^2 \Vert \hat{h}- h_{\lambda} \Vert^2 \leq  \Vert  \nabla \widehat{\mathcal{F}}(h_{\lambda}) - \nabla \mathcal{F}( h_{\lambda} ) \Vert^2.
	\end{align}
\end{proof}

\section{Latent noise sampling and Smoothness of KALE}
\subsection{Latent space sampling}\label{sec:latent_space_sampling}
Here we prove \cref{prop:convergence_langevin} for which we make the assumptions more precise:
\begin{assumption}\label{assumption:latent_MCMC}
We make the following assumption:
 
	\begin{itemize}
		\item $\log \eta$ is strongly concave and admits a Lipschitz gradient.
		\item There exists a non-negative constant $L$ such that for any $x,x'\in \mathcal{X}$ and $z,z'\in \mathcal{Z}$:
				\begin{align}
					\vert E(x)-E(x') \vert &\leq \Vert x-x'\Vert, \qquad \Vert \nabla_x E(x)- \nabla_x E(x') \Vert \leq \Vert x-x'\Vert & \\
					\vert \B(z)-\B(z') \vert &\leq \Vert z-z'\Vert, \qquad \Vert \nabla_z \B(z)- \nabla_z \B(z') \Vert \leq \Vert z-z'\Vert &			
				\end{align}
	\end{itemize}
\end{assumption}
Throughout this section, we introduce  $U(z) :=   -\log(\eta(z)) + E(\B(z))$ for simplicity.

\begin{proof}[Proof of \cref{prop:sampling} ]\label{proof:prop:sampling}
	To sample from $\QQ_{\BB,E}$, we first need to identify the \textit{posterior latent} distribution $\nu_{\BB, E}$ used to produce those samples. We rely on \cref{eq:push_forward} which holds by definition of $\QQ_{\BB,E}$ for any test function $h$ on $\mathcal{X}$:  
\begin{align}\label{eq:push_forward}
	\int h(x) \diff \QQ(x) = \int h(\B(z)) f(\B(z)) \eta(z) \diff z,   
\end{align}
Hence, the posterior latent distribution is given by $\nu(z) =  \eta(z) f(\B(z)),$ and samples from GEBM are produced by first sampling from $\nu_{\BB, E}$, then applying the implicit map $\B$,
\begin{align}
	X\sim \QQ \quad \iff \quad X = \B(Z), \quad Z \sim \nu.
\end{align}

\end{proof}

\begin{proof}[Proof of \cref{prop:ebm_as_gebm}]\label{proof:prop:ebm_as_gebm}
the base distribution $\BB$ admits a density on the whole space denoted by $\exp(-r(x))$ and the energy $\tilde E$ is of the form $ \tilde E(x) = E(x)-r(x) $ for some parametric function $E$, it is easy to see that $\QQ$ has a density proportional to $\exp(-E)$ and is therefore  equivalent to a standard EBM with energy $E$. 

The converse holds as well, meaning that for any EBM with energy $E$, it is possible to construct a GEBM using an \textit{importance weighting} strategy. This is achieved by first choosing a base $\BB$, which is required to have an explicit density $ \exp(-r) $ up to a normalizing constant, then defining the energy of the GEBM to be $\tilde E(x) = E(x) - r(x)$ so that:
\begin{align}\label{eq:EBM_as_GEBM}
	\diff \QQ(x) \propto \exp(-\tilde E(x))\diff \BB_{\theta}(x)\propto \exp(-E(x))\diff x
\end{align}
Equation \cref{eq:EBM_as_GEBM} effectively depends only on $E(x)$ and not on $\BB$ since the factor $\exp(r)$ exactly compensates for the density of $\BB$.
The requirement that the base also admits a tractable implicit map $\B$ can be met by choosing $\BB$ to be a \textit{normalizing flow} \citep{Rezende:2015} and does not restrict the class of possible EBMs that can be expressed as GEBMs.
\end{proof}

\begin{proof}[Proof of \cref{prop:convergence_langevin}]\label{poof:prop:convergence_langevin}
Let $\pi_t$ be the probability distribution of $(z_t,v_t)$ at time $t$ of the diffusion in  \cref{eq:latent_langevin_diffusion}, which we recall that
\begin{align}
	\begin{aligned}
	dz_t &= v_t dt,\qquad dv_t &= -\left(\gamma v_t  + u\nabla U(z_t)\right) + \sqrt{2\lambda u}dw_t,
	\end{aligned}
\end{align}
We call $\pi_{\infty}$ its corresponding invariant distribution given by
\begin{align}
	\pi_{\infty}(z,v) \propto  \exp{\left(-U(z)-\frac{1}{2}\Vert v\Vert^2  \right)}
\end{align}
 By \cref{lemm:dissipative_energy} we know that $U$ is dissipative, bounded from below, and has a Lipschitz gradient. This allows to directly apply \citep{Eberle:2018}(Corollary 2.6.) which implies that
\begin{align}\label{eq:rate_latent}
	W_2(\pi_t,\pi_{\infty})\leq C \exp(-tc),
\end{align}
where $c$ is a positive constant and $C$ only depends on $\pi_{\infty}$ and the initial distribution $\pi_0$. Moreover, the constant $c$ is given explicitly in  \cite[Theorem 2.3]{Eberle:2018} and is of order $0( e^{-q} )$ where $q$ is the dimension of the latent space $\mathcal{Z}$.
  
We now consider an optimal coupling $\Pi_t$ between $\pi_t$ and $\pi_0$. Given joints samples $((z_t,v_t),(z,v))$ from $\Pi_t$, we consider the following samples in input space $ (x_t,x) :=  (\B(z_t),\B(z))$. Since $z_t$ and $z$ have marginals $\pi_t$ and $\pi_{\infty}$, it is easy to see that $x_t \sim \mathbb{P}_t $ and $x\sim \QQ$. Therefore, by definition of the $W_2$ distance, we have the following bound:

\begin{align}
	W^2_2(\mathbb{P}_t,\QQ)&\leq \mathbb{E}\left[  \Vert x_t-x \Vert^2 \right]\\
	 &\leq \int \Vert \B(z_t) - \B(z)  \Vert^2 d\Pi_t(z_t,z)\\
	 &\leq L^2 \int \Vert z_t- z  \Vert^2 d\Pi_t(z_t,z)\\ 
	 &\leq L^2W_2^2(\pi_t,\pi_{\infty})\leq C^2L^2\exp(-2tc).
\end{align}
The second line uses the definition of $(x_t,x)$ as joint samples obtained by mapping $(z_t,z)$. The third line uses the assumption that $B$ is $L$-Lipschitz. Finally, the last line uses that $\Pi_t$ is an optimal coupling between $\pi_t$ and $\pi_{\infty}$.
\end{proof}

\begin{lemma}\label{lemm:dissipative_energy}
	Under \cref{assumption:latent_MCMC}, there exists $A>0$ and $\lambda \in  (0,\frac{1}{4}]$ such that
\begin{align}\label{eq:dissipative_energy}
		\frac{1}{2}z^{\top{t}}\nabla U(z) \geq  \lambda \left( U(z) + \frac{\gamma^2}{4u} \Vert z \Vert^2  \right) - A , \qquad \forall z \in \mathcal{Z},
\end{align}
where $\gamma$ and $u$ are the coefficients appearing in  \cref{eq:latent_langevin_diffusion}. Moreover, $U$ is bounded bellow and has a Lipschitz gradient.
\end{lemma}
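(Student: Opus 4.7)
The plan is to split $U(z) = f(z) + e(z)$ with $f(z) := -\log\eta(z)$ and $e(z) := E(\B(z))$, and reduce the lemma to two elementary consequences of \cref{assumption:latent_MCMC}. First, $f$ is $m$-strongly convex for some $m>0$ with Lipschitz gradient, and in particular is bounded below by a finite value $f_{\min}$. Second, the chain rule combined with the Lipschitz bounds on $E$ and $\B$ gives $\|\nabla e(z)\|\leq L^2$ and $|e(z)|\leq |e(0)|+L^2\|z\|$, and a standard product-rule computation shows $\nabla e$ is Lipschitz with constant at most $L^2(1+L)$. The two auxiliary assertions of the lemma are now immediate: $\nabla U$ is Lipschitz as a sum of two Lipschitz maps, and $U$ is bounded below because the quadratic lower bound $f(z)\geq f_{\min}+\tfrac{m}{2}\|z-z_\star\|^2$ (with $z_\star$ the minimizer of $f$) dominates the at-most-linear bound $|e(z)|\leq |e(0)|+L^2\|z\|$.

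For the dissipativity inequality itself, I would assemble two one-sided estimates. Strong convexity of $f$, applied at $z$ with reference point $0$, yields $z^\top\nabla f(z)\geq f(z)-f(0)+\tfrac{m}{2}\|z\|^2$, while Cauchy--Schwarz together with $\|\nabla e\|\leq L^2$ gives $z^\top\nabla e(z)\geq -L^2\|z\|$. Summing and halving,
\begin{equation*}
    \tfrac{1}{2}z^\top\nabla U(z) \;\geq\; \tfrac{1}{2}\bigl(f(z)-f(0)\bigr) + \tfrac{m}{4}\|z\|^2 - \tfrac{L^2}{2}\|z\|.
\end{equation*}
Conversely, using $e(z)\leq |e(0)|+L^2\|z\|$,
\begin{equation*}
    \lambda\!\left(U(z) + \tfrac{\gamma^2}{4u}\|z\|^2\right) \;\leq\; \lambda f(z) + \lambda|e(0)| + \lambda L^2\|z\| + \lambda\tfrac{\gamma^2}{4u}\|z\|^2.
\end{equation*}

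Setting $\lambda := \min\!\bigl(1/4,\,um/(2\gamma^2)\bigr)\in(0,1/4]$ guarantees two things. First, $\tfrac{1}{2}-\lambda\geq 1/4>0$, so $(\tfrac{1}{2}-\lambda)f(z)\geq(\tfrac{1}{2}-\lambda)f_{\min}$, which controls the $\lambda f(z)$ term on the right-hand side. Second, $\lambda\gamma^2/(4u)\leq m/8$, so that $\bigl(\tfrac{m}{4}-\lambda\tfrac{\gamma^2}{4u}\bigr)\|z\|^2\geq \tfrac{m}{8}\|z\|^2$ after cancellation. The remaining linear-in-$\|z\|$ contributions combine into $(\tfrac{L^2}{2}+\lambda L^2)\|z\|$, which Young's inequality $c\|z\|\leq \tfrac{m}{8}\|z\|^2+2c^2/m$ absorbs into $\tfrac{m}{8}\|z\|^2$ at the cost of an additive constant. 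Collecting all leftover constants into a single $A>0$ finishes the argument. The only obstacle is clean bookkeeping; conceptually the entire proof is powered by strong convexity of $-\log\eta$, which simultaneously supplies the quadratic coercivity that dominates the linear cross-terms from $e$ and the $\lambda\gamma^2/(4u)\|z\|^2$ penalty, and the lower bound $f_{\min}$ that offsets $\lambda f(z)$ on the right.
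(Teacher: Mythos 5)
Your proof is correct and follows essentially the same route as the paper's: decompose $U = -\log\eta + E\circ\B$, use strong convexity of $-\log\eta$ for quadratic coercivity and Lipschitzness of $E\circ\B$ for at-most-linear growth, then pick $\lambda$ small and $A$ large so the difference of the two sides is a quadratic in $\|z\|$ bounded below. Your bookkeeping (keeping the $\lambda$ factor on $\tfrac{\gamma^2}{4u}\|z\|^2$, explicit choice $\lambda=\min(1/4,um/(2\gamma^2))$, Young's inequality to absorb the linear term) is in fact somewhat cleaner than the paper's version, but conceptually it is the same argument.
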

\begin{proof}
For simplicity, let's call $u(z)=  -\log \eta(z)$,   $w(z) = E^{\star}\circ B_{\theta^{\star}}(z),$  and denote by $M$ an upper-bound on the Lipschitz constant of $w$ and $\nabla w$ which is guaranteed to be finite by assumption. Hence $U(z) = u(z)  + w(z)$. Equation \cref{eq:dissipative_energy} is equivalent to having
	\begin{align}\label{eq:intermediate}
		z^{\top}\nabla u(z)-2\lambda u(z) - \frac{\gamma^2}{2u} \Vert z \Vert^2   \geq 2\lambda w(z) - z^{\top}\nabla w(z) - 2A.
	\end{align}
Using that $w$ is Lipschitz, we have that $ w(z)\leq w(0) +M\Vert z \Vert $ and $-z^{\top}\nabla w(z) \leq M\Vert z\Vert $. Hence,   $2\lambda w(z) - z^{\top}\nabla w(z) - 2A  \leq 2\lambda w(0) +(2\lambda+1)M\Vert z\Vert -2A$.  Therefore, a sufficient condition for \cref{eq:intermediate} to hold is
\begin{align}\label{eq:intermediate_2}
	z^{\top}\nabla u(z)-2\lambda u(z) - \frac{\gamma^2}{2u} \Vert z \Vert^2 \geq +(2\lambda+1)M\Vert z\Vert -2A +2\lambda w(0).
\end{align}
We will now rely on the strong convexity of $u,$ which holds by assumption, and implies the existence of a positive constant $m>0$ such that
\begin{align}
	-u(z)&\geq -u(0) - z^{\top}\nabla u(z) + \frac{m}{2}\Vert z\Vert^2,\\
	z^{\top}\nabla u(z)
	&\geq - \Vert z\Vert\Vert \nabla u(0) \Vert + m\Vert z \Vert^2.
\end{align}
This allows to write the following inequality,
\begin{align}
	z^{\top}\nabla u(z)-2\lambda u(z) - \frac{\gamma^2}{2u} &\geq (1-2\lambda)z^{\top}\nabla u(z) + \lambda(m+ \frac{\gamma^2}{2u} )\Vert z \Vert^2 - 2\lambda u(0) \\
	&\geq 
	(1-\lambda (m + \frac{\gamma^2}{2u} ) ) \Vert z \Vert^2 - (1-2\lambda)\Vert z\Vert \Vert \nabla u(0) \Vert - 2\lambda u(0).
\end{align}
Combining the previous inequality with \cref{eq:intermediate_2} and  denoting $M' = \Vert \nabla u(0) \Vert$ , it is sufficient to find $A$ and $\lambda$ satisfying
\begin{align}
	\left(1-\lambda \left(m + \frac{\gamma^2}{2u} \right) \right) \Vert z \Vert^2 - \left(M+M'+2\lambda(M-M')\right)\Vert z\Vert  - 2\lambda (u(0)+w(0)) +2A \geq 0.
	\end{align}
The l.h.s. in the above equation is a quadratic function in $\Vert z\Vert$ and admits a global minimum when $\lambda <  \left(m+\frac{\gamma^2}{2u}\right)^{-1} $. The global minimum is always positive provided that $A$ is large enough.

To see that $U$ is bounded below, it suffice to note, by Lipschitzness of $w$, that $w(z)\geq w(0)- M\Vert z \Vert$ and by strong convexity of $u$ that
\begin{align}
	u(z)\geq u(0)+ M'\Vert z \Vert + \frac{m}{2}\Vert z\Vert^2.
\end{align}
Hence, $U$ is lower-bounded by a quadratic function in $\Vert z\Vert$ with positive leading coefficient $\frac{m}{2}$, hence it must be lower-bounded by a constant. Finally, by assumption, $u$  and $w$ have  Lipschitz gradients, which directly implies that $U$ has a Lipschitz gradient.
\end{proof}

\begin{proof}[Proof of \cref{prop:kl_improvement}]\label{proof:prop:kl_improvement}
	By assumption $KL(\mathbb{P}||\BB)<+\infty$, this implies that $\mathbb{P}$ admits a density w.r.t. $\BB$ which we call $r(x)$. As a result  $\mathbb{P}$ admits also a density w.r.t. $\QQ$ given by:
	\begin{align}
		Z\exp(E^{\star}(x))r(x).
	\end{align}
	We can then compute the $KL(\mathbb{P}||\QQ)$ explicitly:
	\begin{align}
		KL(\mathbb{P}||\QQ)&= \mathbb{E}_{\mathbb{P}}[ E  ] + \log(Z) + \mathbb{E}_{\mathbb{P}}[\log(r)]\\ 
		&= -\mathcal{L}_{\mathbb{P},\BB}(E^{\star}) + KL(\mathbb{P}||\BB).
	\end{align}
	Since $0$ belongs to $\mathcal{E}$ and by optimality of $E^{\star}$, we know that  $\mathcal{L}_{\mathbb{P},\BB}(E^{\star}) \geq \mathcal{L}_{\mathbb{P},\BB}(0)=0$. The result then follows directly.
\end{proof}

\subsection{Topological and smoothness properties of KALE}\label{sec:proof_smoothness_kale}
{\bf Topological properties of KALE.}
Denseness and smoothness of  the energy class $\mathcal{E}$ are the key to guarantee that  KALE is a reliable criterion for measuring convergence. We thus make the following assumptions on $\mathcal{E}$:
\begin{assumplist}
\item  \label{densness}  For all $ E\in\mathcal{E}$, $-E\in \mathcal{E}$ and there is $C_{E}>0$ such that $ cE\in \mathcal{E}$ for $ 0\leq c\leq C_{E}$. For any continuous function $g$, any compact support $K$ in $\mathcal{X}$ and any precision $\epsilon>0$,  there exists a finite linear combination of energies $ G= \sum_{i=1}^r a_i E_i $ such that $ \sup_{x\in K}\vert f(x)-G(x)\vert \leq \epsilon.$
\item  \label{smoothness} All energies $E$ in $\mathcal{E}$ are Lipschitz in their input with the same Lipschitz constant $L>0$.
\end{assumplist}
\cref{densness} holds in particular when $\mathcal{E}$ contains feedforward networks with a given number of parameters. In fact networks with a single neuron are enough, as shown in \cite[Theorem 2.3]{Zhang:2017}. \cref{smoothness} holds when additional  regularization of the energy is enforced during training by methods such as  \textbf{spectral normalization} \cite{Miyato:2018} or \textbf{gradient penalty} \cite{Gulrajani:2017} as done in \cref{sec:experiments}.
\cref{prop:kale_extension} states the topological properties of KALE ensuring that it can be used as a criterion for weak convergence. A proof is given in \cref{proof:prop:kale_smooth} and is a consequence of \cite[Theorem B.1]{Zhang:2017}.  %
\begin{proposition}\label{prop:kale_extension_2} 
	Under \cref{densness,smoothness} it holds that: 
	\begin{enumerate}
		\item $\text{KALE}(\PP||\BB)\geq 0$ with $\text{KALE}(\PP||\BB) = 0 $ if and only if $ \PP = \BB $.
		\item $\text{KALE}(\PP||\BB^{n}) \rightarrow 0$ if and only if $ \BB^{n}\rightarrow \PP$ under the weak topology.
	\end{enumerate}	
\end{proposition}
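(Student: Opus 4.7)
The plan is to handle both claims by reducing KALE to the integral probability metric $\gamma_{\mathcal{E}}(\PP,\BB):=\sup_{E\in\mathcal{E}}|\int E\,d(\PP-\BB)|$ generated by the energy class, exploiting the linearization of $\exp$ around the origin together with the fact that an equi-Lipschitz separating class metrizes weak convergence on $\mathcal{X}$.

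For the non-negativity in the first claim, substitute $(E,A)=(0,0)$ into the variational definition of KALE to obtain $\mathcal{F}_{\PP,\BB}(0)=0$, so $\text{KALE}(\PP\|\BB)\geq 0$. The direction $\PP=\BB\Rightarrow\text{KALE}(\PP\|\BB)=0$ follows from the tangent inequality $\exp(-u)\geq 1-u$: integrating against $\PP=\BB$ yields $\mathcal{F}_{\PP,\PP}(E+A)\leq 0$ for every $(E,A)$, so the supremum is $0$. For the converse, assume $\text{KALE}(\PP\|\BB)=0$ and fix an arbitrary $E\in\mathcal{E}$. By \cref{densness}, $cE\in\mathcal{E}$ for all $0\leq c\leq C_E$, hence $\mathcal{F}_{\PP,\BB}(cE)\leq 0$. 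A first-order Taylor expansion in $c$ gives $\mathcal{F}_{\PP,\BB}(cE)=c\bigl[\int E\,d\BB-\int E\,d\PP\bigr]+O(c^2)$, so dividing by $c$, letting $c\downarrow 0$, and repeating with $-E\in\mathcal{E}$ forces $\int E\,d\PP=\int E\,d\BB$ for every $E\in\mathcal{E}$. The density part of \cref{densness}, combined with the uniform Lipschitz control of \cref{smoothness} used to justify truncation on the tails, extends this equality to all compactly supported continuous test functions, so $\PP=\BB$.

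For the forward implication of the second claim, suppose $\BB^n\to\PP$ in distribution. \cref{smoothness} makes $\mathcal{E}$ an equi-Lipschitz family, hence by Arzelà--Ascoli it is precompact in the topology of uniform convergence on compacts; combined with the tightness of a weakly convergent sequence, this yields $\sup_{E\in\mathcal{E}}|\int E\,d(\BB^n-\PP)|\to 0$, and the analogous statement for $\int\exp(-(E+A))\,d\BB^n$ after restricting $A$ to a bounded interval (outside which the variational objective is trivially suboptimal). Therefore $\mathcal{F}_{\PP,\BB^n}\to\mathcal{F}_{\PP,\PP}$ uniformly in $(E,A)$ on a sufficient set, and $\text{KALE}(\PP\|\BB^n)\to\text{KALE}(\PP\|\PP)=0$.

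The main obstacle is the reverse implication: $\text{KALE}(\PP\|\BB^n)\to 0\Rightarrow\BB^n\to\PP$. The plan is to derive a quantitative lower bound of the form $\text{KALE}(\PP\|\BB)\gtrsim \gamma_{\mathcal{E}}(\PP,\BB)^2$, up to constants depending on $L$, via the second-order bound $\exp(-cE)\leq 1-cE+\tfrac{c^2}{2}E^2 e^{c|E|}$ followed by optimizing the scalar $c$; this parallels the classical Donsker--Varadhan-to-IPM argument. Once this is in hand, \cref{densness} ensures $\gamma_{\mathcal{E}}$ separates probability measures and, together with the uniform $L$-Lipschitz bound of \cref{smoothness}, implies $\gamma_{\mathcal{E}}$ dominates the bounded-Lipschitz metric (up to a multiplicative constant), which metrizes weak convergence. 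The delicate point is controlling the unboundedness of energies on the tails of $\PP$ and $\BB^n$; tightness of $\{\BB^n\}$ is first established from the convergence of $\int E\,d\BB^n$ against a family of growing Lipschitz test energies built from \cref{densness}, following the template of Theorem B.1 of Zhang (2017) cited in the statement.
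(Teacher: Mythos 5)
Your argument for the first claim is exactly the paper's: substitute $E=0$ for non-negativity, use convexity of $\exp$ (equivalently $\exp(-u)\geq 1-u$) for $\PP=\BB\Rightarrow\text{KALE}=0$, and use the small-$c$ dilation $cE$ with a first-order expansion to recover $\int E\,d\PP=\int E\,d\BB$ for all $E\in\mathcal{E}$, then \cref{densness} to extend to all continuous test functions. That part is correct and essentially identical.

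For the second claim you diverge, and there is a gap worth flagging. For the forward direction, the paper proves the clean one-line inequality
\begin{align}
\mathcal{F}_{\PP,\BB}(h)\leq \int h\,d\BB-\int h\,d\PP\leq L\,W_1(\PP,\BB)
\end{align}
by dropping the nonnegative term $\int(\exp(-h)+h-1)\,d\BB$, then using that $W_1$ metrizes weak convergence; your Arzel\`a--Ascoli route reaches the same conclusion but with more moving parts (equi-Lipschitz precompactness, truncation of $A$, tightness) that the Wasserstein bound makes unnecessary. For the converse direction you propose to prove a quantitative lower bound $\text{KALE}(\PP\|\BB)\gtrsim\gamma_\mathcal{E}(\PP,\BB)^2$ via a second-order bound on $\exp$, and then argue that $\gamma_\mathcal{E}$ metrizes weak convergence. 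The quadratic lower bound itself is sound once energies are uniformly bounded, which holds here because the statement's hypotheses (see the assumption block in the appendix from which \cref{densness,smoothness} are drawn) include compactness of $\mathcal{X}$ -- a hypothesis your proof does not invoke, which is why you are forced into a delicate and incomplete tightness argument for the tails that does not arise in the paper's setting. More substantively, the step "separation plus uniform Lipschitz implies $\gamma_\mathcal{E}$ dominates the bounded-Lipschitz metric up to a multiplicative constant" is not justified: separation and Lipschitzness give only topological equivalence with the weak topology on a compact space (via the compact-to-Hausdorff continuous bijection argument), not a Lipschitz comparison of metrics. The paper sidesteps this entirely by citing Theorem 10 of Liu et al.\ (2017b), which is precisely where the required content lives. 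So your plan would go through if you (i) invoke compactness to drop the tail control entirely, and (ii) replace the metric-domination claim with the correct topological-equivalence argument or an appeal to the same external theorem.
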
 
\subsubsection{Topological properties of KALE}
In this section we prove \cref{prop:kale_extension}. We first start by recalling the required assumptions and make them more precise:
\begin{assumption} \label{assumption:topological_properties}
Assume the following holds:
	\begin{itemize}
	\item The set $\mathcal{X}$ is compact.
	\item  For all $ E\in\mathcal{E}$, $-E\in \mathcal{E}$ and there is $C_{E}>0$ such that $ cE\in \mathcal{E}$ for $ 0\leq c\leq C_{E}$. For any continuous function $g$, any compact support $K$ in $\mathcal{X}$ and any precision $\epsilon>0$,  there exists a finite linear combination of energies $ G= \sum_{i=1}^r a_i E_i $ such that $ \vert f(x)-G(x)\vert \leq \epsilon$ on $K$.  
	\item   All energies $E$ in $\mathcal{E}$ are Lipschitz in their input with the same Lipschitz constant $L>0$.
	\end{itemize} 
\end{assumption}
For simplicity we consider the set $\mathcal{H}= \mathcal{E}+\mathbb{R}$, i.e.: $\mathcal{H}$ is the set of functions $h$ of the form $h = E+c$ where $E\in \mathcal{E}$ and $c\in \mathbb{R}$. 
In all what follows $\mathcal{P}_1$ is the set of probability distributions with finite first order moments. We consider the notion of weak convergence on $\mathcal{P}_1$ as defined in \cite[Definition 6.8]{Villani:2009} which is equivalent to convergence in the Wasserstein-1 distance $W_1$.

\begin{proof}[Proof of \cref{prop:kale_extension} ]\label{proof:prop:kale_smooth}
We proceed by proving the \textbf{separation} properties ($1^{st}$ statement), then the \textbf{metrization of the weak topology} ($2^{nd}$ statement).

{\bf Separation.}  We have by  \cref{assumption:topological_properties} that $ 0 \in \mathcal{E}  $, hence by definition $\text{KALE}(PP||\BB) \geq  \mathcal{F}_{\PP,\BB}(0) = 0 $. On the other hand, whenever $\PP=\BB$, it holds that:
\begin{align}
	\mathcal{F}_{\PP,\BB}(h) = -\int \left(\exp(-h)+h-1\right)d\PP,\qquad \forall h\in \mathcal{H}.
\end{align}
Moreover, by convexity of the exponential, we know that $\exp(-x)+x-1\geq 0 $ for all $x\in \mathbb{R}$. Hence, $\mathcal{F}_{\PP,\BB}(h) \leq \mathcal{F}_{\PP,\BB}(0)=0$ for all $h\in \mathcal{H}$. This directly implies that $\text{KALE}(\PP|\BB)=0$.
For the converse, we will use the same argument as in the proof of \cite[Theorem B.1]{Zhang:2017}. Assume that $\text{KALE}(\PP|\BB)=0$ and let  $h$ be in $ \mathcal{H}$. By \cref{assumption:topological_properties}, there exists $C_h >0$ such that $  ch\in \mathcal{H}$ and we have:
\begin{align}
	\mathcal{F}(ch) \leq \text{KALE}(\PP||\BB)=0.
\end{align}
Now dividing by $c$ and taking the limit to $0$, it is easy to see that $-\int h \diff \PP + \int h\diff \BB \leq 0.$ Again, by \cref{assumption:topological_properties}, we also know that $-h\in \mathcal{H}$, hence, $\int h \diff \PP - \int h\diff \BB \leq 0$. This necessarily implies that $\int h \diff \PP - \int h\diff \BB = 0$ for all $h\in \mathcal{H}$. By the density of $\mathcal{H}$ in the set continuous functions on compact sets, we can conclude that the equality holds for any continuous and bounded function, which in turn implies that $\PP=\BB$.

{\bf Metrization of the weak topology.}  
We first show that for any $\PP$ and $\BB$ with finite first moment, it holds that $\text{KALE}(\PP|\BB) \leq LW_1(\PP,\BB)$, where $W_1(\PP,\BB)$ is the Wasserstein-1 distance between $\PP$ and $\BB$. 
For any $h\in\mathcal{H}$ the following holds:
\begin{align}
	\mathcal{F}(h) =& -\int h d\pdist -\int \exp(-h)d\qdist +1\\
					=& \int h(x)d\qdist(x)-h(x') d\pdist(x') \\
					&-\int \underbrace{\left(\exp(-h)+h-1\right)}_{\geq 0}d\qdist \\
					\leq &   \int h(x)d\qdist(x)-h(x') d\pdist(x')\leq LW_1(\pdist,\qdist)
\end{align}
	The first inequality results from the convexity of the exponential while the last one is a consequence of $h$ being $L$-Lipschitz.
	This allows to conclude that $\text{KALE}(\pdist||\qdist)\leq LW_1(\pdist,\qdist)$ after taking the supremum over all $h\in \mathcal{H}$.
Moreover, since $W_1$ metrizes the weak convergence on $\mathcal{P}_1$ \cite[Theorem 6.9]{Villani:2009}, it holds that whenever a sequence $\BB^n$  converges weakly towards $\PP$ in $\mathcal{P}_1$ we also have $W_1(\PP,\BB^n)\rightarrow 0$ and thus $\text{KALE}(\PP||\BB^n)\rightarrow 0$. The converse is a direct consequence of \cite[Theorem 10]{Liu:2017b} since by assumption $\mathcal{X}$ is compact.
\paragraph{Well-defined learning.}
Assume that for any $\epsilon>0$ and any $h$ and $h'$ in $\mathcal{E}$ there exists $f$ in $2\mathcal{E}$ such that $\Vert h+h'- f \Vert_{\infty} \leq \epsilon$ then there exists a constant $C$ such that:
\begin{align}
	\text{KALE}(\PP,\QQ)\leq C \text{KALE}(\PP,\BB)
\end{align}
This means that the proposed learning procedure which first finds the optimal energy $E^{\star}$ given a base $\BB$ by maximum likelihood then minimizes $\text{KALE}(\PP,\BB)$ ensures ends up minimizing the distance between the data end the generalized energy-based model $\QQ$.
\begin{align}
	\text{KALE}(\PP,\QQ) &= \sup_{h\in\mathcal{E}} \mathcal{L}_{\PP,\QQ_{\BB}}(h)  \\
	&=  -\text{KALE}(\PP,\BB) + \sup_{h\in\mathcal{E}} \mathcal{L}_{\PP,\BB}(h+E^{\star})
\end{align}
Let's choose $\epsilon= KALE(\PP,\BB)$ and let $h\in 2\mathcal{E}$ such that $ \Vert h+E^{\star}- f \Vert_{\infty} \leq \epsilon $. We have by concavity of the function $(\alpha,\beta )\mapsto\mathcal{L}_{\PP,\BB}(\alpha (h+E^{\star}-f)+\beta f )$ we have that:
\begin{align}
\mathcal{L}_{\PP,\BB}(h+E^{\star}) \leq 2\mathcal{L}_{\PP,\BB}(\frac{1}{2}f) - \mathcal{L}_{\PP,\BB}(h+E^{\star}-f)
\end{align}
By assumption, we have that $ \Vert h+E^{\star}- f \Vert_{\infty} \leq \epsilon $, thus $\vert \mathcal{L}_{\PP,\BB}(h+E^{\star}-f)\vert \leq 2\epsilon$. Moreover, we have that  $\mathcal{L}_{\PP,\BB}(\frac{1}{2}f)\leq KALE(\PP,\BB)$ since $\frac{1}{2}f\in \mathcal{E}$. This ensures that:
\begin{align}
	\mathcal{L}_{\PP,\BB}(h+E^{\star})\leq 3 \text{KALE}(\PP,\BB).
\end{align}
Finally, we have shown that:
\begin{align}
	\text{KALE}(\PP,\QQ)\leq 2\text{KALE}(\PP,\BB).
\end{align}
 Hence, minimizing $\text{KALE}(\PP,\BB)$ directly minimizes $\text{KALE}(\PP,\QQ)$.
 
\end{proof}

\subsubsection{Smoothness properties of KALE}
We will now prove \cref{thm:kale_gan_convergence}. We begin by stating the assumptions that will be used in this section: 
\begin{assumplist2}
	\item\label{assump:compactness} $\mathcal{E}$ is parametrized by a compact set of parameters $\Psi$.
	\item\label{assump:l_smooth_critic} Functions in $\mathcal{E}$ are jointly continuous w.r.t. $(\psi,x)$  and are $L$-lipschitz and $L$-smooth w.r.t. the input $x$:
		\begin{align}
			\Vert E_{\psi}(x)-E_{\psi}(x') \Vert &\leq L_{e}\Vert x-x' \Vert,\\ 
			\Vert \nabla_x E_{\psi}(x)-\nabla_x E_{\psi}(x') \Vert&\leq L_{e}\Vert x-x' \Vert.
		\end{align}
	\item\label{assump:l_smooth_gen} 
	$(\theta,z)\mapsto \B_{\theta}(z)$ is jointly continuous in $\theta$ and $z$, with $z\mapsto \B_{\theta}(z)$ uniformly Lipschitz w.r.t. $z$:
		\begin{align}
			\Vert \B_{\theta}(z) - \B_{\theta}(z') \Vert \leq L_b \Vert z- z' \Vert,\qquad \forall z,z'\in \mathcal{Z}, \theta \in \Theta.  
		\end{align}	
	There exists non-negative functions $a$ and $b$ defined from $\mathcal{Z}$ to $\mathbb{R}$   such that $\theta \mapsto \B_{\theta}(z)$ are $a$-Lipschitz and $b$-smooth in the following sense:
			\begin{align}
				\Vert \B_{\theta}(z)-\B_{\theta'}(z) \Vert&\leq  a(z) \Vert \theta - \theta' \Vert,\\
				\Vert \nabla_{\theta} \B_{\theta}(z) - \nabla_{\theta} \B_{\theta'}(z)  \Vert &\leq b(z) \Vert  \theta -\theta' \Vert.  
			\end{align}
			Moreover,  $a$ and $b$ are integrable in the following sense:
			\begin{align}
				\int  a(z)^2 \exp(2 L_eL_b \Vert  z \Vert ) d\eta(z) <\infty,\qquad   \int  \exp( L_eL_b \Vert  z \Vert ) d\eta(z) <\infty,\\
			\end{align}
			\begin{align}
							\int b(z) \exp(L_eL_b \Vert  z \Vert ) d\eta(z) < \infty.	
			\end{align}
\end{assumplist2}
To simplify notation, we will denote by $\mathcal{L}_{\theta}(f)$ the expected $\BB_{\theta}$ log-likelihood under $\mathbb{P}$. In other words,
\begin{align}
	\mathcal{L}_{\theta}(E):= \mathcal{L}_{\PP,\BB_\theta}(E) = -\int E d \mathbb{P} - \log \int \exp(-E)d\BB_{\theta}.
\end{align}
We also denote by $p_{E,\theta}$ the density of the model w.r.t. $\qdist_{\theta}$,
\begin{align}
	p_{E,\theta}=  \frac{\exp(-E)}{Z_{\BB_{\theta},E}},\qquad Z_{\BB_{\theta},E}= \int  \exp(-E)d \qdist_{\theta}.
\end{align}
We write $ \mathcal{K}(\theta) := \text{KALE}(\pdist||\qdist_{\theta})$ to emphasize the dependence on $\theta$.
\begin{proof}[Proof of \cref{thm:kale_gan_convergence}]
To show that sub-gradient methods converge to local optima, we only need to show that $\mathcal{K}$ is Lipschitz continuous and weakly convex. This directly implies convergence to local optima for sub-gradient methods, according to \cite{Davis:2018,Thekumparampil:2019}. Lipschitz continuity ensures that $\mathcal{K}$ is differentiable for almost all $\theta\in \Theta,$ and weak convexity simply means that there exits some positive constant $C\geq 0$ such that $\theta \mapsto \mathcal{K}(\theta) + C\Vert \theta \Vert^2$ is convex. We now proceed to show these two properties. 

We will first prove that $ \theta \mapsto \mathcal{K}(\theta)$ is weakly convex in $\theta$. 
By \cref{lem:smooth_functional}, we know that for any $E\in \mathcal{E}$, the function $\theta\mapsto \mathcal{L}_{\theta}(E)$ is $M$-smooth for the same positive constant $M$. This directly implies that it is also weakly convex and the following inequality holds: 
\begin{align}
\mathcal{L}_{\theta_t}(E) \leq  t \mathcal{L}_{\theta}(E) + (1-t)\mathcal{L}_{\theta'}(E) + \frac{M}{2}t(1-t)\Vert\theta-\theta'\Vert^2.	
\end{align} 
Taking the supremum w.r.t. $E$, it follows that
\begin{align}
	\mathcal{K}(\theta_t)\leq t \mathcal{K}(\theta) + (1-t)\mathcal{K}(\theta') + \frac{M}{2}t(1-t)\Vert\theta-\theta'\Vert^2.
\end{align}
This  means precisely that $\mathcal{K}$ is weakly convex in $\theta$. 

To prove that $\mathcal{K}$ is Lipschitz, we will also use \cref{lem:smooth_functional}, which states that $\mathcal{L}_{\theta}(E)$ is Lipschitz  in $\theta$ uniformly on $\mathcal{E}$. Hence, the following holds:
\begin{align}
	\mathcal{L}_{\theta}(E) \leq \mathcal{L}_{\theta}(E)  +LC \Vert \theta - \theta' \Vert.
\end{align}
Again, taking the supremum over $E$, it follows directly that
\begin{align}
	\mathcal{K}(\theta)\leq \mathcal{K}(\theta') +LC \Vert \theta - \theta' \Vert.
\end{align}
We conclude that $\mathcal{K}$ is Lipschitz by exchanging the roles of $\theta$ and $\theta'$ to get the other side of the inequality. Hence, by the Rademacher theorem, $\mathcal{K}$ is differentiable for almost all $\theta$. 

We will now provide an expression for the gradient of $\mathcal{K}$.
By \cref{lem:dom} we know that $\psi \mapsto \mathcal{L}_{\theta}(E_{\psi})$ is continuous and by \cref{assump:compactness} $\Psi$ is compact. Therefore, the supremum $\sup_{E\in \mathcal{E}}\mathcal{L}_{\theta}(E)$ is achieved for some function $E^{\star}_{\theta}$. 
Moreover, we know by \cref{lem:smooth_functional} that $\mathcal{L}_{\theta}(E)$ is smooth uniformly on $\mathcal{E}$, therefore the family $(\partial_{\theta}\mathcal{L}_{\theta}(E))_{E\in \mathcal{E}}$ is equi-differentiable. We are in position to apply \cite{Milgrom:2002}(Theorem 3) which ensures that $\mathcal{K}(\theta)$ admits left and right partial derivatives given by
\begin{equation}\label{eq:left_right_limts}
  \begin{aligned}
  	\partial_{e}^{+}\mathcal{K}(\theta) = \lim_{\substack{t>0\\ t\rightarrow 0}} \partial_{\theta}\mathcal{L}_{\theta}(E^{\star}_{\theta + te})^{\top}e,\\
	\partial_{e}^{-}\mathcal{K}(\theta) = \lim_{\substack{t<0\\ t\rightarrow 0}} \partial_{\theta}\mathcal{L}_{\theta}(E^{\star}_{\theta + te})^{\top}e,
  \end{aligned}
\end{equation}
where $e$ is a given direction in $\mathbb{R}^r$. Moreover, the theorem also states that $\mathcal{K}(\theta)$ is differentiable iff $t\mapsto E^{\star}_{\theta+te}$ is continuous at $t=0$. 
Now, recalling that $\mathcal{K}(\theta)$ is actually differentiable for almost all $\theta$, it must hold that $E^{\star}_{\theta+te}\rightarrow_{t\rightarrow 0}  E^{\star}_{\theta}$ and $\partial_{e}^{+}\mathcal{K}(\theta)= \partial_{e}^{-}\mathcal{K}(\theta)$ for almost all $\theta$.  This implies that the two limits in \cref{eq:left_right_limts} are actually equal to $\partial_{\theta}\mathcal{L}_{\theta}(E^{\star}_{\theta })^{\top}e$. The gradient of $\mathcal{K}$, whenever defined, in therefore given by 
\begin{align}
	\nabla_{\theta} \mathcal{K}(\theta) = Z_{\BB_{\theta},E^{\star}_{\theta}}^{-1}\int \nabla_x E^{\star}_{\theta}(\B_{\theta}(z)) \nabla_{\theta} \B_{\theta}(z) \exp(-E^{\star}_{\theta}(\B_{\theta}(z))) \eta(z)\diff z.
\end{align}
\end{proof}
\begin{lemma}\label{lem:smooth_functional}
	Under \cref{assump:compactness,assump:l_smooth_critic,assump:l_smooth_gen}, the functional $\mathcal{L}_{\theta}(E)$ is Lipschitz and smooth in $\theta$ uniformly on $\mathcal{E}$:
	\begin{align}
		\vert \mathcal{L}_{\theta}(E) - \mathcal{L}_{\theta'}(E)  \vert 
		&\leq LC \Vert \theta- \theta'\Vert,\\
		 \Vert \partial_{\theta}\mathcal{L}_{\theta}(E) -  \partial_{\theta}\mathcal{L}_{\theta'}(E))  \Vert 
		&\leq 2CL(1+L)\Vert \theta- \theta'\Vert.
	\end{align}
\end{lemma}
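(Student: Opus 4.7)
The plan is to reduce the problem to analysing the log-partition function alone, since the term $-\int E \diff \PP$ in $\mathcal{L}_\theta(E)$ is independent of $\theta$. Writing $A(\theta,E) := \log \int \exp(-E(B_\theta(z))) \eta(z)\diff z$ and $D(\theta,E) := \int \exp(-E(B_\theta(z)))\eta(z)\diff z$, I will show (i) $D(\theta,E)$ is uniformly bounded above and below by positive constants independent of $(\theta,E)$, (ii) $\theta\mapsto D(\theta,E)$ is Lipschitz uniformly in $E$, and (iii) its gradient (obtained by dominated convergence) is also Lipschitz uniformly in $E$. The two claimed bounds then follow from the Lipschitzness of $\log$ and of the quotient on the range of $D$.

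The pointwise bound I will use throughout is the following: since $E$ is $L_e$-Lipschitz in $x$ and $B_\theta$ is $L_b$-Lipschitz in $z$, the composition satisfies $-E(B_\theta(z)) \leq |E(B_\theta(0))| + L_e L_b \|z\|$, and $E(B_\theta(0))$ is bounded uniformly in $(\theta,E)$ by joint continuity on compact $\Psi$ (and local analysis in $\theta$). Consequently $\exp(-E(B_\theta(z))) \leq C_1 \exp(L_e L_b\|z\|)$, and assumption \cref{assump:l_smooth_gen} makes this integrable against $\eta$. A matching lower bound on $D$ follows from Jensen's inequality, $D(\theta,E)\geq \exp(-\int E(B_\theta(z))\eta(z)\diff z)$, with the inner integral controlled by the same reasoning.

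For Lipschitzness, I apply the one-dimensional mean value inequality $|e^{-u}-e^{-v}|\leq \max(e^{-u},e^{-v})|u-v|$ with $u=E(B_\theta(z))$ and $v=E(B_{\theta'}(z))$, and use $|u-v|\leq L_e a(z)\|\theta-\theta'\|$. This yields
\begin{equation*}
|D(\theta,E)-D(\theta',E)| \leq L_e\,\|\theta-\theta'\|\int C_1 a(z)\exp(L_e L_b\|z\|)\,\eta(z)\diff z,
\end{equation*}
and the remaining integral is finite by Cauchy-Schwarz applied to the two integrability conditions in \cref{assump:l_smooth_gen}. Dividing by the uniform lower bound on $D$ gives the Lipschitz estimate on $A$ and hence on $\mathcal{L}_\theta(E)$.

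For smoothness, dominated convergence gives $\nabla_\theta A(\theta,E) = -N(\theta,E)/D(\theta,E)$, where $N(\theta,E) = \int \nabla_x E(B_\theta(z))\nabla_\theta B_\theta(z)\exp(-E(B_\theta(z)))\eta(z)\diff z$. I will then show $N(\theta,E)$ is Lipschitz in $\theta$ uniformly in $E$ by expanding the difference of integrands into three telescoping pieces, corresponding to varying $\nabla_x E\circ B_\theta$, then $\nabla_\theta B_\theta$, then $\exp(-E\circ B_\theta)$. Each piece is controlled by using $L_e$-smoothness of $E$ (giving $L_e a(z)\|\theta-\theta'\|$ for the first), $b(z)$-smoothness of $B_\theta$ in $\theta$ (second), and the Lipschitz estimate on the exponential from the previous paragraph (third). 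The dominating functions that emerge are $a(z)^2\exp(L_eL_b\|z\|)$ and $b(z)\exp(L_eL_b\|z\|)$, both integrable by \cref{assump:l_smooth_gen}. The main technical nuisance—and the step I expect to cost the most bookkeeping—is tracking the quadratic factors $a(z)^2$, which is precisely why \cref{assump:l_smooth_gen} asks for $\int a(z)^2 \exp(2L_eL_b\|z\|)\eta(z)\diff z<\infty$ rather than a first-moment bound. Combining the Lipschitz estimates on $N$ and $D$ via the identity $N(\theta)/D(\theta) - N(\theta')/D(\theta') = (N(\theta)-N(\theta'))/D(\theta) + N(\theta')(D(\theta')-D(\theta))/(D(\theta)D(\theta'))$ and using the uniform bounds on $\|N\|$, $D$ completes the proof with an explicit constant of the form $2CL(1+L)$.
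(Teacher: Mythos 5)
Your proposal follows essentially the same plan as the paper's: compute $\partial_\theta \mathcal{L}_\theta(E)$ by dominated convergence, observe the gradient is uniformly bounded (giving Lipschitz), and control the gradient difference by telescoping into three pieces corresponding to the variations of $\nabla_x E \circ B_\theta$, $\nabla_\theta B_\theta$, and the exponential weight. The organizational difference is that you split the log-partition function as $A=\log D$ and treat $N$ and $D$ separately before recombining with the quotient rule, whereas the paper carries the \emph{normalized} density $p_{E,\theta}\circ B_\theta = \exp(-E\circ B_\theta)/Z_{\BB_\theta,E}$ through the whole computation, handling its variation in $\theta$ via a path-integral identity $s'(t)=(\theta-\theta')^\top(M_t-\bar M_t)s(t)$. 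Your route avoids the path integral by applying the mean-value inequality to $\exp$ directly, which is arguably more elementary for that third piece.

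There is one point where the reorganization costs you something real. You claim in step (i) that $D(\theta,E)$ is bounded above and below by positive constants ``independent of $(\theta,E)$,'' using $|E(B_\theta(0))|$ bounded by joint continuity on compact $\Psi$ and ``local analysis in $\theta$.'' But the assumptions (\cref{assump:compactness,assump:l_smooth_critic,assump:l_smooth_gen}) do \emph{not} require $\Theta$ compact, and the lemma asserts a single uniform Lipschitz constant over all $\theta\in\Theta$. Separate upper and lower bounds on $D$ necessarily depend on $\sup_\theta |E(B_\theta(0))|$, which need not be finite. The paper circumvents this entirely: in the weighted average $\int a\cdot (p_{E,\theta}\circ B_\theta)\,d\eta$ one can multiply numerator and denominator of $p_{E,\theta}\circ B_\theta$ by $\exp(E(B_\theta(0)))$, after which the pointwise bound $\exp(-L_eL_b\|z\|)\le \exp(E(B_\theta(0))-E(B_\theta(z)))\le \exp(L_eL_b\|z\|)$ kills all $\theta$-dependence (this is \cref{lem:integrability}). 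The cancellation happens precisely because the same base-point $B_\theta(0)$ appears in numerator and denominator; it does not hold if you bound $N(\theta)$ and $D(\theta)$ independently. To make your quotient-rule argument yield the claimed uniform constants, you should first reduce each of the three $N$-pieces divided by $D$, and the term $(D'-D)/D$, to integrals against $p_{E,\theta}\circ B_\theta$ or $p_{E,\theta'}\circ B_{\theta'}$ (accounting for the bounded ratio $D(\theta')/D(\theta)$ via the already-established Lipschitz bound on $A$ when $\|\theta-\theta'\|$ is moderate) rather than invoking uniform two-sided bounds on $D$ by itself. With that correction your argument goes through.
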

\begin{proof}
	By \cref{lem:dom}, we have that $\mathcal{L}_{\theta}(E)$ is differentiable, and that
	\begin{align}\label{eq:grad_kale_appendix}
		\partial_{\theta}\mathcal{L}_{\theta}(E) := \int \left(\nabla_x E\circ \B_{\theta} \right)\nabla_{\theta}\B_{\theta}\left(p_{E,\theta}\circ \B_{\theta}\right) d\eta.
	\end{align}
\cref{lem:dom} ensures that $\Vert\partial_{\theta}\mathcal{L}_{\theta}(E)\Vert$ is bounded by some positive constant $C$ that is independent from $E$ and $\theta$.
This implies in particular that $\mathcal{L}_{\theta}(E)$ is Lipschitz with a constant $C$. We will now  show that it is also smooth. For this, we need to control the difference 
	\[D:= \Vert \partial_{\theta}\mathcal{L}_{\theta}(E) - \partial_{\theta}\mathcal{L}_{\theta'}(E) \Vert. \]
We have by triangular inequality:
	\begin{align}
		D \leq & \underbrace{\int \left \Vert \nabla_x E\circ \B_{\theta}-\nabla_x E\circ \B_{\theta'}  \right\Vert \Vert \nabla_{\theta} \B_{\theta} \Vert \left(p_{E,\theta}\circ \B_{\theta}\right)d\eta}_{I}\\ 
			&+\underbrace{\int \Vert \nabla_x E\circ \B_{\theta} \Vert\Vert \nabla_{\theta}\B_{\theta} - \nabla_{\theta}\B_{\theta'}\Vert \left(p_{E,\theta}\circ \B_{\theta}\right)d\eta}_{II}\\
			&+ \underbrace{\int \Vert \nabla_x E\circ \B_{\theta}\nabla_{\theta}\B_{\theta}  \Vert \vert p_{E,\theta}\circ \B_{\theta} - p_{E,\theta'}\circ \B_{\theta'}\vert d\eta}_{III}.
	\end{align}
The first term can be upper-bounded using $L_e$-smoothness of $E$ and the fact that $\B_{\theta}$ is Lipschitz in $\theta$:
\begin{align}
	I &\leq L_e\Vert\theta-\theta' \Vert \int \vert a \vert^2 (p_{E,\theta}\circ \B_{\theta}) d\eta\\
		&\leq 
		L_eC\Vert\theta-\theta' \Vert.
\end{align}
The last inequality was obtained by \cref{lem:integrability}.
Similarly, using that $\nabla_{\theta}\B_{\theta}$ is Lipschitz, it follows by \cref{lem:integrability} that
\begin{align}
	II & \leq L_e\Vert\theta-\theta' \Vert\int \vert b\vert (p_{E,\theta}\circ \B_{\theta}) d\eta\\
	& \leq L_eC\Vert\theta-\theta' \Vert.
\end{align}
Finally, for the last term $III$, we first consider a path $\theta_t   =  t\theta + (1-t)\theta' $ for $t\in[0,1],$ and introduce the function $s(t):= p_{E,\theta_t}\circ \B_{\theta_t}$. We will now control the difference $p_{E,\theta}\circ \B_{\theta} - p_{E,\theta'}\circ \B_{\theta'},$ also equal to $s(1)-s(0)$. Using the fact that $s_t$ is absolutely continuous we have that $s(1)-s(0) = \int_{0}^1 s'(t)dt$. The derivative $s'(t)$ is simply given by $s'(t) =(\theta-\theta')^{\top}(M_t-\bar{M}_t)s(t)$ where $M_t =   (\nabla_x E\circ B_{\theta_t})\nabla_{\theta}\B_{\theta_t} $ and $\bar{M}_t = \int M_t p_{E,\theta_t}\circ \B_{\theta_t}  d\eta$. Hence, 
\begin{align}
	 s(1)-s(0) =& (\theta-\theta')^{\top} \int_0^1 (M_t-\bar{M_t}) s(t)dt.
\end{align}
We also know that $M_t$ is upper-bounded by $L a(z),$ which implies
\begin{align}
	III &\leq L_e^2\Vert \theta-\theta'\Vert\int_0^1 \left(\int  \vert a(z) \vert^2 s(t)(z) d\eta(z)+ \left(\int a(z) s(t)(z) d\eta(z)\right)^2  \right) \\
		& \leq L_e^2(C+C^2)\Vert \theta- \theta'\Vert,
\end{align}
where the last inequality is obtained using \cref{lem:integrability}. 
This allows us to conclude that $\mathcal{L}_{\theta}(E)$ is smooth for any $E\in\mathcal{E}$ and $\theta\in \Theta$.
\end{proof}
\begin{lemma}\label{lem:dom}
	Under \cref{assump:l_smooth_critic,assump:l_smooth_gen}, it holds that $ \psi \mapsto \mathcal{L}_{\theta}(E_{\psi})$ is continuous,  and that $ \theta \mapsto \mathcal{L}_{\theta}(E_{\psi})$ is differentiable in $\theta$ with gradient given by
	\begin{align}
		\partial_{\theta}\mathcal{L}_{\theta}(E) := \int \left(\nabla_x E\circ \B_{\theta} \right)\nabla_{\theta}\B_{\theta}\left(p_{E,\theta}\circ \B_{\theta}\right) d\eta.
	\end{align}
	Moreover, the gradient is bounded uniformly in $\theta$ and $E$:
\begin{align}
	  		  	\Vert \nabla_{\theta}\mathcal{L}_{\theta}(E)\Vert \leq  L_e \left( \int  \exp(-L_eL_b \Vert z \Vert ) \diff \eta(z) \right)^{-1} \int a(z) \exp( L_eL_b \Vert z \Vert  ) \diff \eta(z).
	  \end{align}  
	
\end{lemma}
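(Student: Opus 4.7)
The plan is to unfold $\mathcal{L}_\theta(E) = -\int E\,d\PP - \log Z_{\BB_\theta,E}$ with $Z_{\BB_\theta,E} = \int \exp(-E(\B_\theta(z)))\,d\eta(z)$, and then handle the three assertions (continuity in $\psi$, differentiability in $\theta$, uniform gradient bound) by successive applications of dominated convergence, anchored on the composite Lipschitz bound $|E\circ\B_\theta(z) - E\circ\B_\theta(0)| \le L_e L_b \|z\|$ together with the integrability hypotheses in \cref{assump:l_smooth_gen}.

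First, I would treat continuity in $\psi$. The data term $\psi \mapsto \int E_\psi\,d\PP$ is continuous by DCT: $E_\psi$ is jointly continuous in $(\psi,x)$ (\cref{assump:l_smooth_critic}) and is dominated by $|E_\psi(x_0)| + L_e\|x-x_0\|$, which is uniformly integrable against $\PP$ by compactness of $\Psi$. For the log-partition term, the same joint continuity gives pointwise convergence of $\exp(-E_\psi(\B_\theta(z)))$, and the composite Lipschitz bound yields the envelope $\exp(-E_{\psi_0}(\B_\theta(0)) + L_e L_b\|z\|)$, which is $\eta$-integrable by the second integrability condition in \cref{assump:l_smooth_gen}. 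A second DCT gives continuity of $Z_{\BB_\theta,E_\psi}$, which is moreover bounded below away from zero uniformly on compact sets using the lower Lipschitz bound, so taking the log preserves continuity.

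Next, for differentiability in $\theta$ I would differentiate under the integral sign inside $Z_{\BB_\theta,E}$. Formally,
\begin{align*}
\partial_\theta Z_{\BB_\theta,E} = -\int (\nabla_x E \circ \B_\theta)\,\nabla_\theta \B_\theta(z)\,\exp(-E(\B_\theta(z)))\,d\eta(z),
\end{align*}
and the claimed gradient formula then follows from the chain rule applied to $-\log Z_{\BB_\theta,E}$. Justification of the exchange of $\partial_\theta$ and $\int$ is by the standard Leibniz rule: the integrand is differentiable in $\theta$ for every $z$, and its derivative in $\theta$ is bounded in norm by $L_e\, a(z)\,\exp(-E(\B_\theta(0)) + L_e L_b \|z\|)$, using $\|\nabla_x E\| \le L_e$ and the pointwise Lipschitz-in-$\theta$ bound $\|\nabla_\theta \B_\theta(z)\| \le a(z)$. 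This envelope is integrable by the first integrability condition in \cref{assump:l_smooth_gen}, so Leibniz applies on a neighborhood of each $\theta$.

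Finally, the uniform gradient bound comes from the same Lipschitz bookkeeping. Using $|E\circ\B_\theta(z) - E\circ\B_\theta(0)| \le L_e L_b\|z\|$ in both directions yields
\begin{align*}
\frac{\exp(-E(\B_\theta(z)))}{Z_{\BB_\theta,E}} \le \frac{\exp(L_e L_b \|z\|)}{\int \exp(-L_e L_b \|z'\|)\,d\eta(z')},
\end{align*}
and multiplying by $L_e\, a(z)$ and integrating against $d\eta$ gives exactly the stated bound. The main obstacle, conceptually, is the $E$-dependent and $\theta$-dependent normaliser $Z_{\BB_\theta,E}$: a naive envelope would carry factors of $E(\B_\theta(0))$ that blow up over $\mathcal{E}\times\Theta$, which is why the two-sided Lipschitz sandwich, cancelling the $E(\B_\theta(0))$ factor between numerator and denominator, is the essential trick used everywhere above. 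Everything else is bookkeeping through \cref{assump:compactness,assump:l_smooth_critic,assump:l_smooth_gen}.
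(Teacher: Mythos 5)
Your proof is correct and takes essentially the same route as the paper: dominated convergence for continuity in $\psi$, the Leibniz/differentiation-under-the-integral rule for differentiability in $\theta$, and the two-sided Lipschitz sandwich $\exp(-L_eL_b\|z\|)\le\exp(E(\B_\theta(0))-E(\B_\theta(z)))\le\exp(L_eL_b\|z\|)$ to cancel the $E(\B_\theta(0))$ factor between numerator and denominator of $p_{E,\theta}\circ\B_\theta$, which is precisely the content of the paper's auxiliary integrability lemma. One small imprecision: in the continuity argument your stated envelope $\exp(-E_{\psi_0}(\B_\theta(0))+L_eL_b\|z\|)$ is only valid at $\psi_0$ itself; to invoke dominated convergence as $\psi\to\psi_0$ you should replace $-E_{\psi_0}(\B_\theta(0))$ with a finite uniform bound $C=\sup_{\psi\in W}|E_\psi(\B_\theta(0))|$ over a compact neighborhood $W$ of $\psi_0$ (finite by joint continuity), exactly as the paper does.
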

\begin{proof}
	To show that $ \psi \mapsto \mathcal{L}_{\theta}(E_{\psi})$ is continuous, we will use the dominated convergence theorem. We fix $\psi_{0}$ in the interior of $\Psi$ and consider a compact neighborhood $W$ of $\psi_0$. By assumption, we have that $(\psi,x) \mapsto E_{\psi}(x)$ and  $(\psi,z) \mapsto E_{\psi}(\B_{\theta}(z))$ are jointly continuous. Hence, $\vert E_{\psi}(0)\vert $ and $\vert E_{\psi}(\B_{\theta}(0))\vert $ are bounded on $W$ by some constant $C$.
	Moreover, by Lipschitz continuity of $x\mapsto E_{\psi}$, we have
	\begin{align}
		\vert E_{\psi}(x)\vert &\leq \vert E_{\psi}(0)\vert + L_e\Vert x\Vert \leq C +  L_e\Vert x\Vert,\\
		\exp(-E(\B_{\theta}(z))) &\leq \exp(-E(\B_{\theta}(0))) \exp(L_eL_b \Vert z \Vert)\leq \exp(C) \exp(L_eL_b \Vert z \Vert).
	\end{align}
	Recalling that $\PP$ admits a first order moment and that by \cref{assump:l_smooth_gen}, $\exp(L_eL_b \Vert z \Vert)$ is integrable w.r.t. $\eta$, it follows by the dominated convergence theorem and by composition of continuous functions that $ \psi \mapsto \mathcal{L}_{\theta}(E_{\psi})$ is continuous in $\psi_0$.
	
	To show that $ \theta \mapsto \mathcal{L}_{\theta}(E_{\psi})$ is differentiable in $\theta$, we will use the differentiation lemma in  \cite[Theorem 6.28]{Klenke:2008}. We first fix $\theta_0$ in the interior of $\Theta$, and  consider a compact neighborhood $V$ of $\theta_0$. Since $\theta\mapsto \vert E(\B_{\theta}(0))\vert  $ is continuous on the compact neighborhood $V$ it admits a maximum value $C$; hence we have using \cref{assump:l_smooth_critic,assump:l_smooth_gen} that
	\begin{align}
		\exp(-E(\B_{\theta}(z)))\leq \exp(-E(\B_{\theta}(0))) \exp(L_eL_b \Vert z \Vert)\leq \exp(C) \exp(L_eL_b \Vert z \Vert).
	\end{align} 
	Along with the integrability assumption in \cref{assump:l_smooth_gen}, this ensures that $z\mapsto \exp( -E(\B_{\theta}(z)))$ is integrable w.r.t $\eta$ for all $\theta$ in $V$. We also have that $\exp( -E(\B_{\theta}(z)))$ is differentiable, with gradient given by
	\begin{align}
		\nabla_{\theta}  \exp( -E(\B_{\theta}(z)))= \nabla_x E(\B_{\theta}(z)) \nabla_{\theta} \B_{\theta}(z)   \exp( -E(\B_{\theta}(z))).
	\end{align}
	 Using that $E$ is Lipschitz in its inputs and $\B_{\theta}(z)$ is Lipschitz in $\theta$,  and combining  with the previous inequality, it follows that
	 \begin{align}
	 	\Vert \nabla_{\theta}  \exp( -E(\B_{\theta}(z)))\Vert \leq \exp(C)L_e a(z)\exp(L_eL_b\Vert  z\Vert ),
	 \end{align}  
	  where $a(z)$ is the location dependent Lipschitz constant introduced in \cref{assump:l_smooth_gen}.  The r.h.s. of the above inequality is integrable by \cref{assump:l_smooth_gen} and is independent of $\theta$ on the neighborhood $V$. Thus \cite[Theorem 6.28]{Klenke:2008} applies, and it follows that
	  \begin{align}
	  	\nabla_{\theta} \int  \exp( -E(\B_{\theta_0}(z))) \diff \eta(z) = \int \nabla_x E(\B_{\theta_0}(z)) \nabla_{\theta} \B_{\theta_0}(z)   \exp( -E(\B_{\theta_0}(z))) \diff \eta(z).
	  \end{align}
	  We can now directly compute the gradient of $\mathcal{L}_{\theta}(E)$,
	  \begin{align}
	  	\nabla_{\theta}\mathcal{L}_{\theta}(E) =  \left( \int  \exp( -E(\B_{\theta_0})) \diff \eta \right)^{-1} \int \nabla_x E(\B_{\theta_0}) \nabla_{\theta} \B_{\theta_0}   \exp( -E(\B_{\theta_0})) \diff \eta.
	  \end{align}
	  Since $E$ and $\B_{\theta}$ are Lipschitz in $x$ and $\theta$ respectively, it follows that $ \Vert \nabla_x E(\B_{\theta_0}(z)) \Vert \leq L_e$ and $ \Vert \nabla_{\theta} \B_{\theta_0}(z)\Vert \leq a(z)$. Hence, we have
	  \begin{align}
	  	\Vert \nabla_{\theta}\mathcal{L}_{\theta}(E)\Vert \leq  L_e \int a(z)  (p_{E,\theta}\circ \B_{\theta}(z)) d\eta(z).
	  \end{align}
	  Finally, \cref{lem:integrability} allows us to conclude that $\Vert \nabla_{\theta}\mathcal{L}_{\theta}(E)\Vert$ is bounded by a positive constant $C$ independently from $\theta$ and $E$.
\end{proof}
\begin{lemma}\label{lem:integrability}
Under \cref{assump:l_smooth_critic,assump:l_smooth_gen}, there exists a constant $C$ independent from $\theta$ and $E$ such that
	\begin{align}\label{eq:inequalities_a_b}
	&\int  a^i(z)  (p_{E,\theta}\circ \B_{\theta}(z)) d\eta(z) < C,\\
		&\int b(z)(p_{E,\theta}\circ \B_{\theta}(z)) d\eta(z) <C,
	\end{align}
	for $i \in {1,2}$.
\end{lemma}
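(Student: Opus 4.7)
The key observation is that under the Lipschitz assumptions, $E \circ B_\theta$ is $L_e L_b$-Lipschitz in $z$, which allows us to bound $\exp(-E(B_\theta(z)))$ relative to its value at $z=0$ from both above and below, and then the $\theta$- and $E$-dependent factor $\exp(-E(B_\theta(0)))$ cancels in the ratio $p_{E,\theta}\circ B_\theta$.

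First, I would use the composition of the Lipschitz bounds in \cref{assump:l_smooth_critic,assump:l_smooth_gen} to write
\begin{align*}
\exp(-E(B_\theta(0)))\exp(-L_e L_b \|z\|) \;\leq\; \exp(-E(B_\theta(z))) \;\leq\; \exp(-E(B_\theta(0)))\exp(L_e L_b \|z\|).
\end{align*}
Integrating the lower bound against $\eta$ gives
\begin{align*}
Z_{B_\theta,E} \;=\; \int \exp(-E(B_\theta(z)))\, d\eta(z) \;\geq\; \exp(-E(B_\theta(0))) \cdot c_0, \qquad c_0 := \int \exp(-L_e L_b \|z\|)\, d\eta(z),
\end{align*}
where $c_0 > 0$ is a strictly positive constant independent of $\theta$ and $E$ (its finiteness and positivity follow since $\eta$ is a probability measure and $\exp(-L_e L_b \|z\|) \in (0,1]$).

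Next, taking the ratio of the upper bound on $\exp(-E(B_\theta(z)))$ to the lower bound on $Z_{B_\theta,E}$, the factor $\exp(-E(B_\theta(0)))$ cancels exactly, yielding the uniform pointwise bound
\begin{align*}
p_{E,\theta}(B_\theta(z)) \;\leq\; \frac{1}{c_0} \exp(L_e L_b \|z\|),
\end{align*}
which depends neither on $E \in \mathcal{E}$ nor on $\theta \in \Theta$.

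Finally, I would multiply by $a^i(z)$ (resp.\ $b(z)$) and integrate against $\eta$, invoking the integrability hypotheses in \cref{assump:l_smooth_gen}: for $i=2$ and for the $b$-integral, these are stated directly. For $i=1$, the finiteness of $\int a(z)\exp(L_e L_b \|z\|)\, d\eta(z)$ follows from Cauchy--Schwarz applied to $a(z)\exp(L_e L_b \|z\|) = [a(z)\exp(L_e L_b \|z\|)]\cdot 1$, using $\int a(z)^2\exp(2L_e L_b \|z\|)\, d\eta < \infty$ together with $\eta$ being a probability measure. Setting $C$ to the maximum of the three resulting $(E,\theta)$-independent quantities divided by $c_0$ yields the claim. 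No step here is technically difficult; the only subtle point is recognizing that $\exp(-E(B_\theta(0)))$ cancels in the ratio so that the integrability assumptions, which do not involve $E$ or $\theta$, suffice to produce a uniform bound.
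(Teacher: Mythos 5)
Your proof is correct and follows essentially the same route as the paper's: use the Lipschitz compositions to bound $\exp(-E(\B_\theta(z)))$ above and below by $\exp(-E(\B_\theta(0)))\exp(\pm L_eL_b\|z\|)$, cancel the $\theta,E$-dependent factor $\exp(-E(\B_\theta(0)))$ in the ratio defining $p_{E,\theta}\circ\B_\theta$, and then invoke the integrability hypotheses. Your explicit Cauchy--Schwarz argument for the $i=1$ case is a useful clarification that the paper leaves implicit, since \cref{assump:l_smooth_gen} only directly supplies $\int a^2\exp(2L_eL_b\|z\|)\,d\eta < \infty$ and $\int\exp(L_eL_b\|z\|)\,d\eta < \infty$, not $\int a\exp(L_eL_b\|z\|)\,d\eta < \infty$.
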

\begin{proof}
	By Lipschitzness of $E$ and $\B_{\theta}$, we have   $ \exp( -L_eL_b \Vert z \Vert)  \leq \exp( E(\B_{\theta}(0))-E(\B_{\theta}(z)) \leq \exp( L_eL_b \Vert z \Vert )$, thus introducing the factor $\exp(E(B_{\theta_0}(0))$  in \cref{eq:inequalities_a_b} we get 
		  \begin{align}
	  		  	\int  a^i(z)  (p_{E,\theta}\circ \B_{\theta}(z)) d\eta(e) &\leq  L_e \left( \int  \exp(-L_eL_b \Vert z \Vert ) \diff \eta(z) \right)^{-1} \int a(z)^i \exp( L_eL_b \Vert z \Vert  ) \diff \eta(z),\\
	  		  	\int b(z)(p_{E,\theta}\circ \B_{\theta}(z)) d\eta(z) &\leq L_e \left( \int  \exp(-L_eL_b \Vert z \Vert ) \diff \eta(z) \right)^{-1} \int b(z) \exp( L_eL_b \Vert z \Vert  ) \diff \eta(z).
	  \end{align}
	  The r.h.s. of both inequalities is independent of $\theta$ and $E,$ and finite by the integrability assumptions in \cref{assump:l_smooth_gen}.
\end{proof}

\section{Image Generation}\label{sec:image_generation_appendix}

\cref{fig:image_samples_lmc_1,fig:image_samples_lmc_2} show sample trajectories using \cref{alg:langevin} with no friction $\gamma=0$ for the 4 datasets. It is clear that along the same MCMC chain, several image modes are explored. We also notice the transition from a mode to another happens almost at the same time for all chains and corresponds to the gray images. This is unlike Langevin or when the friction coefficient $\gamma$ is large as in \cref{fig:image_samples_all}. In that case each chain remains within the same mode. 

\cref{tab:further_exps} shows further comparisons with other methods on Cifar10 and ImageNet 32x32.
\begin{table}
\center
\begin{tabular}{ll}
\midrule
Model                                   & FID   \\ \midrule
\textbf{Cifar10 Unsupervised}                   &       \\ \midrule
PixelCNN  \cite{Oord:2016}      & 65.93 \\
PixelIQN \cite{Ostrovski:2018}      & 49.46 \\
EBM  \cite{Radford:2015a}            & 38.2  \\
WGAN-GP \cite{Gulrajani:2017}        & 36.4  \\
NCSN  \cite{Ho:2016}          & 25.32 \\
SNGAN \cite{Miyato:2018}          & 21.7  \\
GEBM (ours)                                   & \textbf{19.31}   \\ \midrule
\textbf{Cifar10 Supervised}                     &       \\ \midrule
BigGAN {\small \cite{Donahue:2019} }                                & 14.73 \\ 
SAGAN \cite{Zenke:2017} & 13.4  \\\midrule
\textbf{ImageNet Conditional}              &       \\ \midrule
PixelCNN                                & 33.27 \\
PixelIQN                                & 22.99 \\
EBM                                     & 14.31 \\ \midrule
\textbf{ImageNet Supervised}            &       \\ \midrule
SNGAN                                   & 20.50 \\
GEBM  (ours)                            & \textbf{13.94}\\\midrule
\end{tabular}
\caption{FID scores on ImageNet and CIFAR-10.}
\label{tab:further_exps}
\end{table}

\begin{figure}[h]
\centering
  	\includegraphics[width=0.49\linewidth]{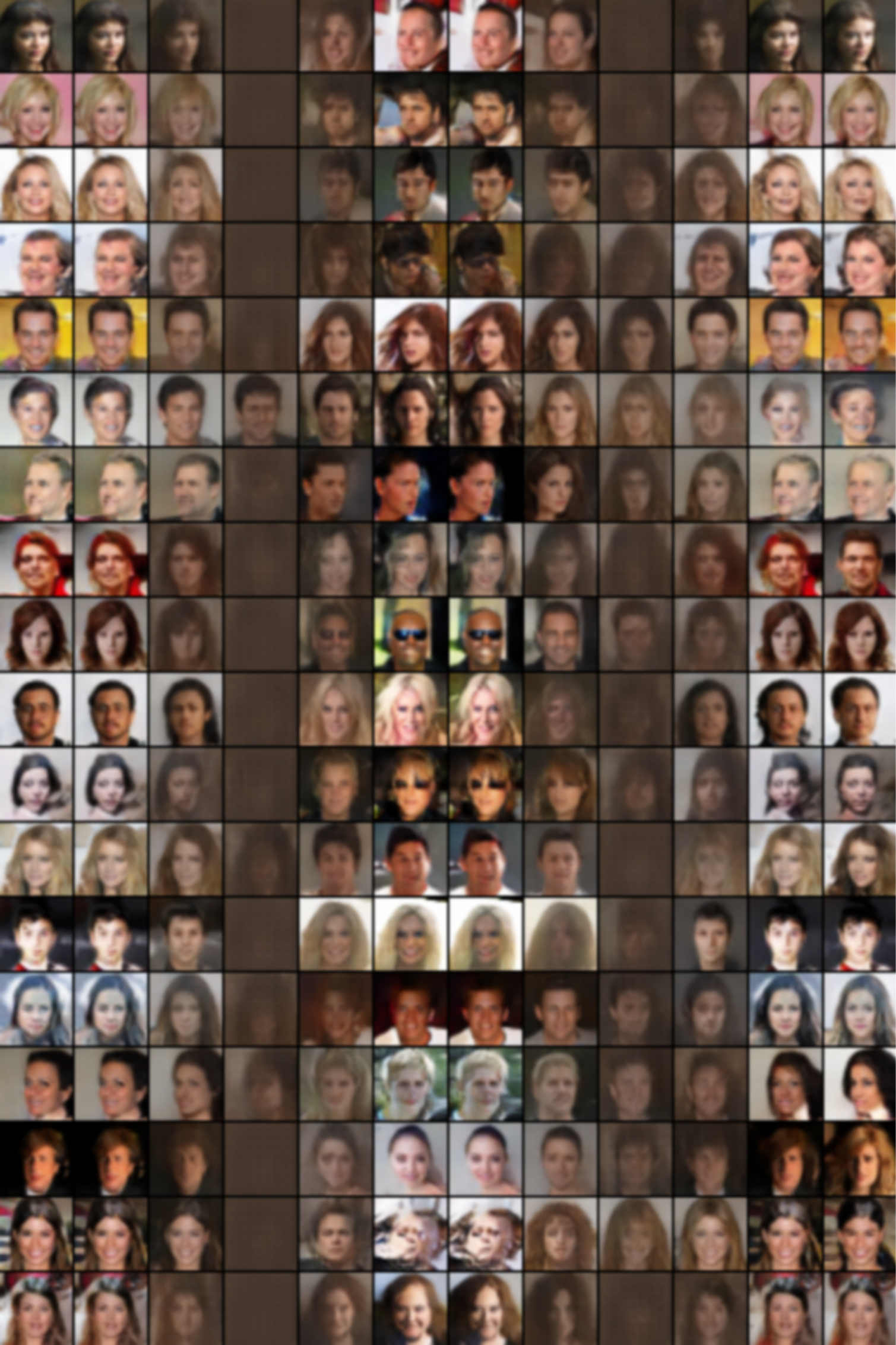}
	\includegraphics[width=0.49\linewidth]{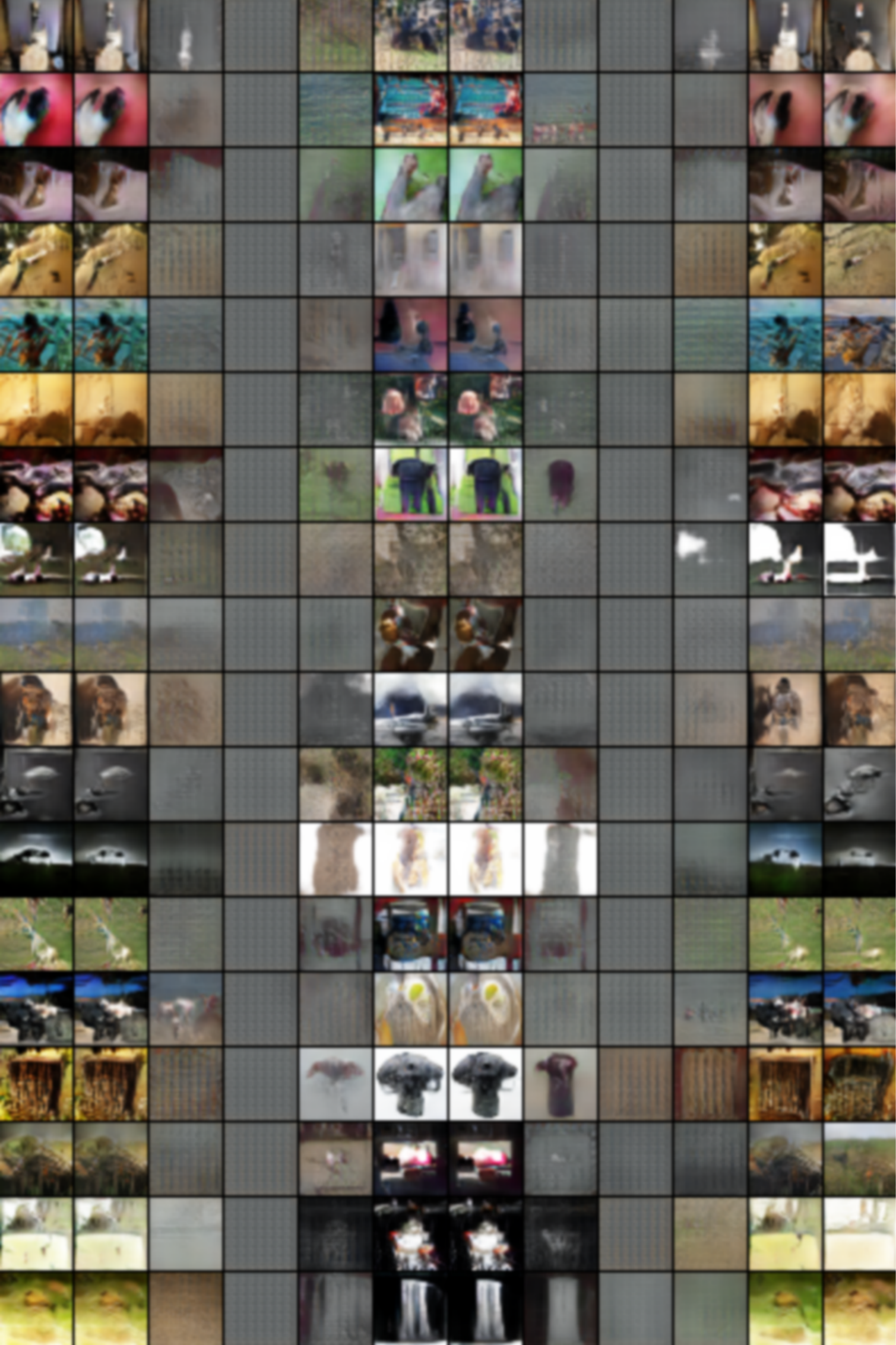}
\caption{Samples from the GEBM at different stages of sampling using \cref{alg:langevin} and inverse temperature $\beta=1$, on CelebA (Left), Imagenet (Right). Each row represents a sampling trajectory from early stages (leftmost images) to later stages (rightmost images).}
\label{fig:image_samples_lmc_1}
\end{figure}

  \begin{figure}[h]
\centering
  	\includegraphics[width=0.49\linewidth]{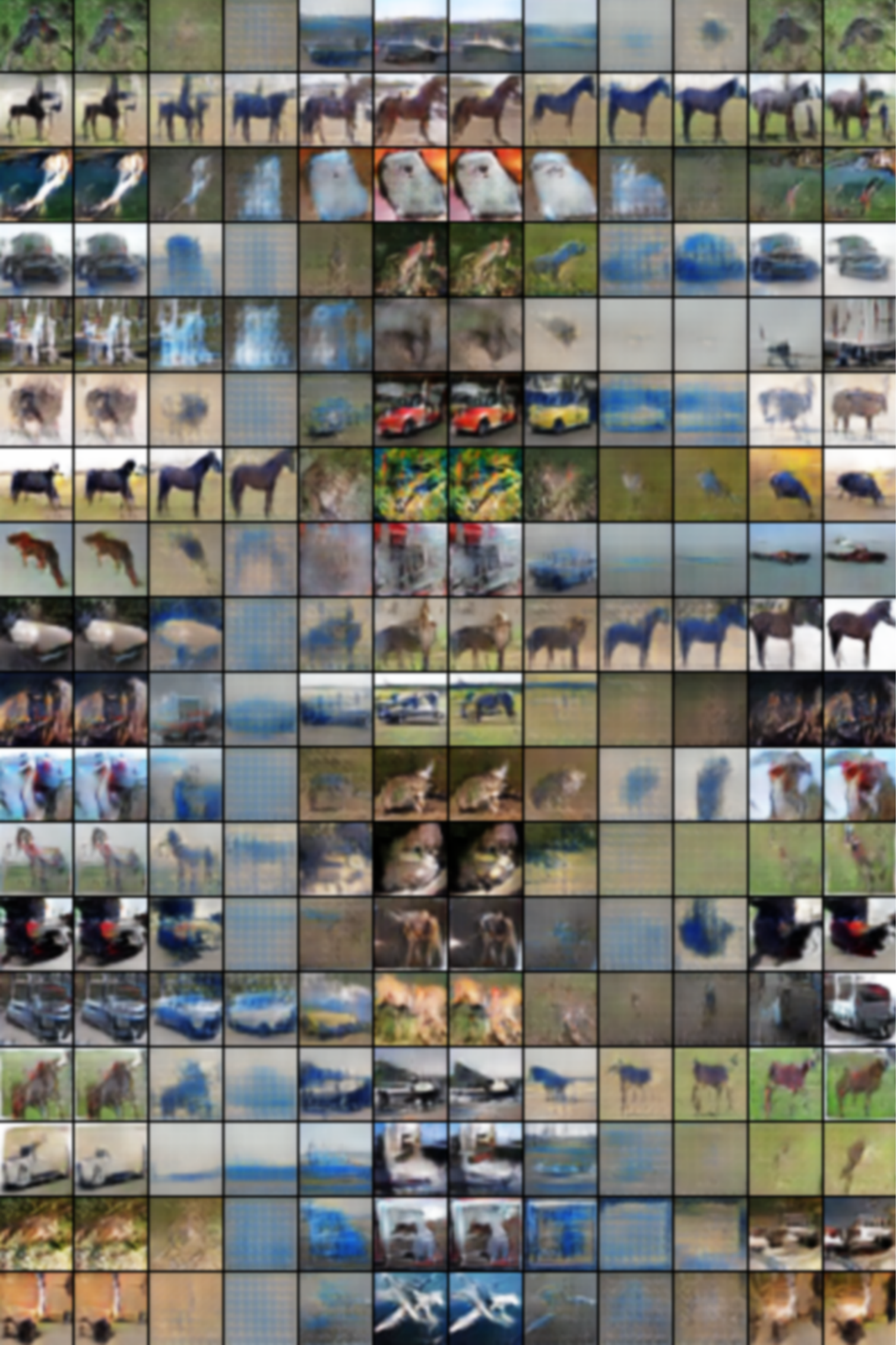}
	\includegraphics[width=0.49\linewidth]{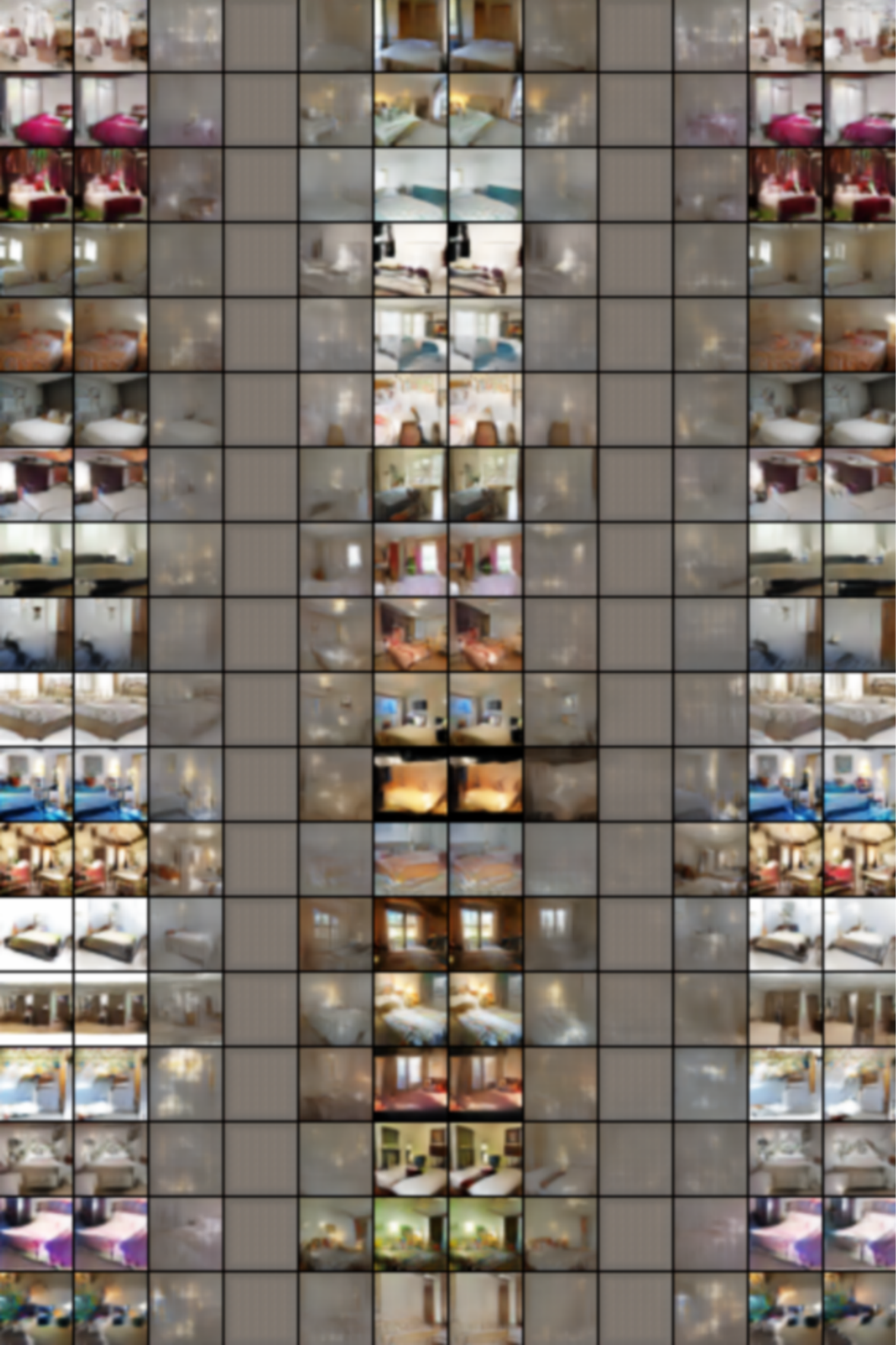}
\caption{Samples from the GEBM at different stages of sampling using \cref{alg:langevin} and inverse temperature $\beta=1$, on Cifar10 and LSUN (Right). Each row represents a sampling trajectory from early stages (leftmost images) to later stages (rightmost images).}
\label{fig:image_samples_lmc_2}
\end{figure}

\begin{figure}[h]
\centering
  	\includegraphics[width=0.24\linewidth]{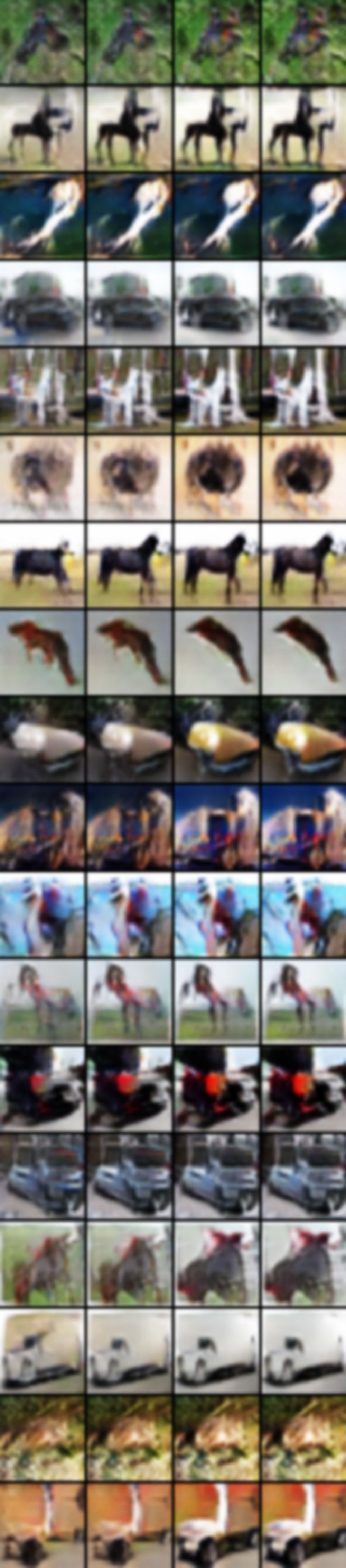}
	\includegraphics[width=0.24\linewidth]{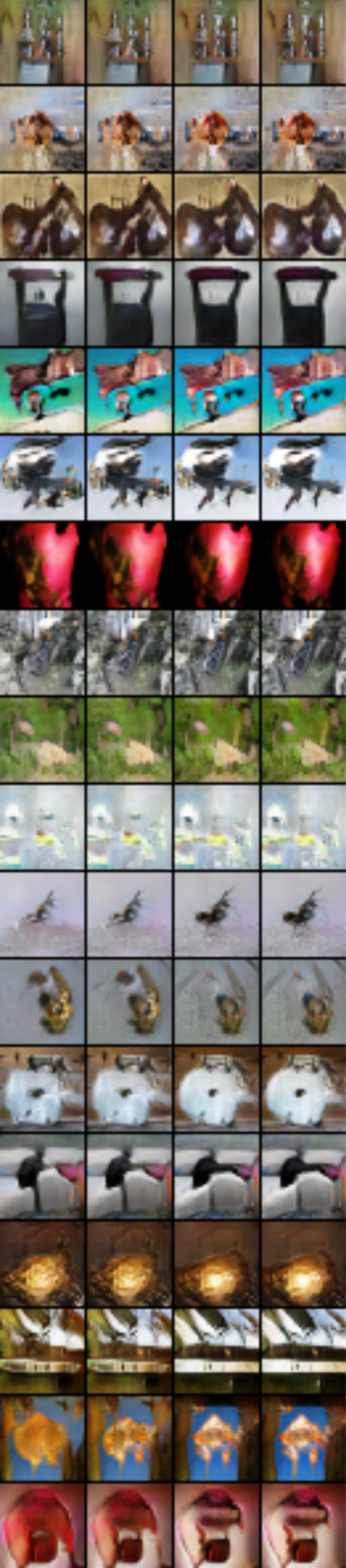}
	\includegraphics[width=0.24\linewidth]{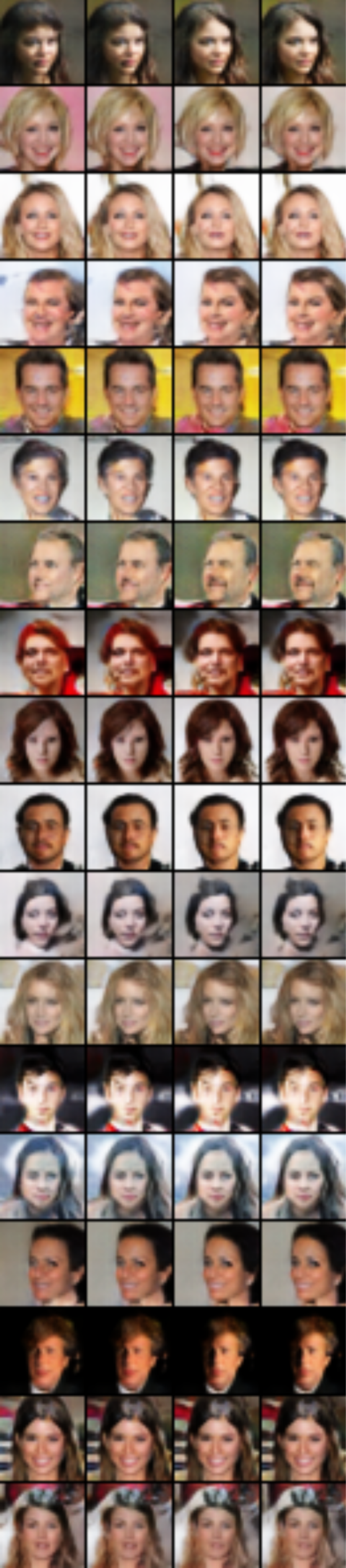}
	\includegraphics[width=0.24\linewidth]{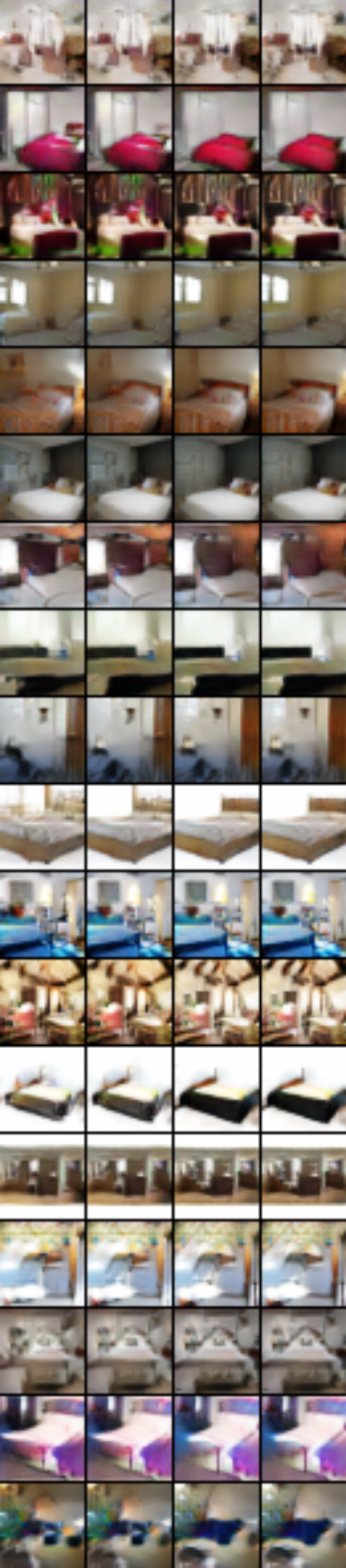}
\caption{Samples from the tempered GEBM at different stages of sampling using langevin and inverse temperature $\beta=100$, on Cifar10 (Left), Imagenet (Middle-left), CelebA (Middle-Right) and LSUN (Right). Each row represents a sampling trajectory from early stages (leftmost images) to later stages (rightmost images).}
\label{fig:image_samples_all}
\end{figure}

\begin{figure}[h]
\centering
  	\includegraphics[width=.7\linewidth]{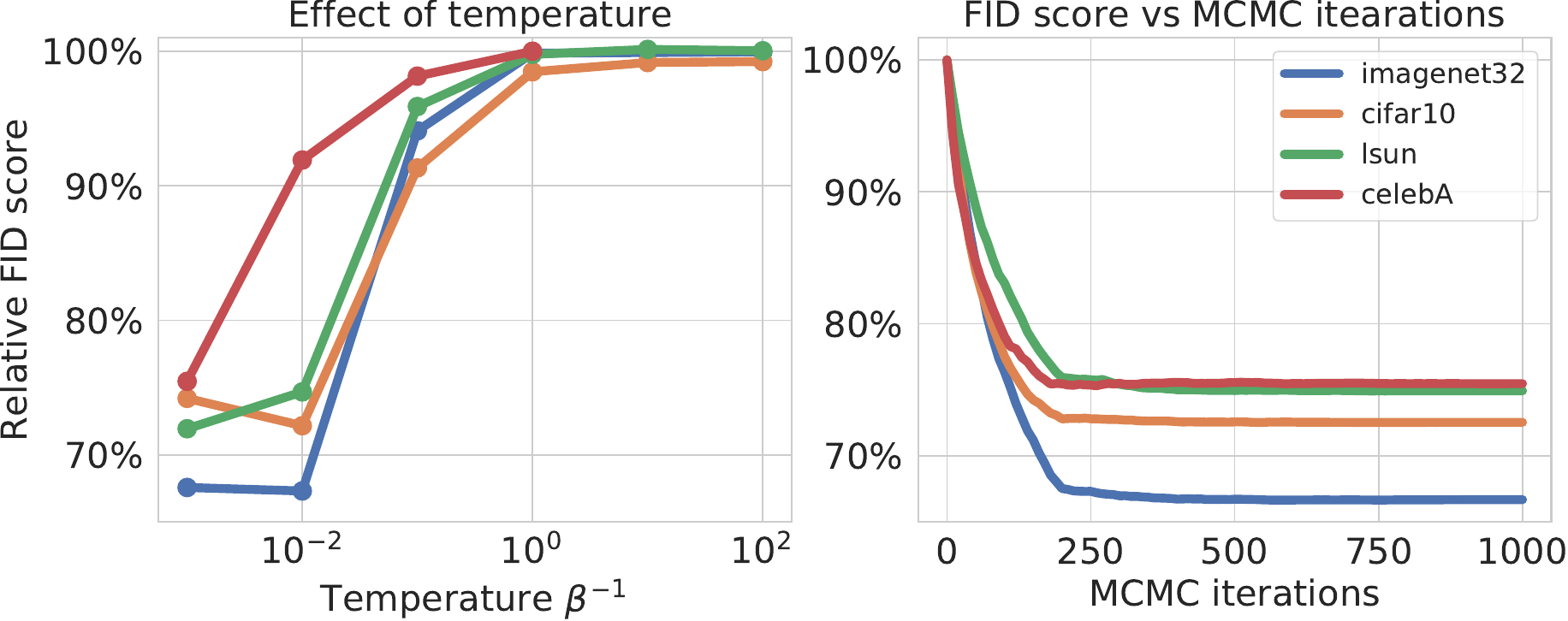}
\caption{ Relative FID score: ratio between  FID score of the GEBM $\QQ_{\BB,E}$ and its base $\BB$. (Left) Evolution of the ratio for increasing temperature on the 4 datasets after 1000 iterations of \cref{eq:latent_langevin_diffusion}. 
(Right) Evolution of the same ratio during MCMC iteration using  \cref{eq:latent_langevin_diffusion}. 
}
\label{fig:sampling}
\end{figure}

\section{Density Estimation}\label{sec:density_estimation_appendix}

Figure \cref{fig:UCI} (left) shows the error in the estimation of the log-partition function using both methods (KALE-DV and KALE-F). KALE-DV estimates the negative log-likelihood on each batch of size $100$ and therefore has much more variance than KALE-F which maintains the amortized estimator of the log-partition function.

Figure \cref{fig:UCI} (right) shows the evolution of the negative log-likelihood (NLL) on both training and test sets per epochs for RedWine and Whitewine datasets. The error decreases steadily in the case of KALE-DV and KALE-F while the error gap between the training and test set remains controlled. Larger gaps are observed for both direct maximum likelihood estimation and Contrastive divergence although the training NLL tends to decrease faster than for KALE.

\begin{figure}[h]
\centering
  	\includegraphics[width=.9\linewidth]{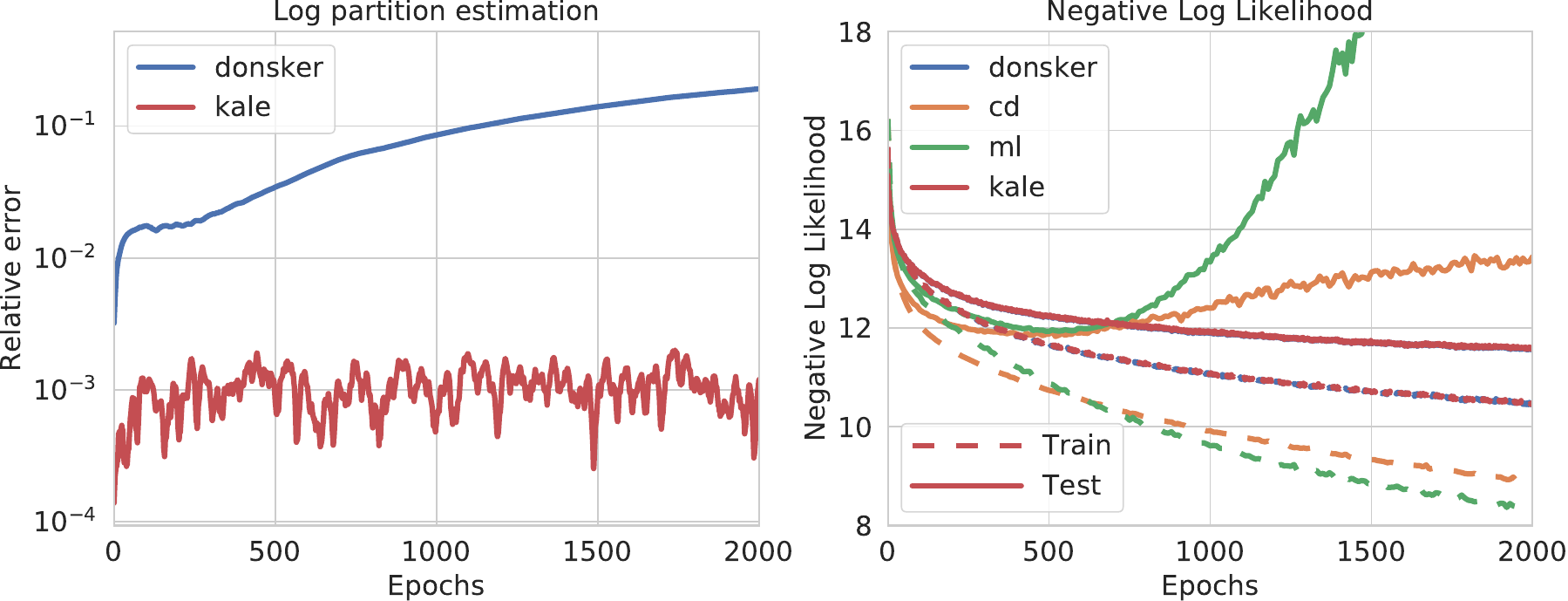}
  	
  	\includegraphics[width=.9\linewidth]{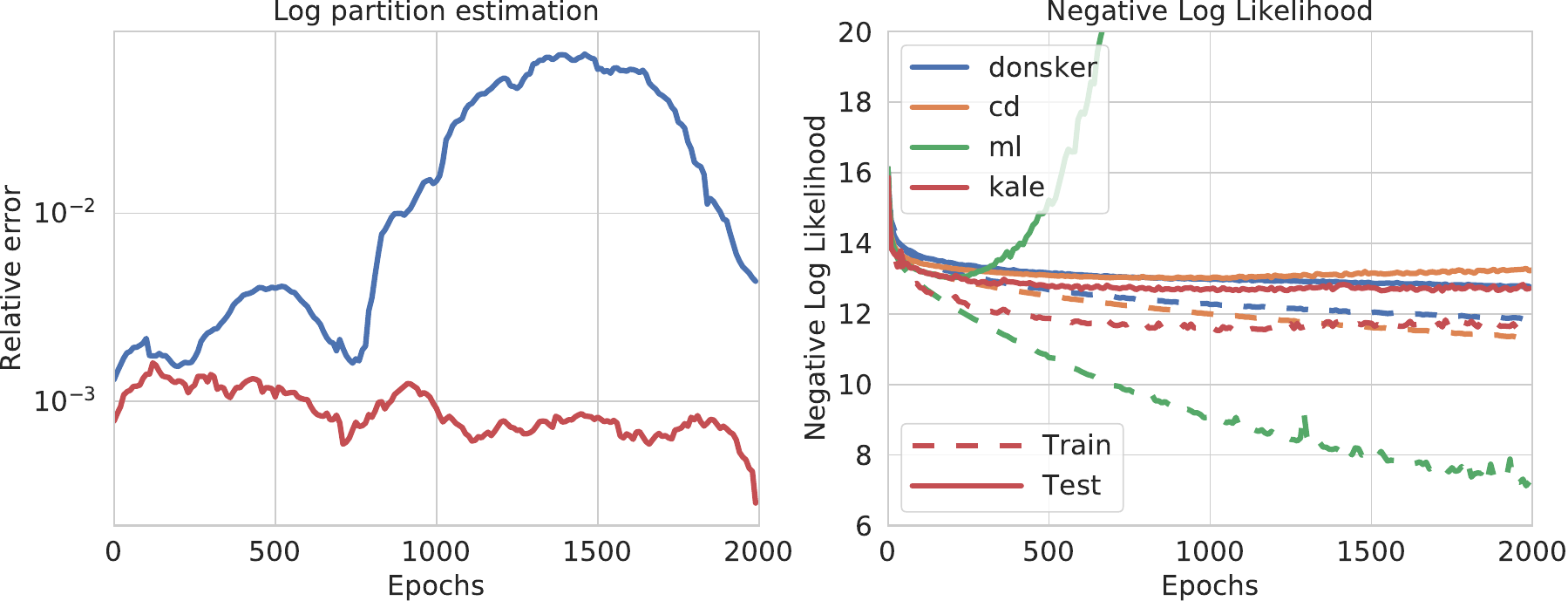}
\caption{(Left): Relative error  $\frac{\vert\hat{c} -c^{\star} \vert}{\vert \hat{c}\vert + \vert c^{\star}\vert }$ on the estimation of the ground truth log-partition function $c^{*}$ by $\hat{c}$ using either KALE-DV or KALE-F vs training Epochs on RedWine (Top) and WhiteWine (Bottom) datasets. (Right): Negative log likelihood  vs training epochs on both training and test set for 4 different learning methods (KALE-DV,KALE-F, CD and ML) on RedWine dataset.}
\label{fig:UCI}
\end{figure}

\section{Algorithms}\label{sec:Algorithms}
{\bf Estimating the variational parameter.} Optimizing \cref{eq:estimator_variational_lower_bound} exactly over $A$ yields \cref{eq:DV}, with the optimal $A$ equal to $\tilde{A} = \log(\frac{1}{M} \sum_{m=1}^M \exp(-E(Y_m))  ) $. However, to maintain an amortized estimator of the log-partition we propose to optimize  \cref{eq:estimator_variational_lower_bound} iteratively using second order updates:
\begin{align}\label{eq:second_order_update}
 A_{k+1}= A_k - \lambda ( \exp(A_k -\tilde{A}_{k+1})  -1  ),\qquad	A_0 = \tilde{A}_{0}
\end{align}
where $\lambda$ is a learning rate and $\tilde{A}_{k+1}$ is the empirical log-partition function estimated from a batch of new samples.
 By leveraging updates from previous iterations,  $A$ can  yield much more accurate estimates of the log-partition function as confirmed empirically in \cref{fig:UCI} of \cref{sec:density_estimation_appendix}. 

{\bf Tempered GEBM.} It can be preferable to sample from a \textit{tempered} version of the model by rescaling the energy $E$ by an \textit{inverse temperature} parameter $\beta$, thus effectively sampling from $\QQ$. \textit{High temperature} regimes ($\beta \rightarrow 0 $) recover the base model $\BB$  while \textit{low temperature} regimes ($ \beta \rightarrow \infty $) essentially sample from minima of the energy $E$. As shown in \cref{sec:experiments}, low temperatures tend to produce better sample quality for natural image generation tasks.
\paragraph{Training} In \cref{alg:kale_gan}, we describe the general algorithm for training a GEBM which alternates between gradient steps on the energy and the generator. An additional regularization, denoted by $I(\psi)$ is used to ensure conditions of \cref{prop:kale_extension,thm:kale_gan_convergence} hold. $I(\psi)$  can include $L_2$ regularization over the parameters $\psi$, a gradient penalty as in \cite{Gulrajani:2017} or Spectral normalization \cite{Miyato:2018}. The energy can be trained either using the estimator in \cref{eq:DV}  (KALE-DV) or the one in \cref{eq:estimator_variational_lower_bound} (KALE-F) depending on the variable $\mathcal{C}$.
  
\paragraph{Sampling} In \cref{alg:langevin}, we describe the MCMC sampler proposed in \cite{Sachs:2017}  which is a time discretization of \cref{eq:latent_langevin_diffusion}.

\begin{algorithm}[H]
\caption{Overdamped Langevin Algorithm}\label{alg:overdamped_langevin}
	\begin{algorithmic}[1]
		\STATE \textbf{Input}  $\lambda$, $\gamma$, $u$,$\eta$,$E$,$\B$		
		\STATE \textbf{Ouput} $X_T$
		\STATE $Z_0\sim \eta$  \textcolor{blue}{ // Sample Initial latent from $\eta$. }
		\FOR{$t=0,\dots, T$}
			\STATE $ Y_{t+1}   \leftarrow  \nabla_z \log \eta(Z_{t}) -  \nabla_z E\circ B (Z_{t})  $ \textcolor{blue}{ // Evaluating $\nabla_z \log(\nu(Z_{t+1})) $ using \cref{eq:posterior_latent}.}
			\STATE $W_{t+1}\sim \mathcal{N}(0,I) $  \textcolor{blue}{ // Sample standard Gaussian noise }
			\STATE $ Z_{t+1}\leftarrow Z_{t} +\lambda Y_{t+1} +\sqrt{2\lambda} W_{t+1}$
		\ENDFOR
		\STATE $X_T  \leftarrow \B(Z_T) $
	\end{algorithmic}
\end{algorithm}

\begin{algorithm}[H]
\caption{Kinetic Langevin Algorithm}\label{alg:langevin}
	\begin{algorithmic}[1]
		\STATE \textbf{Input}  $\lambda$, $\gamma$, $u$,$\eta$,$E$,$\B$		
		\STATE \textbf{Ouput} $X_T$
		\STATE $Z_0\sim \eta$  \textcolor{blue}{ // Sample Initial latent from $\eta$. }
		\FOR{$t=0,\dots, T$}
			\STATE $Z_{t + 1}  \leftarrow Z_t + \frac{\lambda}{2} V_{t}$
			\STATE $ Y_{t+1}   \leftarrow  \nabla_z \log \eta(Z_{t+1}) -  \nabla_z E\circ B (Z_{t+1})  $ \textcolor{blue}{ // Evaluating $\nabla_z \log(\nu(Z_{t+1})) $ using \cref{eq:posterior_latent}.}
			\STATE $V_{t +1} \leftarrow V_t  +\frac{u\lambda}{2}Y_{t+1} $.
			\STATE $W_{t+1}\sim \mathcal{N}(0,I) $  \textcolor{blue}{ // Sample standard Gaussian noise }
			\STATE $ \tilde{V}_{t+1} \leftarrow \exp(-\gamma\lambda ) V_{t+\frac{1}{2}} + \sqrt{u\left(1-\exp(-2\gamma \lambda )\right)} W_{t+1}$
			\STATE $  V_{t+1} \leftarrow \tilde{V}_{t+1} + \frac{u\lambda}{2}Y_{t+1} $
			\STATE $ Z_{t+1}\leftarrow Z_{t+1} + \frac{\lambda}{2}V_{t+1}$
		\ENDFOR
		\STATE $X_T  \leftarrow \B(Z_T) $
	\end{algorithmic}
\end{algorithm}
\section{Experimental details}\label{sec:exp_details}
In all experiments, we use \textbf{regularization} which is a combination of $L_2$ norm and a variant of the gradient penalty \cite{Gulrajani:2017}. For the image generation tasks, we also employ spectral normalization \cite{Miyato:2018}. This is to ensure that the conditions in \cref{prop:kale_extension,thm:kale_gan_convergence} hold. We \textbf{pre-condition} the gradient as proposed in \cite{Simsekli:2020} to stabilize training, and to avoid taking large noisy gradient steps due to the exponential terms in \cref{eq:DV,eq:estimator_variational_lower_bound}. We also use the second-order updates in \cref{eq:second_order_update} for the variational constant $c$ whenever it is learned.

\subsection{Image generation}\label{appendix:image_gen}

\paragraph{Network Architecture}

\cref{tab:sngan_convnet_base} and \cref{tab:sngan_convnet_energy} show the network architectures used for the GEBM in the case of SNGAN ConvNet. \cref{tab:sngan_convnet_base} and \cref{tab:sngan_convnet_energy} show the network architectures used for the GEBM in the case of SNGAN ResNet. The residual connections of each residual block consists of two convolutional layers proceeded by a BatchNormalization and ReLU activation:  \textbf{BN+ReLU+Conv+BN+ReLU+Conv} as in \cite[Figure 8]{Miyato:2018}. 
\begin{table}
\center
\begin{minipage}{0.4\linewidth}
\begin{tabular}{c}
\hhline{=}
$ z\in \mathbb{R}^{100}\sim \mathcal{N}(0,I) $ \\ \hline
dense $\rightarrow$  $M_g\times M_g\times 512$ \\ \hline
$4\times 4$, stride$=2$ deconv. BN 256 ReLU    \\ \hline
$4\times 4$, stride$=2$ deconv. BN 128 ReLU    \\ \hline
$4\times 4$, stride$=2$ deconv. BN 64 ReLU    \\ \hline
$3\times 3$, stride$=1$  conv. 3 Tanh          \\ \hhline{=}
\end{tabular}
\caption{Base/Generator of SNGAN ConvNet: $M_g=4$.}
\label{tab:sngan_convnet_base}
\end{minipage}\hspace{.15\linewidth}
\begin{minipage}{.35\linewidth}
\begin{tabular}{c}
\hhline{=} 
RGB image $x\in \mathbb{R}^{M\times M \times 3}$ \\ \hline
$3\times 3$, stride$=1$ conv $64$ lReLU          \\
$4\times 4$, stride$=2$ conv $64$ lReLU          \\ \hline
$3\times 3$, stride$=1$ conv $128$ lReLU         \\
$4\times 4$, stride$=2$ conv $128$ lReLU         \\ \hline
$3\times 3$, stride$=1$ conv $256$ lReLU         \\
$4\times 4$, stride$=2$ conv $256$ lReLU         \\ \hline
$3\times 3$, stride$=1$ conv $512$ lReLU         \\ \hline
dense $\rightarrow 1$.                           \\ \hhline{=}
\end{tabular}
\caption{Energy/Discriminator of SNGAN ConvNet: $M=32$.}
\label{tab:sngan_convnet_energy}
\end{minipage}
\end{table}

\begin{table}[]
\center
\begin{minipage}{.3\linewidth}
\begin{tabular}{c}
\hhline{=}
RGB image $x\in \mathbb{R}^{M\times M \times 3}$ \\ \hline
ResBlock down $128$                              \\ \hline
ResBlock down $128$                              \\ \hline
ResBlock $128$                                   \\ \hline
ResBlock $128$                                   \\ \hline
ReLu                                             \\ \hline
Global sum pooling                               \\ \hline
dense $\rightarrow 1$                            \\ \hhline{=}
\end{tabular}
\caption{Energy/Discriminator of SNGAN ResNet.}
\label{tab:sngan_Resnet_energy}
\end{minipage}\hspace{.15\linewidth}
\begin{minipage}{.3\linewidth}
\begin{tabular}{l}
\hhline{=}
$z\in \mathbb{R}^{100}\sim\mathcal{N}(0,I)$ \\ \hline
dense, $4\times 4\times 256$                \\ \hline
ResBlock up $256$                           \\ \hline
ResBlock up $256$                           \\ \hline
ResBlock  up $256$                          \\ \hline
BN, ReLu, $3\times 3$ conv, Tanh            \\ \hhline{=}
\end{tabular}
\caption{Base/Generator of SNGAN ResNet.}
\label{tab:sngan_Resnet_base}
\end{minipage}
\end{table}

\paragraph{Training:} We train both base and energy by alternating $5$ gradient steps to learn the energy vs $1$ gradient step to learn the base. For the first two gradient iterations and after every $500$ gradient iterations on base, we train the energy for $100$ gradient steps instead of $5$.   We then train the model up to $150000$ gradient iterations on the base using a batch-size of $128$ and Adam optimizer \cite{Kingma:2014} with initial learning rate of $10^{-4}$ and parameters $(0.5,.999)$ for both energy and base. 

\paragraph{Scheduler:} We decrease the learning rate using a scheduler that monitors the FID score in a similar way as in \cite{Binkowski:2018,Arbel:2018}. More precisely,  every $2000$ gradient iterations on the base, we evaluate the FID score on the training set using $50000$ generated samples from the base and check if the current score is larger than the score $20000$ iterations before. The learning rate is decreased by a factor of $0.8$ if the FID score fails to decrease for $3$ consecutive times.

\paragraph{Sampling:} 
For (DOT) \cite{Tanaka:2019}, we use the following objective:
\begin{align}\label{eq:dot}
	z\mapsto \Vert  z- z_y  + \epsilon  \Vert + \frac{1}{k_{eff}} E\circ \B(z)
\end{align}
where $z_y$ is sampled from a standard Gaussian, $\epsilon$ is a perturbation meant to stabilize sampling and $k_{eff}$ is the estimated Lipschitz constant of $E\circ B$.  Note that \cref{eq:dot} uses a flipped sign for the $E\circ B $ compared to \cite{Tanaka:2019}. This is because $E$ plays the role of $-D$ where $D$ is the discriminator in \cite{Tanaka:2019}. Introducing the minus sign in \cref{eq:dot} leads to a degradation in performance.
We perform $1000$ gradient iterations with a step-size of $0.0001$ which is also decreased by a factor of $10$ every $200$ iterations as done for the proposed method. As suggested by the authors of  \cite{Tanaka:2019} we perform the following projection for the gradient before applying it:
\begin{align}
	g \leftarrow g- \frac{(g^{\top}z)}{\sqrt{q}}z.
\end{align}
We set the perturbation $\epsilon$ to $0.001$ and $k_{eff}$ to $1$ which was also shown in \cite{Tanaka:2019} to perform well. In fact, we found that estimating the Lipschitz constant  by taking the maximum value of $ \Vert \nabla E\circ \B(z) \Vert $ over $1000$ latent samples according to $\eta$  lead to higher values for $k_{eff}$: ( Cifar10: $9.4$, CelebA : $7.2$, ImageNet: $4.9$, Lsun: $3.8$). However, those higher values did not perform as well as setting $k_{eff}=1$.

For (IHM) \cite{Turner:2019a} we simply run the MCMC chain for $1000$ iterations.

\subsection{Density estimation}\label{sec:density_estimation}

\paragraph{Pre-processing} We use code and pre-processing steps from \cite{Wenliang:2018} which we describe here for completeness. 
For RedWine and WhiteWine, we added uniform noise with support equal to the median distances between two adjacent values. That is to avoid instabilities due to the quantization of the datasets.  For Hepmass and MiniBoone, we removed ill-conditioned dimensions as also done in \cite{Papamakarios:2017}. We split all datasets, except HepMass into three splits. The test split consists of $10\%$ of the total data. For the validation set, we use $10\%$ of the remaining data with an upper limit of $1000$ to reduce the cost of validation at each iteration. For HepMass, we used the sample splitting as done in \cite{Papamakarios:2017}. Finally, the data is whitened before fitting and the whitening matrix was computed on at most $10000$ data points.  
\paragraph{Regularization:} We set the regularization parameter to $0.1$ and use a combination of $L_2$ norm and a variant of the gradient penalty \cite{Gulrajani:2017}:
\begin{align}\label{eq:reg_penalty}
	I(\psi)^{2} = \frac{1}{d_{\psi}}\Vert \psi \Vert^2 + \mathbb{E}\left[\Vert \nabla_x f_{\psi}(\widetilde{X})\Vert^2\right]
\end{align}
\paragraph{Network Architecture.} For both base and energy, we used an NVP \cite{Dinh:2016} with 5 NVP layers each consisting of a shifting and scaling layer with two hidden layers of $100$ neurons. We do not use Batch-normalization.
\paragraph{Training:} In all cases we use Adam optimizer with learning rate of $0.001$ and momentum parameters $(0.5,0.9)$.
For both KALE-DV and KALE-F, we used a batch-size of $100$ data samples vs $2000$ generated samples from the base in order to reduce the variance of the estimation of the energy. 
 We alternate $50$ gradient steps on the energy vs $1$ step on the base and further perform $50$ additional steps on the energy for the first two gradient iterations and after every $500$ gradient iterations on base.
For Contrastive divergence, each training step is performed by first producing $100$ samples from the model using $100$ Langevin iterations with a step-size of $10^{-2}$ and starting from a batch of $100$ data-samples. The resulting samples are then used to estimate the gradient of the of the loss. 

For (CD), we used 100 Langevin iterations for each learning step to sample from the EBM. This translates into an improved performance at the expense of increased computational cost compared to the other methods. 
All methods are trained for 2000 epochs with batch-size of 100 (1000 on Hepmass and Miniboone datasets) and fixed learning rate $0.001$, which was sufficient for convergence.

\subsection{Illustrative example in \cref{fig:image_samples} }\label{sec:illustrative_example}

We consider parametric functions $G_{\theta}^{(1)}$ and $G_{\theta}^{(2)}$ from $\mathbb{R}$ to $\mathbb{R}$ of the form:
\begin{align}
	G_{\theta}^{(1)}(x) = sin(8\pi Wx)/(1+4\pi Bx), \qquad G_{\theta}^{(2)}(x)= 4\pi W'x+b
\end{align}
with $\theta = (W,B,W',b)$. we also call $\theta^{\star} = (1,1,1,0)$.
In addition, we consider a sigmoid like function $h$ from $[0,1]$ to $[0,1]$ of the form:
\begin{align}
	\widetilde{z} = tan( \pi (z -\frac{1}{2}  ),\qquad
	h(z) = \frac{1}{2} \left( z + \frac{1}{1+\exp(-9\widetilde{z})}  \right).
\end{align}

\paragraph{Data generation}:
To generate a data point $X=(X_1,X_2)$, we consider the following simple generative model:
	\begin{itemize}
		\item Sample a uniform r.v. $Z$ from $[0,1]$.
		\item Apply the distortion function $h$ to get a latent sample $Y= h(Z)$.
		\item Generate point $X$ using $X_1 =G_{\theta^{\star}}^{(1)}(Y) $ and $X_2 =G_{\theta^{\star}}^{(2)}(Y) $.
	\end{itemize} 
Hence, the data are supported on the $1$-d line defined by the equation $X_2 = G_{\theta^{\star}}^{(2)}(X_1)$.

\paragraph{GAN}
 For the generator we sample $Z$ uniformly from $[0,1]$ then generate  a sample $(X_1,X_2) = ( G_{\theta}^{(1)}(Z),     G_{\theta}^{(2)}(Z) )$. The goal is to learn $\theta$.

For the discriminator, we used an MLP with $6$ layers and $10$ hidden units.
\paragraph{GEBM}
For the base we use the same generator as in the GAN model. For the  energy we use the same MLP as discriminator of the GAN model.
\paragraph{EBM}
To ensure tractability of the likelihood, we use the following model:
\begin{align}
	X_2|X_1\sim  \mathcal{N}(G_{\theta}^{(2)}(X_1),\sigma_0)\\
	X_1 \sim MoG((\mu_1,\sigma_1),(\mu_2,\sigma_2))
\end{align}
$MoG((\mu_1,\sigma_1),(\mu_2,\sigma_2))$ refers to a Mixture of two gaussians with mean and variances $\mu_i$ and $\sigma_i$. We learn each of the parameters $(\theta, \sigma_0,\mu_1,\sigma_1,\mu_2,\sigma_2)$ by maximizing the likelihood.

Both GAN and GEBM have the capacity to recover the the exact support by finding the optimal parameter $\theta^{\star}$. 
For the EBM, when $\theta=\theta^{\star}$, the mean $G_{\theta^{\star}}(X_1)$ of the conditional gaussian $X_2|X_1$  draws a line which matches the data support exactly, i.e.: $X_2 = G_{\theta^{\star}}^{(2)}(X_1)$.

\subsection{Base/Generator complexity}\label{sec:gen_complexity}

To investigate the effect of model complexity of the performance gap between GANs and GEBMs, we performed additional experiments using the setting of \cref{fig:image_samples}. 
Now we allow the generator/base network to better model the hidden transformation $h$ that produces the first coordinate $X_1$ given the latent noise. We choose $G_{\theta}^{(1)}$ to be either a one hidden layer network or an MLP with 3 hidden layers both with leaky ReLU activation, 
instead of a simple linear transform as previously done in \cref{sec:illustrative_example}.
The network has universal approximation capability that depends on the number of units. This provides a direct control over the complexity of the generator/base. We then varied the number of hidden units from $1$ to $5*10^4$ units for the one hidden layer network and from $10$ to $5*10^3$ units per layer for the MLP. Note that the MLP with $5*10^3$ units per layer stores a matrix of size $2.5*10^7$ and thus contains 2 orders of magnitudes more parameters than the widest shallow network with $5*10^4$ units.
 We then compared the performance of the GAN and GEBM using the Sinkhorn divergence \cite{Feydy:2019} between each model and the data distribution. In all  experiments, we used the same discriminator/energy network described in \cref{sec:illustrative_example}. Results  are provided in \cref{fig:gen_complexity}.
  \begin{figure}[h]
\centering
  	\includegraphics[width=0.49\linewidth]{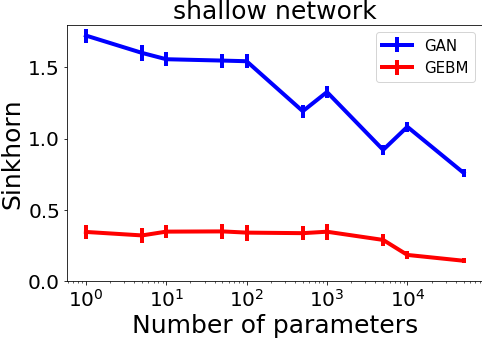}
  	\includegraphics[width=0.49\linewidth]{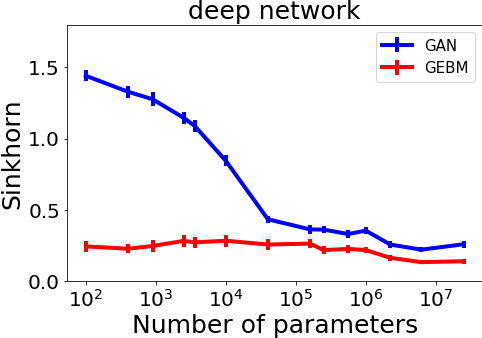}
\caption{ Sinkhorn divergence between data generating distribution and the trained model (either GAN or GEBM) vs number of hidden units in $G_{\theta}^{(1)}$. The left figure represents the one hidden layer network and right one is for the MLP. Each data point represents the average over 20 independent runs for each choice of number of hidden units.} 
\label{fig:gen_complexity}   
 
\end{figure}

\paragraph{Estimating the Sinkhorn divergence.}
The Sinkhorn is computed using $6000$ samples from the data and the model, with squared euclidean distance as a ground cost and using a regularization $\epsilon=1e-3$. We then repeat the procedure $5$ times and average the result to get the final estimate of the Sinkhorn distance for a given run.
\paragraph{Training.}
Each run optimizes the parameters of the model using Adam optimizer $(\beta_1 = .5, \beta_2=.99)$, learning rate $lr= 1e-4$ for the energy/discriminator and $lr=1e-5$ for the base/generator and weight decay of $1e-2$ for the base/generator. Training is performed using KALE for $2000$ epochs using a batch size of $5000$ and $10$   gradient iterations for the energy/discriminator per base/generator iteration. We use the gradient penalty for the energy/discriminator with a penalty parameter of $0.01$. We then perform early stopping and retain the best performing model on a validation set.
 
\paragraph{Observations}
We make the following observations from  \cref{fig:gen_complexity}: the GAN generator indeed improves when we increase the number of hidden units. The performance of the GEBM remains stable as the number of hidden units increases. The performance of the GEBM is always better than the GAN, although we can see the GAN converging towards the GEBM.   GEBM with a simpler base already outperforms the GAN with more powerful generators.  The gap between the GEBM and the GAN reduces as the GAN becomes more expressive. Using a deeper network further reduces the gap compared to a shallow network.

These observations support the prior discussion that the energy witnesses a remaining difference between the generator and training samples, as long as it is not flat. This information allows the GEBM to perform better than a GAN that ignores it. The performance gap between the GEBM and the GAN reduces as the generator becomes more powerful and forces the energy to be more flat. This is consistent with the result in \cref{prop:kl_improvement}.
\end{document}